\documentclass[10pt,journal,compsoc]{IEEEtran}

\usepackage[utf8]{inputenc}
\usepackage[T1]{fontenc}
\usepackage{hyperref}
\usepackage{url}
\usepackage{booktabs}
\usepackage{amsfonts}
\usepackage{amsmath}
\usepackage{amssymb}
\usepackage{amsthm}
\usepackage{nicefrac}
\usepackage{microtype}
\usepackage{graphicx}
\usepackage{xcolor}
\usepackage{subcaption}
\usepackage{multirow}
\usepackage{mathtools}
\usepackage{cite}
\usepackage{enumitem}
\usepackage{float}
\usepackage{rotating}

\newcommand{\X}{\mathcal{X}}
\newcommand{\Y}{\mathcal{Y}}

\newcommand{\phat}{\hat{p}}
\newcommand{\pihat}{\hat{\pi}}
\newcommand{\tpihat}{\tilde{\pi}}
\newcommand{\bE}{\mathbb{E}}
\newcommand{\bP}{\mathbb{P}}
\newcommand{\bR}{\mathbb{R}}
\newcommand{\ECE}{\mathrm{ECE}}
\newcommand{\KL}{\mathrm{KL}}
\newcommand{\ECEvoted}{\ECE_{\text{voted}}}
\newcommand{\ECEtrue}{\ECE_{\text{true}}}

\DeclareMathOperator*{\argmax}{arg\,max}

\newtheorem{definition}{Definition}
\newtheorem{proposition}{Proposition}

\newtheorem{remark}{Remark}

\definecolor{darkred}{RGB}{180,0,0}
\definecolor{darkgreen}{RGB}{0,120,0}
\definecolor{darkblue}{RGB}{0,0,160}

\title{Confidence Calibration under Ambiguous Ground Truth}

\author{Linwei~Tao,
        Haoyang~Luo,
        Minjing~Dong,
        and~Chang~Xu,~\IEEEmembership{Senior~Member,~IEEE}%
  \IEEEcompsocitemizethanks{
    \IEEEcompsocthanksitem Manuscript received XX XXX, XXXX.
    \protect\IEEEcompsocthanksitem L.~Tao and C.~Xu are with the School of Computer Science,
    University of Sydney, Sydney, NSW 2006, Australia.
    E-mail: linwei.tao@sydney.edu.au; c.xu@sydney.edu.au.
    \protect\IEEEcompsocthanksitem H.~Luo and M.~Dong are with the Department of Computer Science,
    City University of Hong Kong, Hong Kong SAR, China.
    E-mail: luohaoyang.lalutte@gmail.com; minjdong@cityu.edu.hk.
    \protect\IEEEcompsocthanksitem Corresponding author: Chang Xu (e-mail: c.xu@sydney.edu.au).
  }%
}

\begin{document}

\maketitle

% ─────────────────────────────────────────────────────────────────────────────
\begin{abstract}
% ─────────────────────────────────────────────────────────────────────────────
Confidence calibration assumes a unique ground-truth label per input, yet this assumption fails wherever annotators genuinely disagree.
Post-hoc calibrators fitted on majority-voted labels, the standard single-label targets used in practice, can appear well-calibrated under conventional evaluation yet remain substantially miscalibrated against the underlying annotator distribution.
We show that this failure is structural: under simplifying assumptions, Temperature Scaling is biased toward temperatures that underestimate annotator uncertainty, with true-label miscalibration increasing monotonically with annotation entropy.
To address this, we develop a family of ambiguity-aware post-hoc calibrators that optimise proper scoring rules against the full label distribution and require no model retraining.
Our methods span progressively weaker annotation requirements: Dirichlet-Soft leverages the full annotator distribution and achieves the best overall calibration quality across settings; Monte Carlo Temperature Scaling with a single annotation per example (MCTS $S{=}1$) matches full-distribution calibration across all benchmarks, demonstrating that pre-aggregated label distributions are unnecessary; and Label-Smooth Temperature Scaling (LS-TS) operates with voted labels alone by constructing data-driven pseudo-soft targets from the model's own confidence.
Experiments on four benchmarks with real multi-annotator distributions (CIFAR-10H, ChaosNLI) and clinically-informed synthetic annotations (ISIC~2019, DermaMNIST) show that Dirichlet-Soft reduces true-label ECE by 55--87\% relative to Temperature Scaling, while LS-TS reduces ECE by 9--77\% without any annotator data.
\end{abstract}

\begin{IEEEkeywords}
Confidence calibration, ambiguous ground truth, annotator disagreement, temperature scaling, soft labels, proper scoring rules, label distribution learning.
\end{IEEEkeywords}

% ─────────────────────────────────────────────────────────────────────────────
\section{Introduction}
\label{sec:intro}
% ─────────────────────────────────────────────────────────────────────────────

\IEEEPARstart{D}{eep} neural networks are increasingly deployed as decision-support tools in high-stakes domains.  In medical imaging, classifiers assist dermatologists in diagnosing skin lesions and pathologists in identifying malignant tissue~\cite{haenssle2018man,kompa2021second}; in natural language processing, models flag toxic content, assess clinical notes, and support legal document review; in autonomous driving, perception systems make real-time safety decisions under uncertainty~\cite{feng2022review}.  In each of these settings the model outputs not merely a predicted class but an implicit \emph{reliability claim}: the confidence score attached to a prediction is understood by downstream systems and human operators as a direct probability that the prediction is correct.  A radiologist who sees 90\% confidence on a malignant-lesion classification may reduce the level of follow-up scrutiny; a content moderator acting on a 99\% toxicity score may take irreversible action; an autonomous vehicle assigns braking decisions based on its detection confidence.  The faithfulness of stated confidence to empirical correctness, i.e.\ calibration, is therefore as important to deployment safety as accuracy itself~\cite{ovadia2019trust,abdar2021review}.

This property is formalised as follows.  A well-calibrated classifier $f:\X\to\Delta^{K}$ produces probability vectors $\phat(x)=f(x)$ such that
\begin{equation}
  \bP\!\bigl(\hat{Y}=Y \mid \phat_{\hat{Y}}(X)=p\bigr) = p, \quad \forall\, p\in[0,1],
  \label{eq:hard-cal}
\end{equation}
where $\hat{Y}=\argmax_k\phat_k(X)$ and $\Delta^K$ denotes the $K$-dimensional probability simplex.  Guo et al.~\cite{guo2017calibration} showed that modern high-accuracy networks are systematically overconfident, and proposed Temperature Scaling (TS), which divides logits by a scalar $T>1$ before softmax, as an effective post-hoc remedy.  Subsequent work extended post-hoc calibration to Platt scaling~\cite{platt1999probabilistic}, isotonic regression~\cite{zadrozny2002transforming}, and Dirichlet calibration~\cite{kull2019beyond}, and studied its behaviour across architectures~\cite{minderer2021revisiting} and tasks~\cite{bai2021don}.  These methods share a common design principle: they fit a correction to the model's logits using a held-out calibration set annotated with \emph{one-hot labels}.

The calibration condition~\eqref{eq:hard-cal} implicitly assumes that every input $x$ has a unique ground-truth label $Y$.  This assumption is violated whenever the task is inherently ambiguous or subjective.  A skin lesion may be legitimately classified as malignant or benign by different dermatologists, depending on subtle visual features that experts genuinely disagree about~\cite{haenssle2018man}; a natural-language premise may or may not entail a hypothesis depending on pragmatic interpretation~\cite{nie2020what}; a low-resolution object may be categorically ambiguous even to careful human observers~\cite{peterson2019human}.  Such disagreement is not annotation \emph{noise} to be corrected but irreducible \emph{aleatoric} uncertainty inherent to the task.  The correct calibration target is therefore not a single label $Y$ but a \emph{distribution over labels} $\pi(\cdot\mid x)\in\Delta^K$, which we call the \emph{annotator distribution}, representing the probability that a randomly selected expert would assign each label to input $x$.  Figure~\ref{fig:intro-ambiguity} illustrates three concrete instances of this phenomenon across image classification, natural language inference, and medical imaging.

\begin{figure*}[t]
  \centering
  \includegraphics[width=\linewidth]{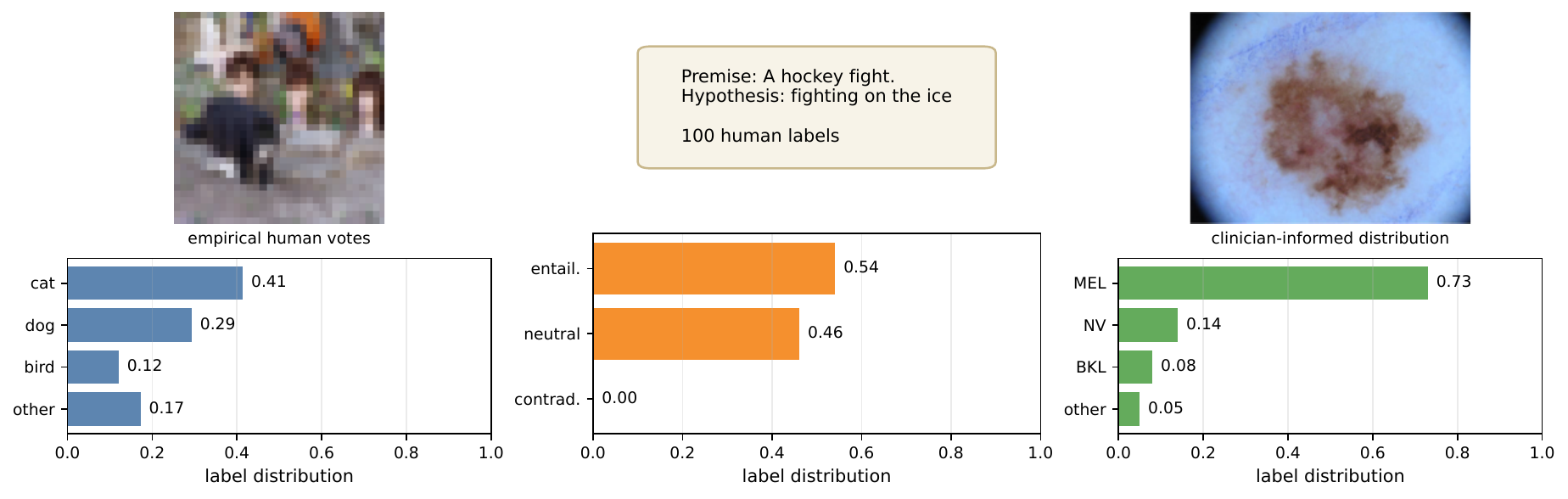}
  \caption{
    \textbf{Left:} a CIFAR-10H image with dispersed human votes (cat/dog/bird), illustrating perceptual ambiguity in low-resolution vision.
    \textbf{Middle:} a ChaosNLI premise--hypothesis pair with split entailment/neutral judgments, illustrating semantic ambiguity.
    \textbf{Right:} an ISIC~2019 melanoma image paired with the clinician-informed label distribution used in our medical experiments, showing clinically plausible MEL/NV confusion.
    CIFAR-10H and ChaosNLI distributions are empirical human label distributions; the ISIC panel uses the dermatologist confusion model described in Appendix~\ref{app:isic-confusion}.
  }
  \label{fig:intro-ambiguity}
\end{figure*}

Standard practice aggregates multiple annotations by majority vote to obtain a single \emph{voted label} $y^*$, the one-hot label used in virtually all existing calibration work, and calibrates against it.  This paper demonstrates that this is not merely a rough approximation but a \emph{systematic failure}: calibrating to the voted label optimises the wrong objective, and the resulting miscalibration worsens as genuine annotator disagreement increases.  A particularly revealing observation is that the most expressive voted-label calibrator (Dirichlet calibration with a full $K\!\times\!K$ affine transform) consistently \emph{underperforms} simple Temperature Scaling in true-label ECE; additional flexibility with the wrong target exacerbates rather than alleviates the problem.  In a controlled experiment (Section~\ref{sec:toy}), every standard voted-label calibrator fails to close the gap; the root cause is the calibration \emph{target}, not the method's capacity.  This observation parallels the finding of Stutz et al.~\cite{stutz2023conformal} for conformal prediction, where sets calibrated on voted labels systematically undercover the true annotator distribution.

\begin{enumerate}[leftmargin=*]
  \item \textbf{Problem and theory} (Sections~\ref{sec:problem}--\ref{sec:theory}): we formalise true-label calibration under ambiguous ground truth and prove that Temperature Scaling is biased toward unduly low temperatures, with the miscalibration gap growing monotonically with annotation entropy.
  \item \textbf{Methods} (Section~\ref{sec:methods}): we propose a family of ambiguity-aware post-hoc calibrators that use the annotator distribution as the calibration target and require no model retraining, including: \textbf{Dirichlet-Soft} (full distribution available); \textbf{MCTS} (individual annotations available, where a single annotation per example, $S{=}1$, already matches full-distribution calibration); and \textbf{LS-TS}, an annotation-free variant that uses only voted labels yet substantially closes the calibration gap.
  \item \textbf{Evidence} (Sections~\ref{sec:toy} and~\ref{sec:experiments}): experiments across four benchmarks spanning vision (CIFAR-10H, ISIC~2019, DermaMNIST) and language (ChaosNLI) show that Dirichlet-Soft reduces $\ECEtrue$ by 55--87\% and LS-TS by 9--77\% relative to TS, while standard calibrators can actively worsen true-label ECE.
\end{enumerate}

% ─────────────────────────────────────────────────────────────────────────────
\section{Related Work}
\label{sec:related}
% ─────────────────────────────────────────────────────────────────────────────

\subsection{Confidence Calibration}
Guo et al.~\cite{guo2017calibration} established that modern high-accuracy networks are systematically overconfident and that Temperature Scaling (TS) is a simple and effective post-hoc remedy.  The multiclass calibration literature has since proposed isotonic regression~\cite{zadrozny2002transforming}, Platt scaling~\cite{platt1999probabilistic}, Dirichlet calibration~\cite{kull2019beyond}, and adaptive per-instance temperature variants~\cite{joy2023sample}.  Minderer et al.~\cite{minderer2021revisiting} showed that Vision Transformers are inherently better calibrated than CNNs, and Bai et al.~\cite{bai2021don} revisit when TS is and is not sufficient.  Kumar et al.~\cite{kumar2019verified} propose verified uncertainty calibration through non-parametric testing.  All of these methods share a common assumption: the calibration set contains one-hot labels, and calibration means aligning confidence with correctness relative to those labels.  We show that this assumption, not the method's architectural capacity, is the limiting factor when labels are genuinely distributional.

\subsection{Soft Labels, Label Smoothing, and Knowledge Distillation}
Training-time approaches have explored the benefits of soft targets.  Label smoothing~\cite{szegedy2016rethinking,muller2019does} adds a fixed uniform component to one-hot targets and has been observed to improve both accuracy and calibration; however, it applies a dataset-global constant regardless of per-instance annotator disagreement.  Knowledge distillation~\cite{hinton2015distilling} uses teacher model outputs as soft training targets, improving student generalisation and calibration.  Thulasidasan et al.~\cite{thulasidasan2019mixup} show that Mixup training substantially improves calibration by encouraging confident predictions only when training examples are well-separated.  Collins et al.~\cite{collins2022eliciting} collect CIFAR-10S, a companion to CIFAR-10H with individually elicited soft labels from every annotator, and demonstrate that training with per-annotator soft labels improves calibration over aggregated hard labels.  Our work is the post-hoc calibration counterpart: we do not retrain the model, but instead re-target the calibration objective to the annotator distribution using only cached logits.

\subsection{Annotation Disagreement and Multi-Annotator Learning}
Dense annotation datasets~\cite{peterson2019human,nie2020what} and multi-annotator learning methods~\cite{uma2021learning,rodrigues2018deep} have documented the ubiquity of label disagreement across domains.  Aroyo and Welty~\cite{aroyo2015truth} argue that disagreement is not noise but signal, and that discarding it by majority aggregation loses information that is critical for understanding task difficulty.  Gordon et al.~\cite{gordon2022jury} and Plank~\cite{plank2022problem} highlight systematic problems with annotation aggregation, showing that models trained on majority labels inherit structural biases against minority annotator perspectives.  Baan et al.~\cite{baan2022stop} show that ECE is theoretically ill-defined when annotators genuinely disagree, because no unique ``correct'' label exists to serve as the reference; they propose instance-level uncertainty measures.  We complement this critique by retaining $\ECEtrue$ as a practical metric: it quantifies the overconfidence a user experiences when drawing a single label $\tilde{y}\sim\pi(\cdot|x)$ at deployment time, mirroring realistic usage, while supplementing it with Brier score and NLL, which are strictly proper scoring rules that directly penalise divergence from the full annotator distribution $\pihat$.  Khurana et al.~\cite{crowd_calibrator2024} study how annotator disagreement can inform selective abstention in NLP; unlike their approach, our framework targets all predictions and is applicable across vision and language without task-specific thresholding.

\subsection{Proper Scoring Rules and Distributional Evaluation}
Gneiting and Raftery~\cite{gneiting2007strictly} provide the foundational theory of proper scoring rules: a loss is \emph{strictly proper} if and only if it is uniquely minimised when the predicted distribution equals the true distribution.  Brier score and NLL are canonical instances.  This perspective motivates our choice of calibration objective: minimising cross-entropy against $\pihat(x)$ is strictly proper over the predicted distribution $\hat{p}(x)$, while minimising against a one-hot label is only proper with respect to a degenerate Dirac distribution and misidentifies the optimal calibrated output whenever $\pihat$ is non-degenerate.  The proper scoring framework also clarifies why Dirichlet calibration with voted targets ($\mathrm{Dirichlet\text{-}Hard}$) is not merely suboptimal but is consistent for the \emph{wrong} estimand, explaining the empirical finding that it underperforms TS.

\subsection{Conformal Prediction under Ambiguous Ground Truth}
Stutz et al.~\cite{stutz2023conformal} demonstrate the set-prediction analogue of our finding: conformal prediction sets calibrated on voted labels systematically undercover the true annotator distribution, with coverage gaps of up to 10\% on dermatology data.  Their Monte Carlo conformal approach samples from the annotator distribution to compute coverage, closely paralleling our Monte Carlo ECE evaluation ($S=100$ draws per example).  Our work addresses point-estimate calibration rather than set coverage: we produce well-calibrated scalar confidence scores, which conformal prediction does not.  Calibration under covariate shift~\cite{park2020calibrated,wang2020transferable} is orthogonal to our label-ambiguity setting and assumes a unique ground truth throughout.

\subsection{Uncertainty Quantification in High-Stakes Applications}
The deployment literature motivates the practical urgency of this problem.  Kompa et al.~\cite{kompa2021second} argue that medical image classifiers are consistently overconfident, even after calibration, and that this overconfidence degrades clinical decision quality.  Ovadia et al.~\cite{ovadia2019trust} demonstrate that popular uncertainty estimation methods (dropout, deep ensembles, variational inference) exhibit significantly degraded calibration under distribution shift, highlighting that single-point calibration benchmarks may be overly optimistic.  Feng et al.~\cite{feng2022review} survey uncertainty quantification in autonomous driving, where overconfident detections directly contribute to unsafe manoeuvres.  These findings establish the applied context for our work: in each of these domains, label ambiguity is simultaneously a property of the task and an unaddressed source of calibration error.  The present paper provides the first systematic post-hoc calibration framework that directly targets this source.

% ─────────────────────────────────────────────────────────────────────────────
\section{Problem Formulation}
\label{sec:problem}
% ─────────────────────────────────────────────────────────────────────────────

\subsection{Setup and Notation}

Let $\Y=\{1,\ldots,K\}$ be the label space.  A classifier $f:\X\to\Delta^K$ outputs $\phat(x)=\operatorname{softmax}(z(x))$ from pre-softmax logits $z(x)\in\bR^K$.  The \textbf{annotator distribution} $\pi(\cdot\mid x)\in\Delta^K$ denotes the underlying ambiguous label distribution for input $x$ and is estimated from $m$ annotations as $\pihat_k(x)=\frac{1}{m}\sum_j\mathbf{1}[a_j=k]$.  The \textbf{voted label} is $y^*=\argmax_k\pihat_k(x)$.  Standard calibration uses one-hot voted labels.  For example, Temperature Scaling \cite{guo2017calibration} minimises
\begin{equation}
  \mathcal{L}_{\text{TS}}(T) = -\frac{1}{n}\sum_{i=1}^n \log \operatorname{softmax}(z_i/T)_{y_i^*}.
  \label{eq:ts-loss}
\end{equation}

\subsection{True-Label Calibration}
\label{sec:true-cal-def}

Following Stutz et al.~\cite{stutz2023conformal}, we distinguish the voted label $y^*$ from the \textbf{true label} $\tilde{y}\sim\pi(\cdot\mid x)$, drawn from the underlying ambiguous label distribution.

\begin{definition}[True-Label Calibration]
\label{def:true-cal}
$f$ is \textbf{true-label calibrated} if for all $p\in[0,1]$:
$\bP(\tilde{Y}=\hat{c}(X)\mid\phat_{\hat{c}}(X)=p)=p$, where $\hat{c}(x)=\argmax_k\phat_k(x)$ and $\tilde{Y}\sim\pi(\cdot\mid X)$.
\end{definition}

The \textbf{voted-label ECE} $\ECEvoted$ is the standard ECE against $y^*$.  The \textbf{true-label ECE} $\ECEtrue$ averages ECE over 100 draws $\tilde{y}_i\sim\pi(\cdot\mid x_i)$ ($B=15$ equal-width bins).  We also define the \textbf{pointwise true-label calibration error} of a prediction at input $x$ as $|\phat_{\hat{c}}(x) - \pi_{\hat{c}}(x)|$, i.e.\ the absolute gap between the model's top-class confidence and the annotator-distribution probability of that class.  This per-example quantity is used in the theoretical analysis (Section~\ref{sec:theory}).

% ─────────────────────────────────────────────────────────────────────────────
\section{Motivating Example}
\label{sec:toy}
% ─────────────────────────────────────────────────────────────────────────────

\noindent\textbf{Setup.}
We construct a 3-class 2D Gaussian dataset with three clusters:
$x\sim\mathcal{N}((-3.2,1.1), \operatorname{diag}(0.60,0.45))$ for class~0 with $\pi=[1,0,0]$;
$x\sim\mathcal{N}((0,0), \operatorname{diag}(1.15,0.75))$ for the ambiguous cluster with $\pi=[0,0.70,0.30]$;
and $x\sim\mathcal{N}((3.2,-1.1), \operatorname{diag}(0.60,0.45))$ for class~2 with $\pi=[0,0,1]$.
The voted label for the middle cluster is always class~1 because $0.70>0.30$.
A 2-layer MLP (64 hidden units per layer) is trained on voted labels, and all calibrators are fitted on a held-out calibration split using the same voted labels.

\begin{figure*}[t]
  \centering
  \includegraphics[width=\linewidth]{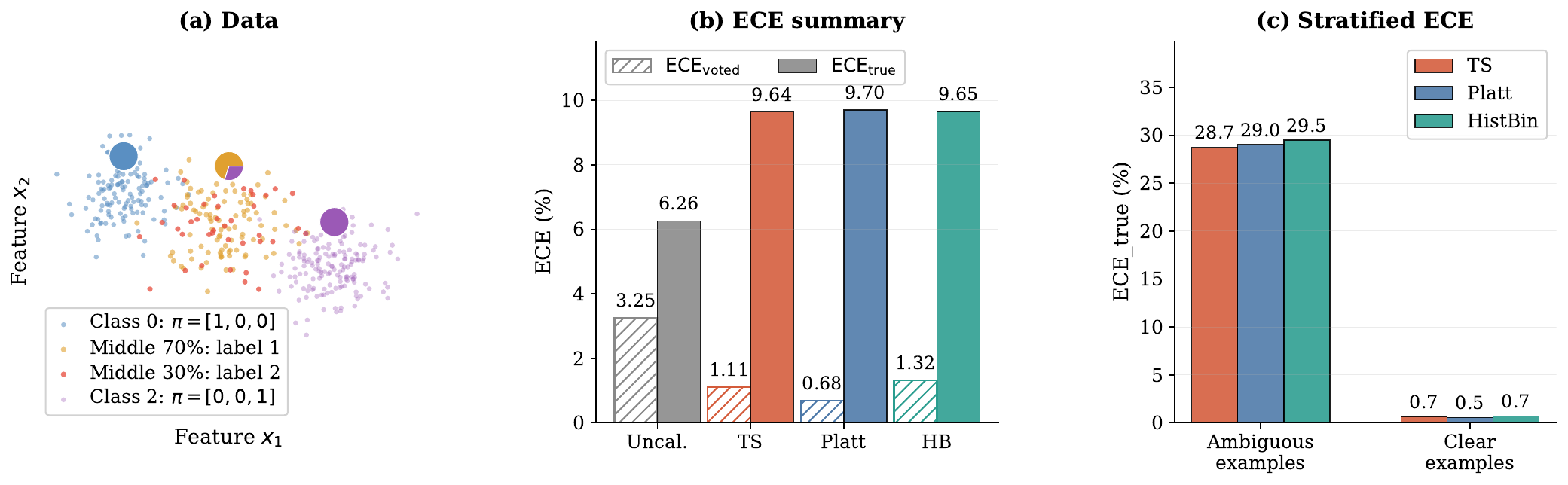}
  \caption{
    \textbf{Motivating example: all standard calibration methods fail.}
    \textbf{(a)} Toy dataset generation: only the middle Gaussian cluster is ambiguous, with $\pi=[0,0.70,0.30]$; orange points are drawn as label~1 (70\%) and red points as label~2 (30\%), while all receive the same voted label~1.
    \textbf{(b)} Summary of the toy results: all three voted-label calibrators (TS, Platt, Histogram Binning) lower $\ECEvoted$ but \emph{increase} $\ECEtrue$, so voted-label evaluation masks the failure.
    \textbf{(c)} Stratified $\ECEtrue$ for TS, Platt, and Histogram Binning: ambiguous examples are those from the middle Gaussian cluster (where annotators disagree); clear examples are those from the two unambiguous clusters (class 0 and class 2).  The residual error is concentrated in ambiguous examples for all three methods.
  }
  \label{fig:toy}
\end{figure*}

\noindent\textbf{Results.}
Figure~\ref{fig:toy} shows that this mismatch is a structural consequence of the voted-label target.  TS reduces $\ECEvoted$ from $3.25\%$ to $1.11\%$, but \emph{increases} $\ECEtrue$ from $6.26\%$ to $9.64\%$.  Platt scaling behaves similarly ($\ECEvoted=0.68\%$, $\ECEtrue=9.70\%$), and Histogram Binning likewise ($\ECEvoted=1.32\%$, $\ECEtrue=9.65\%$).  Figure~\ref{fig:toy}\textbf{(c)} further shows that this residual error is concentrated in the ambiguous cluster for all three methods.  TS sets $T=0.62<1$, boosting confidence toward the voted target and thereby pushing the model \emph{further} from the true ambiguous label distribution.  The failure is therefore not specific to TS: standard post-hoc methods are useful for $\ECEvoted$, but the voted-label target makes them ineffective for $\ECEtrue$.

% ─────────────────────────────────────────────────────────────────────────────
\section{Theory: Why Standard Calibration Fails}
\label{sec:theory}
% ─────────────────────────────────────────────────────────────────────────────

We now formalise, under simplifying assumptions, two complementary aspects of the failure of voted-label calibration under ambiguity: the \emph{direction} of the TS bias (Proposition~\ref{prop:ts-bias}) and its \emph{scaling} with label ambiguity (Proposition~\ref{prop:entropy-gap}).  Both propositions characterise \emph{pointwise} miscalibration on individual examples; they describe when and why a voted-label calibrator misfits individual ambiguous inputs, rather than providing a theorem about population-level ECE.  Figure~\ref{fig:entropy-validation} provides empirical support.

\begin{proposition}[Direction of TS Bias]
\label{prop:ts-bias}
Consider a calibration set containing an ambiguous cluster in which the voted label $y^*$ equals the majority class for every example, but the true annotator probability of $y^*$ is $\pi_{y^*}(x)=q < 1$.  Under the assumption that the model is already reasonably accurate on this cluster ($\operatorname{softmax}(z_i)_{y^*}>q$ for all $i$ in the cluster), the voted-label loss exerts a downward pull on $T$ for these examples, whereas a soft-label loss with target $q$ exerts an upward pull.  Consequently, $T^*_{\text{TS}} < T^*_{\text{soft}}$, where $T^*_{\text{soft}}$ denotes the optimal temperature when minimising the cross-entropy against the annotator distribution $\pihat$ (Eq.~\ref{eq:slts-loss}): TS selects a lower temperature than ambiguity-aware calibration requires.  In a setting where the model is already overconfident ($\operatorname{softmax}(z_i)_{y^*} \gg q$), this manifests as $T^*_{\text{TS}}<1$; more generally, both optima may exceed 1, but the gap $T^*_{\text{soft}} - T^*_{\text{TS}} > 0$ persists, consistent with our experimental findings (e.g., $T_{\text{TS}}=2.03$ vs.\ $T_{\text{soft}}=3.18$ on CIFAR-10H ResNet-50).
\end{proposition}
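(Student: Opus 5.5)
The plan is to work with the derivative of each loss in $\beta = 1/T$, where the per-example cross-entropy is a convex function, and then translate back to $T$. Write $s_i(\beta)=\operatorname{softmax}(\beta z_i)$. For a target distribution $t_i\in\Delta^K$, the per-example loss is $\ell_i(\beta)=-\sum_k t_{ik}\log s_{ik}(\beta)$, and its derivative is
\begin{equation*}
  \ell_i'(\beta) = \bE_{s_i(\beta)}[z_i] - \langle t_i, z_i\rangle .
\end{equation*}
This is the standard log-partition identity. Its second derivative is $\operatorname{Var}_{s_i(\beta)}[z_i]\ge 0$, so both $\mathcal{L}_{\text{TS}}$ and the soft loss of Eq.~\eqref{eq:slts-loss} are convex in $\beta$. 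Each optimum is therefore characterised by a single first-order condition. For the voted target $t_i=e_{y^*}$ the gradient is $g^{\text{vote}}_i=\bE_{s_i}[z_i]-z_{i,y^*}$. For the soft target it is $g^{\text{soft}}_i=\bE_{s_i}[z_i]-\langle\pihat_i,z_i\rangle$.

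The first step is to compare the two gradients on the ambiguous cluster, pointwise at every $\beta$:
\begin{equation*}
  g^{\text{soft}}_i-g^{\text{vote}}_i=z_{i,y^*}-\langle\pihat_i,z_i\rangle\ge 0 .
\end{equation*}
This holds with strict inequality whenever $\pihat_i$ puts mass $1-q>0$ on classes whose logits are below $z_{i,y^*}$. That is the case here, since the model's top class is $y^*$, as implied by $\operatorname{softmax}(z_i)_{y^*}>q$. On unambiguous examples $\pihat_i=e_{y^*}$, so the two gradients coincide. Summing over examples gives $\mathcal{L}_{\text{soft}}'(\beta)>\mathcal{L}_{\text{TS}}'(\beta)$ for all $\beta$. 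Since both derivatives are nondecreasing, the root of the soft derivative lies strictly left of the root of the voted derivative, so $\beta^*_{\text{soft}}<\beta^*_{\text{TS}}$, i.e.\ $T^*_{\text{TS}}<T^*_{\text{soft}}$. This delivers the main consequence without needing the sign statements individually.

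The second step is to interpret the "pull" claims at the current model, $\beta=1$. For the voted loss, $g^{\text{vote}}_i(1)=\bE_{s_i}[z_i]-z_{i,y^*}<0$ because $z_{i,y^*}$ is the maximal logit. So descent increases $\beta$, which is a downward pull on $T$, and this holds regardless of $q$. For the soft loss I would reduce to the two-class case, with mass $q$ on $y^*$ and $1-q$ on a runner-up $j$, or more generally to the top-class margin. Then $g^{\text{soft}}_i(1)$ has the sign of $s_{i,y^*}-q$ up to a positive factor $(z_{i,y^*}-z_{ij})$. Under the hypothesis $s_{i,y^*}>q$ this is positive, so descent decreases $\beta$, an upward pull on $T$.

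The final step is the overconfident regime. If $s_{i,y^*}\gg q$ for every example, the voted gradient is negative at $\beta=1$ and hence at all $\beta\le 1$ by monotonicity. Convexity then forces $\beta^*_{\text{TS}}>1$, i.e.\ $T^*_{\text{TS}}<1$. When the cluster is mixed with miscalibrated clear examples, both optima can exceed $1$, but the ordering from the first step is unaffected.

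The main obstacle is making the soft-label sign argument rigorous for $K>2$. There, $\bE_{s}[z]-\langle\pihat,z\rangle$ need not have the sign of $s_{y^*}-q$ when the minority annotator mass sits on classes with logits different from the model's runner-up. I would first state it under the two-class or "minority mass on the runner-up" simplification, which is what the proposition's "simplifying assumptions" permit. I would note that the ordering result $T^*_{\text{TS}}<T^*_{\text{soft}}$ needs only the weaker fact that $y^*$ has the largest logit.
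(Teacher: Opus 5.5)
Your main argument is correct, and it takes a genuinely different and sharper route than the paper. The paper's proof (Appendix~\ref{app:proof-ts}) argues qualitatively, in terms of ``pulls''. The voted loss on the cluster tends to $0$ as $T\to0^+$, so it pushes $T$ down. The soft optimum is then asserted to satisfy $\operatorname{softmax}(z_i/T^*_{\text{soft}})_c\approx q$; together with $p_i>q$, this at best places $T^*_{\text{soft}}$ above $1$ on the cluster. The comparison with $T^*_{\text{TS}}$ is then read off informally from a balance against the unambiguous examples.

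Your identity $\ell_i(\beta)=-\beta\langle t_i,z_i\rangle+\log\sum_k e^{\beta z_{ik}}$ gives an exact statement instead: $\mathcal{L}_{\text{soft}}(\beta)-\mathcal{L}_{\text{TS}}(\beta)=c\,\beta$, where $c=\frac1n\sum_{i\in\text{amb}}\sum_k\pihat_{ik}(z_{i,y^*}-z_{ik})>0$. So the soft objective is the TS objective plus a strictly increasing linear term, and every minimiser moves strictly left in $\beta$. This proves the headline inequality rigorously. Your formula $g^{\text{vote}}_i=\bE_{s_i}[z_i]-z_{i,y^*}<0$ also makes rigorous the monotonicity that the paper infers from the limit alone. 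It further shows that the ordering needs only $c>0$ (strict top logit $y^*$ on the cluster, other examples unambiguous), not $\operatorname{softmax}(z_i)_{y^*}>q$.

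Two small repairs to this part:
\begin{itemize}
\item Take the strict argmax as an assumption rather than deriving it. It follows from $\operatorname{softmax}(z_i)_{y^*}>q$ only when $q\ge 1/2$.
\item Allow extended-real optima. If no calibration example is misclassified, then $\mathcal{L}_{\text{TS}}'<0$ for every $\beta$, and the TS infimum sits at $T\to0$.
\end{itemize}

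Where your plan does need correcting is the fallback for the soft-label ``upward pull''. Restricting the minority mass to the runner-up does not fix the sign for $K>2$. Write $d_k=z_{y^*}-z_k$ and $\pihat=qe_{y^*}+(1-q)e_j$. Then $g^{\text{soft}}(1)=(s_{y^*}-q)d_j-\sum_{k\ne y^*,j}s_k(d_k-d_j)$, and the subtracted sum is nonnegative precisely because $j$ is the runner-up.

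Concretely, take $z=(0,-0.1,-3)$ and $\pihat=(0.51,0.49,0)$. This gives $s\approx(0.512,0.463,0.025)$, so $s_{y^*}>q$, yet $g^{\text{soft}}(1)\approx-0.074<0$. The soft loss also pulls $T$ down here, because it wants to remove the model's mass on class~$3$. So the statement's upward-pull claim is false for general $K$, and the paper's ``$\approx q$'' step conceals this.

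To recover the claim, keep your two-class version, or assume $s_{ik}\le\pihat_{ik}$ for all $k\ne y^*$. Under that assumption $g^{\text{soft}}_i(1)=\sum_{k\ne y^*}(\pihat_{ik}-s_{ik})d_{ik}>0$, because the coefficients are nonnegative and sum to $s_{i,y^*}-q>0$.

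Finally, in your overconfident step the hypothesis $s_{i,y^*}\gg q$ does no work. The sign of $\mathcal{L}_{\text{TS}}'(1)$ depends on the whole calibration set, since misclassified clear examples contribute positive terms. Once $\mathcal{L}_{\text{TS}}'(1)<0$, convexity alone forces $T^*_{\text{TS}}<1$. That sign condition is therefore the real assumption and should be stated as such, not derived from overconfidence relative to $q$.
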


\begin{proof}[Proof sketch]
On the ambiguous cluster, all voted labels are $y^*$, so the voted-label loss~\eqref{eq:ts-loss} is minimised by pushing $\operatorname{softmax}(z_i/T)_{y^*}\to 1$, i.e.\ the loss decreases as $T$ decreases.  Hence the ambiguous examples exert a downward gradient on $T$.  The cross-entropy against the annotator distribution with target $q<1$ is instead minimised at $\operatorname{softmax}(z_i/T^*)_{y^*}\approx q$, requiring a higher $T^*$ whenever $\operatorname{softmax}(z_i)_{y^*}>q$.  The global TS optimum balances this downward pull against the upward pull from unambiguous examples, but remains lower than the ambiguity-aware optimum.  Full proof in Appendix~\ref{app:proof-ts}.
\end{proof}

\begin{proposition}[True-Label Miscalibration and Annotation Entropy]
\label{prop:entropy-gap}
Let $H(x) = -\sum_k \pi_k(x)\log\pi_k(x)$ denote the annotation entropy of $x$.  Assume \emph{(i)} the model's predicted confidence $\hat{p}_{\hat{c}}(x)$ is approximately independent of $H(x)$ (similar accuracy across ambiguity levels), and \emph{(ii)} $\pi_{y^*}(x)$ is a decreasing function of $H(x)$ (greater entropy implies lower majority-class probability).  Under assumptions \emph{(i)} and \emph{(ii)}, the expected per-example true-label calibration error of voted-label-calibrated predictions is non-decreasing in $H(x)$.
\end{proposition}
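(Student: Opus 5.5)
The plan is to work directly with the pointwise true-label calibration error $e(x)=|\phat_{\hat{c}}(x)-\pi_{\hat{c}}(x)|$ from Section~\ref{sec:true-cal-def}. We condition on the annotation entropy $H(x)=h$ and show that $h\mapsto \bE[e(X)\mid H(X)=h]$ is non-decreasing.

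The first step reduces to the voted class. For a voted-label-calibrated model that is reasonably accurate in the sense of Proposition~\ref{prop:ts-bias}, the predicted class agrees with the voted label, $\hat{c}(x)=y^*(x)$, on the relevant examples. Hence $\pi_{\hat{c}}(x)=\pi_{y^*}(x)$. By assumption~(ii) we may write $\pi_{y^*}(x)=g(H(x))$ with $g$ decreasing. I would state the agreement $\hat{c}=y^*$ explicitly as part of the simplifying assumptions, since the proposition is only claimed under such assumptions.

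The second step removes the absolute value. Voted-label calibration makes the top-class confidence match accuracy against $y^*$, and on average that accuracy exceeds the majority-class probability whenever the model agrees with the vote. In addition, Proposition~\ref{prop:ts-bias} shows that TS biases temperature downward, which pushes confidence up. So on the relevant examples we have $\phat_{\hat{c}}(x)\ge \pi_{y^*}(x)$, giving
\[
e(x)=\phat_{\hat{c}}(x)-g(H(x)).
\]
Taking conditional expectations and using assumption~(i), read as $\bE[\phat_{\hat{c}}(X)\mid H(X)=h]=c$ constant in $h$, yields
\[
\bE[e(X)\mid H(X)=h]=c-g(h).
\]
This is non-decreasing because $g$ is decreasing. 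It is strictly increasing if $g$ is strictly decreasing, which gives the monotone-in-entropy gap claimed in the introduction.

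The main obstacle is justifying the sign condition $\phat_{\hat{c}}\ge\pi_{y^*}$ uniformly, rather than only on average. Without it, $\bE|\phat-g(h)|$ need not be monotone: if confidence sometimes falls below $g(h)$, increasing $h$ can shrink the error.

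I would handle this in two ways. First, I would restrict to the overconfident regime that Proposition~\ref{prop:ts-bias} already assumes, $\operatorname{softmax}(z/T^*_{\text{TS}})_{y^*}>q$. Second, as a fallback not requiring the sign condition, I would use the convexity of $u\mapsto \bE|P-u|$ in $u$. This function is non-increasing for $u$ below the median of $P$, where $P$ is the confidence, whose distribution is independent of $h$ by~(i). Monotonicity in $h$ therefore holds whenever $g(h)$ lies below the median confidence, which is the same overconfidence condition stated distributionally.
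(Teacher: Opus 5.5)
Your argument is essentially the paper's own proof (Appendix~\ref{app:proof-entropy}): identify $\hat{c}$ with $y^*$, use overconfidence $\phat_{\hat{c}}(x)>\pi_{y^*}(x)$ to drop the absolute value, hold the confidence fixed via~(i), and let $\pi_{y^*}$ fall with $H$ via~(ii); the paper differs only cosmetically, phrasing this through the excess $\delta(x)=|p-q|-|p-\mathbf{1}[\hat{c}=y^*]|$ over the voted-label error rather than through $\bE[e(X)\mid H(X)=h]$. Your explicit restriction to the overconfident regime, and your median/convexity fallback, are more careful than the paper, whose claim that $|p-q|$ grows ``as $q$ decreases from 1'' fails on $q\in(p,1]$ (a range that always contains $H=0$, since $g(0)=1>p$), so both arguments establish monotonicity only for those $h$ with $g(h)$ below the (median) confidence.
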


\begin{proof}[Proof sketch]
For unambiguous examples ($H(x)=0$), $\pi(\cdot\mid x)$ is one-hot, so true-label and voted-label evaluation coincide and the per-example error is $|{\hat{p}_{\hat{c}}(x) - 1}|$.  As $H(x)$ increases, assumption~(ii) gives $\pi_{y^*}(x) < 1$, introducing a gap between the voted-label target and the true annotator probability.  Because TS is optimised against one-hot voted labels, it targets $\hat{p}_{y^*}(x)\approx 1$ for all examples regardless of their annotation entropy; hence $\hat{p}_{\hat{c}}(x) > \pi_{y^*}(x)$ for ambiguous examples.  The true-label calibration error is then $\hat{p}_{\hat{c}}(x) - \pi_{y^*}(x)$, which by assumption~(i) (fixed $\hat{p}_{\hat{c}}$) increases as $\pi_{y^*}(x)$ falls with $H(x)$.
\end{proof}

\begin{figure}[t]
  \centering
  \includegraphics[width=\linewidth]{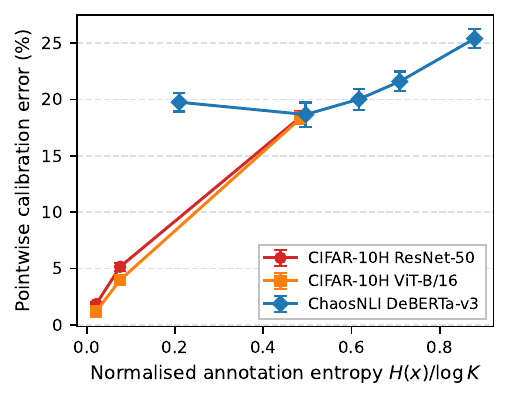}
  \caption{\textbf{Empirical validation of Proposition~\ref{prop:entropy-gap}.}  Test examples are grouped into equal-frequency bins by normalised annotation entropy $H(x)/\log K$.  Each point shows the mean pointwise true-label calibration error $|\hat{p}_{\hat{c}}(x)-\pi_{\hat{c}}(x)|$ for Temperature Scaling (error bars: $\pm 1$ s.e.).  Across all three dataset--architecture combinations, TS error increases monotonically with annotation entropy, as predicted by Proposition~\ref{prop:entropy-gap}.  ChaosNLI operates in a higher-entropy regime and exhibits uniformly higher error ($19$--$25\%$).}
  \label{fig:entropy-validation}
\end{figure}

% ─────────────────────────────────────────────────────────────────────────────
\section{Methods}
\label{sec:methods}
% ─────────────────────────────────────────────────────────────────────────────

All methods in this paper are \emph{post-hoc}: they operate on cached logits from a fixed pre-trained model and require only a calibration set equipped with annotator distributions or individual annotations.  Training-time approaches such as soft-label training \cite{szegedy2016rethinking,muller2019does}, mixup \cite{thulasidasan2019mixup}, and multi-annotator learning \cite{uma2021learning,rodrigues2018deep} are orthogonal and not compared here.

\subsection{Ambiguity-Aware Calibration Methods}

\noindent\textit{Soft-label calibration objective.}
The natural replacement for the voted-label loss~\eqref{eq:ts-loss} is the cross-entropy against the full annotator distribution:
\begin{align}
  \mathcal{L}_{\text{soft}}(T)
  &= -\frac{1}{n}\sum_{i=1}^n \sum_{k=1}^K
    \pihat_k(x_i)\,\log \operatorname{softmax}(z_i/T)_k \nonumber\\
  &= \frac{1}{n}\sum_{i=1}^n \KL\!\bigl(\pihat(x_i)\,\|\,\operatorname{softmax}(z_i/T)\bigr) + \text{const}.
  \label{eq:slts-loss}
\end{align}
This retains the same single-parameter family as TS but replaces the one-hot voted label with a target that correctly reflects the annotator distribution.  When $\pihat$ is available in closed form, minimising~\eqref{eq:slts-loss} directly yields the \textbf{Soft-Label Temperature Scaling (SLTS)} baseline reported in our tables.

\begin{proposition}[Correctness of distributional target]
\label{prop:proper}
The per-example loss $\ell(q;\pihat) = -\sum_k \pihat_k \log q_k$ is a strictly proper scoring rule over $q\in\Delta^K$: it is uniquely minimised at $q=\pihat(x)$ for each $x$.  Consequently, within the constrained family $\{\operatorname{softmax}(z/T): T>0\}$, $\mathcal{L}_{\text{soft}}$ identifies the temperature $T^*$ that minimises $\mathrm{KL}(\pihat\|\operatorname{softmax}(z/T^*))$, targeting the correct distributional objective, unlike $\mathcal{L}_{\text{TS}}$, which targets the one-hot voted label.  Note that $T^*$ need not achieve $\operatorname{softmax}(z/T^*)=\pihat$ exactly within this one-parameter family; the proper scoring rule property guarantees correctness of the \emph{target}, not achievability within the constrained family.
\end{proposition}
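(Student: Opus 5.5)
The argument has two parts. First, $\ell(\cdot;\pihat)$ is strictly proper, via Gibbs' inequality. Second, the consequence for the one-parameter temperature family is essentially a restatement of~\eqref{eq:slts-loss}.

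For the first part, fix $x$ and write $\pihat=\pihat(x)$. I would use the decomposition
\begin{equation*}
\ell(q;\pihat) = -\sum_k \pihat_k\log\pihat_k + \sum_k \pihat_k\log\frac{\pihat_k}{q_k} = H(\pihat) + \KL(\pihat\,\|\,q).
\end{equation*}
Terms with $\pihat_k=0$ are dropped under the convention $0\log 0=0$. If $q_k=0$ for some $k$ with $\pihat_k>0$, the loss is $+\infty$, so such $q$ cannot be minimisers. The entropy term does not depend on $q$, so it suffices to show that $\KL(\pihat\|q)\ge 0$, with equality iff $q=\pihat$.

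I would prove this with Jensen's inequality for the strictly concave function $\log$, applied over the support $S=\{k:\pihat_k>0\}$:
\begin{equation*}
-\KL(\pihat\|q)=\sum_{k\in S}\pihat_k\log\frac{q_k}{\pihat_k}\le \log\sum_{k\in S} q_k\le \log 1 = 0.
\end{equation*}
Equality in the first inequality requires the ratio $q_k/\pihat_k$ to be constant on $S$, by strict concavity. Equality in the second requires $\sum_{k\in S}q_k=1$. Together these force the constant to be $1$, so $q_k=\pihat_k$ on $S$ and $q_k=0$ off $S$; that is, $q=\pihat$. This support and equality-case bookkeeping, including the case where $\pihat$ has zero entries, is the only delicate step, and it is where I expect the main obstacle to lie. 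An alternative route is a Lagrange-multiplier argument on the simplex combined with strict convexity of $-\log$, but the Jensen route handles boundary points more cleanly.

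For the consequence, substitute $q=\operatorname{softmax}(z_i/T)$ and average over $i$. This recovers~\eqref{eq:slts-loss}: $\mathcal{L}_{\text{soft}}(T)$ equals the mean of $\KL(\pihat(x_i)\|\operatorname{softmax}(z_i/T))$ plus the $T$-independent constant $\frac1n\sum_i H(\pihat(x_i))$. Hence any minimiser $T^*$ of $\mathcal{L}_{\text{soft}}$ over $T>0$ is exactly a minimiser of the average KL to the annotator distribution. I would only assert existence of $T^*$ where the minimum is attained, since the infimum could be approached as $T\to 0$ or $T\to\infty$.

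By contrast, $\mathcal{L}_{\text{TS}}$ is the same scoring rule evaluated with the one-hot target $e_{y^*}$. By the first part, its pointwise unconstrained optimum is $e_{y^*}$ rather than $\pihat$, and the two differ whenever $\pihat$ is non-degenerate. Finally, I would point out that the family $\{\operatorname{softmax}(z/T)\}$ preserves the ranking of the logits and, for $K\ge 2$, usually does not contain $\pihat$. So we can only claim that the target is correct, not that the minimum KL is zero. Together these establish the statement.
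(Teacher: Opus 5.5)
Your proof is correct and takes the same route the paper relies on. The paper gives no standalone proof; it rests on the cross-entropy $=$ KL $+$ entropy decomposition in Eq.~\eqref{eq:slts-loss} and the standard strict propriety of the log score~\cite{gneiting2007strictly}, and your Gibbs-inequality argument, including the support bookkeeping for zero entries of $\pihat$, makes that explicit.

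Your caveat that $T^*$ exists only when the infimum over $T>0$ is attained is a sound refinement the paper leaves implicit. To justify the definite article in ``the temperature $T^*$'', note that $\mathcal{L}_{\text{soft}}$ is strictly convex in $\beta=1/T$ whenever some $z_i$ has non-constant entries, since its second derivative is an average of variances of $z_i$ under $\operatorname{softmax}(\beta z_i)$; hence an attained minimiser is unique.
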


\noindent\textit{Monte Carlo Temperature Scaling (MCTS).}
In practice, pre-aggregated label distributions $\pihat(x_i)$ may not be available; instead, individual annotation records $\{a_{i1},\ldots,a_{iS}\}$ are stored per example.  \textbf{MCTS} draws $S$ such samples $a_{is}\sim\pihat(x_i)$ and minimises
$\mathcal{L}_{\text{MCTS}}(T) = -\frac{1}{nS}\sum_{i,s} \log\operatorname{softmax}(z_i/T)_{a_{is}}$.
Since $\bE_{\hat{y}\sim\pihat}[\mathrm{CE}(\operatorname{softmax}(z/T),\hat{y})] = \KL(\pihat\|\operatorname{softmax}(z/T)) + H(\pihat)$, MCTS converges to the deterministic objective~\eqref{eq:slts-loss} as $S\to\infty$.
The key practical finding is that \emph{even $S=1$} suffices: on CIFAR-10H, a single annotation per calibration example achieves ECE$_\text{true}$ of $1.53\%$, matching the $S\!\to\!\infty$ limit within rounding error (Table~\ref{tab:mcts}).
This efficiency generalises across all eight dataset--architecture settings in Tables~\ref{tab:cifar} and~\ref{tab:isic-main}, where MCTS $S{=}1$ matches SLTS within 0.6~pp ECE (see Appendix~\ref{app:mcts-convergence} for the full convergence analysis).

\begin{table}[t]
  \centering
  \caption{
    \textbf{MCTS convergence on CIFAR-10H (ResNet-50).}  Mean $\pm$ std over 5 seeds; $S$ = MC annotation samples per calibration example.
  }
  \label{tab:mcts}
  \small
  \setlength{\tabcolsep}{6pt}
  \begin{tabular}{lcc}
    \toprule
    Method & ECE (\%) & $T^*$ \\
    \midrule
    TS (voted labels)   & 4.29          & $2.030$            \\
    \midrule
    MCTS $S=1$          & $1.53\pm0.01$ & $3.174\pm0.041$ \\
    MCTS $S=5$          & $1.53\pm0.02$ & $3.166\pm0.014$ \\
    MCTS $S=20$         & $1.52\pm0.00$ & $3.187\pm0.013$ \\
    MCTS $S=50$         & $1.52\pm0.01$ & $3.185\pm0.007$ \\
    MCTS $S=200$        & $1.52\pm0.00$ & $3.178\pm0.003$ \\
    SLTS ($S\to\infty$) & 1.51          & $3.180$            \\
    \bottomrule
  \end{tabular}
\end{table}

\noindent\textit{Vector Scaling (VS).}
VS extends the soft-label objective~\eqref{eq:slts-loss} by replacing the global temperature with a per-class temperature vector $\mathbf{T}=(T_1,\ldots,T_K)$, so that the calibrated logit for class $k$ becomes $z_k/T_k$.  This accommodates datasets where some classes are systematically more ambiguous than others, at the cost of $K$ parameters rather than one.  The loss remains the KL divergence against $\pihat$ (Eq.~\ref{eq:slts-loss}).

\noindent\textit{Distributional Isotonic Regression (IR-Soft).}
IR-Soft fits a monotone step function from the model's predicted confidence $\hat{p}_{\hat{c}}(x)$ to the mean annotator probability for the top class, $\pihat_{\hat{c}}(x)$, using the Pool Adjacent Violators Algorithm (PAVA)~\cite{zadrozny2002transforming}.  As a non-parametric method, it makes no assumptions about the functional form of the calibration map and is capable of correcting arbitrary monotone distortions.

\noindent\textit{SoftPlatt.}
SoftPlatt applies a diagonal affine transformation $\hat{q}_k=\operatorname{softmax}(w_k z_k + b_k)$, the same parametric family as Platt scaling~\cite{platt1999probabilistic}, but fits the parameters by minimising $\KL(\pihat(x)\|\hat{q}(x))$ against the annotator distribution.  This design isolates the effect of the distributional target from any architectural difference relative to standard Platt scaling.

\noindent\textit{Dirichlet-Soft.}
Dirichlet-Soft applies the full $K\!\times\!K$ affine transformation $\hat{q}(x)=\operatorname{softmax}(Wz(x)+b)$ and fits $W\in\bR^{K\times K}$, $b\in\bR^K$ by minimising $\KL(\pihat(x)\|\hat{q}(x))$ with ODIR regularisation ($\lambda=10^{-3}$)~\cite{kull2019beyond}.  This constitutes the most expressive parametric calibrator in our framework, with $K^2+K$ free parameters, and represents the natural ambiguity-aware counterpart of the voted-label Dirichlet calibration of Kull et al.~\cite{kull2019beyond}.

% ─────────────────────────────────────────────────────────────────────────────
\subsection{Annotation-Free Calibration}
\label{sec:annotation-free}
% ─────────────────────────────────────────────────────────────────────────────

The methods above require annotator distributions $\pihat(x_i)$ at calibration time.
When only voted labels $y^*_i$ are available, we propose \textbf{Label-Smooth Temperature Scaling (LS-TS)}, which constructs a pseudo-target distribution from the model's own predictions without any additional annotations.

Let $\bar\varepsilon = \frac{1}{n}\sum_{i=1}^n \bigl(1 - \hat p_{y^*_i}(x_i)\bigr)$ be the mean complement of the model's voted-class confidence on the calibration set.
Define the pseudo-target distribution
\begin{equation}
  \tpihat^{\mathrm{LS}}_i
  = (1-\bar\varepsilon)\,e_{y^*_i} + \tfrac{\bar\varepsilon}{K}\,\mathbf{1},
  \label{eq:lsts}
\end{equation}
and minimise $\mathcal{L}_{\mathrm{soft}}$ (Eq.~\ref{eq:slts-loss}) with these targets.
The global smoothing weight $\bar\varepsilon$ is a data-driven estimate of average annotator disagreement: if the model is 80\% confident on the voted class, the remaining 20\% is spread uniformly across all $K$ classes.  The result is $T^*_{\mathrm{LS}} > T^*_{\mathrm{TS}}$ whenever $\bar\varepsilon > 0$, moving the calibrator in the correct direction even without annotator data.

\noindent\textit{Theoretical interpretation.}
The smoothing weight $\bar\varepsilon$ satisfies a moment-matching fixed point: the expected pseudo-label mass on the voted class equals the average model confidence, $\mathbb{E}[\tpihat^{\mathrm{LS}}_{y^*}] = \mathbb{E}[\hat p_{y^*}]$.  Algorithmically, LS-TS can be viewed as a single E-step of an EM algorithm in which the latent annotator distribution is estimated from the current (pre-calibration) model, followed by one M-step that optimises $T$ given those pseudo-labels.  This connection to knowledge distillation \cite{hinton2015distilling} explains the self-referential nature of the approach: the uncalibrated model both supplies the pseudo-targets and is then calibrated against them.  Section~\ref{sec:ablation-lsts} compares LS-TS against three simpler smoothing strategies and confirms that the data-driven global $\bar\varepsilon$ is the key design choice.  Appendix~\ref{app:ats} further shows that even an \emph{adaptive} voted-label calibrator (one that learns a per-instance temperature from logit-derived features) fails to improve over TS, reinforcing that the target distribution, not the flexibility of the calibration map, is the limiting factor.

% ─────────────────────────────────────────────────────────────────────────────
\section{Experiments}
\label{sec:experiments}
% ─────────────────────────────────────────────────────────────────────────────

The experimental evaluation addresses three questions: (1)~Does the voted-label target, rather than model capacity, limit calibration quality under ambiguity? (2)~How much annotation information is needed to enter the ambiguity-aware regime? (3)~Are the improvements robust to dataset characteristics, model architectures, and random variation?  We evaluate on four benchmarks with multi-annotator data, each with two backbone architectures: CIFAR-10H (ResNet-50, ViT-B/16), ChaosNLI (RoBERTa-Large, DeBERTa-v3), ISIC~2019 (EfficientNet-B4, ViT-S/16), and DermaMNIST (ResNet-18, ViT-S/16).

\noindent\textbf{Baselines.}
We compare against (i)~\textbf{Uncalibrated}: raw softmax; (ii)~\textbf{TS}: Temperature Scaling on voted labels \cite{guo2017calibration}; (iii)~\textbf{ATS} (Adaptive Temperature Scaling \cite{joy2023sample}): per-instance temperature predicted from four logit-derived features ($\max_k z_k$, prediction entropy, top-2 margin, top-1 confidence) via a linear model, trained with the same voted-label NLL as TS (the most direct adaptive extension of TS without annotator data); (iv)~\textbf{Platt (PS)}: per-class weight and bias on logits, calibrated against voted labels \cite{platt1999probabilistic,kull2019beyond}; (v)~\textbf{Dirichlet-Hard}: Dirichlet calibration \cite{kull2019beyond} with the full $K\times K$ weight matrix and bias, calibrated against voted labels.  Dirichlet-Hard uses the \emph{same architecture} as our Dirichlet-Soft but with voted-label targets; the contrast isolates the effect of the calibration target from model capacity.  ATS similarly tests whether richer architecture, without changing the target, is sufficient to fix the calibration gap.

\noindent\textbf{Metrics.}
We report two families of metrics, which differ in how the ground-truth label is treated:

Binning-based ECE metrics ($\ECEtrue$, aECE, cwECE) are estimated by drawing $S=100$ labels $\tilde{y}^{(s)}_i \sim \pihat(x_i)$ per example and averaging calibration error over draws.
$\ECEtrue$ uses $B=15$ equal-width confidence bins; aECE (adaptive ECE \cite{nixon2019measuring}) uses equal-mass bins that adapt to the confidence distribution; cwECE \cite{kull2019beyond} averages per-class ECE over $K$ classes.
These mirror the evaluation protocol faced by a practitioner who observes one label per prediction at test time.

Soft-target metrics (Brier score $\mathrm{Br}$, NLL) are computed directly against the soft target $\pihat$:
\begin{align}
  \mathrm{Br}_i &= \|\phat(x_i) - \pihat(x_i)\|^2, \label{eq:soft-metrics}\\
  \mathrm{NLL}_i &= -\sum_k \pihat_k(x_i)\log\phat_k(x_i)
    = H\!\bigl(\pihat(x_i),\,\phat(x_i)\bigr), \nonumber
\end{align}
averaged over the test set.
Brier and NLL are strictly proper scoring rules that directly penalise divergence from the full annotator distribution, complementing the binning-based ECE metrics.

We abbreviate $\ECEtrue$, aECE, cwECE, Br, NLL throughout.

% ─────────────────────────────────────────────────────────────────────────────
\subsection{CIFAR-10H: Image Classification with Human Label Noise}
\label{sec:cifar10h}
% ─────────────────────────────────────────────────────────────────────────────

\noindent\textbf{Dataset and models.}
CIFAR-10H \cite{peterson2019human} provides ${\approx}51$ human annotations per image for the 10\,000-image CIFAR-10 test set, directly exposing perceptual ambiguity in low-resolution natural images through repeated human labeling.  The test set is split into calibration ($n=5{,}000$) and evaluation ($5{,}000$) by stratified sampling (seed 42).  We evaluate two architectures:
\textbf{ResNet-50} \cite{he2016deep} pretrained on ImageNet-1k, fine-tuned for 30 epochs (AdamW, cosine annealing), achieving 97.3\% top-1 accuracy.
\textbf{ViT-B/16} \cite{dosovitskiy2021image} pretrained on ImageNet-21k, fine-tuned with the same protocol, achieving 98.1\% top-1 accuracy.

\begin{table*}[t]
  \centering
  \caption{
    \textbf{CIFAR-10H and ChaosNLI results.}
    ECE/aECE/cwECE averaged over 100 sampled labels $\tilde{y}\sim\pihat$; Brier/NLL computed against soft target $\pihat$ (Eq.~\ref{eq:soft-metrics}).
    Best per architecture in \textbf{bold} (Oracle TS excluded).
    \textit{Oracle TS} fits TS on the test set with soft labels (cheating upper bound on soft NLL; not a strict ECE upper bound).
  }
  \label{tab:cifar}\label{tab:chaosnli}
  \footnotesize
  \setlength{\tabcolsep}{3.5pt}
  \renewcommand{\arraystretch}{0.82}
  \resizebox{\linewidth}{!}{%
  \begin{tabular}{l cccccc cccccc}
    \toprule
    & \multicolumn{6}{c}{\textbf{ResNet-50}} & \multicolumn{6}{c}{\textbf{ViT-B/16}} \\
    \cmidrule(lr){2-7} \cmidrule(lr){8-13}
    Method & $T$ & ECE$\downarrow$ & aECE$\downarrow$ & cwECE$\downarrow$ & Br$\downarrow$ & NLL$\downarrow$
           & $T$ & ECE$\downarrow$ & aECE$\downarrow$ & cwECE$\downarrow$ & Br$\downarrow$ & NLL$\downarrow$ \\
    \midrule
    \multicolumn{13}{l}{\textbf{CIFAR-10H} \textit{($m{\approx}51$ annotations per image, $K{=}10$)}} \\
    \midrule
    \multicolumn{13}{l}{\textit{Voted-label baselines}} \\
    Uncalibrated   & ---  &  4.97 &  5.71 &  1.09 & 0.120 & 0.692  & ---  &  4.99 &  5.36 &  1.06 & 0.111 & 0.678 \\
    TS             & 2.03 &  4.29 &  4.25 &  0.91 & 0.112 & 0.363  & 2.04 &  4.48 &  4.40 &  0.93 & 0.106 & 0.346 \\
    ATS            & --- &  4.40 &  4.37 &  0.94 & 0.113 & 0.350  & --- &  4.54 &  4.47 &  0.94 & 0.106 & 0.345 \\
    Platt          & ---  &  4.29 &  4.23 &  0.91 & 0.112 & 0.372  & ---  &  4.54 &  4.38 &  0.93 & 0.105 & 0.363 \\
    Dirichlet-Hard & ---  &  4.46 &  4.44 &  0.95 & 0.114 & 0.395  & ---  &  4.70 &  4.62 &  0.97 & 0.107 & 0.394 \\
    \cmidrule(l){1-13}
    \multicolumn{13}{l}{\textit{Ambiguity-aware (ours)}} \\
    SLTS           & 3.18 &  1.51 &  1.39 &  0.45 & 0.110 & 0.293  & 3.07 &  0.85 &  0.56 &  0.39 & 0.102 & 0.278 \\
    MCTS $S{=}1$   & 3.14 &  1.45 &  1.44 &  0.45 & 0.111 & 0.296  & 3.07 &  0.81 &  0.65 &  0.42 & 0.101 & 0.277 \\
    SoftPlatt      & ---  &  1.52 &  1.31 &  \textbf{0.35} & 0.110 & 0.288  & ---  &  0.88 &  0.47 &  \textbf{0.24} & 0.101 & 0.272 \\
    VS             & ---  &  1.35 &  1.26 &  0.39 & \textbf{0.109} & 0.289  & ---  &  0.92 &  0.50 &  0.32 & 0.101 & 0.274 \\
    IR-Soft        & ---  &  \textbf{0.72} & \textbf{0.83} &  0.45 & 0.115 & 0.340  & ---  &  0.91 &  0.61 &  0.43 & 0.105 & 0.321 \\
    Dirichlet-Soft & ---  &  1.25 &  1.06 &  \textbf{0.35} & \textbf{0.109} & \textbf{0.271}  & ---  &  \textbf{0.72} & \textbf{0.41} &  0.25 & \textbf{0.100} & \textbf{0.255} \\
    \cmidrule(l){1-13}
    \multicolumn{13}{l}{\textit{Annotation-free (ours)}} \\
    LS-TS          & 3.08 &  1.57 &  1.31 &  0.46 & 0.109 & 0.293  & 2.76 &  2.37 &  2.21 &  0.53 & 0.103 & 0.285 \\
    \cmidrule(l){1-13}
    \multicolumn{13}{l}{\textit{Oracle (soft labels on test set)}} \\
    Oracle TS      & 3.17 &  1.50 &  1.52 &  0.45 & 0.111 & 0.296  & 3.09 &  0.70 &  0.63 &  0.42 & 0.101 & 0.277 \\
    \midrule
    & \multicolumn{6}{c}{\textbf{RoBERTa-Large}} & \multicolumn{6}{c}{\textbf{DeBERTa-v3}} \\
    \cmidrule(lr){2-7} \cmidrule(lr){8-13}
    Method & $T$ & ECE$\downarrow$ & aECE$\downarrow$ & cwECE$\downarrow$ & Br$\downarrow$ & NLL$\downarrow$
           & $T$ & ECE$\downarrow$ & aECE$\downarrow$ & cwECE$\downarrow$ & Br$\downarrow$ & NLL$\downarrow$ \\
    \midrule
    \multicolumn{13}{l}{\textbf{ChaosNLI} \textit{(SNLI+MNLI, $m{=}100$ annotations per example, $K{=}3$)}} \\
    \midrule
    \multicolumn{13}{l}{\textit{Voted-label baselines}} \\
    Uncalibrated   & ---  & 27.79 & 27.77 & 18.61 & 0.650 & 1.384  & ---  & 35.97 & 35.92 & 24.02 & 0.743 & 2.148 \\
    TS             & 2.42 & 10.55 & 10.38 &  7.30 & 0.537 & 0.886  & 3.81 & 11.63 & 11.57 &  8.48 & 0.549 & 0.900 \\
    ATS            & 2.28 & 11.36 & 11.29 &  7.73 & 0.538 & 0.883  & 3.57 & 13.09 & 13.07 &  9.06 & 0.552 & 0.900 \\
    Platt          & ---  & 11.16 & 10.92 &  7.28 & 0.530 & 0.875  & ---  & 12.43 & 12.36 &  8.43 & 0.539 & 0.896 \\
    Dirichlet-Hard & ---  & 11.55 & 11.40 &  7.64 & 0.535 & 0.889  & ---  & 12.69 & 12.63 &  8.84 & 0.539 & 0.895 \\
    \cmidrule(l){1-13}
    \multicolumn{13}{l}{\textit{Ambiguity-aware (ours)}} \\
    SLTS           & 3.41 &  3.22 &  2.59 &  4.62 & 0.520 & 0.862  & 5.28 &  3.45 &  3.36 &  6.53 & 0.529 & 0.877 \\
    MCTS $S{=}1$   & 3.49 &  2.82 &  2.35 &  4.60 & 0.519 & 0.862  & 5.45 &  3.20 &  2.98 &  6.30 & 0.529 & 0.877 \\
    SoftPlatt      & ---  &  2.66 &  2.32 &  2.43 & \textbf{0.509} & \textbf{0.839}  & ---  &  3.17 &  2.92 &  3.18 & 0.511 & 0.841 \\
    VS             & ---  &  3.46 &  3.07 &  5.13 & 0.518 & 0.857  & ---  &  3.91 &  3.35 &  6.46 & 0.525 & 0.869 \\
    IR-Soft        & ---  &  2.65 &  2.59 &  6.24 & 0.549 & 0.934  & ---  &  \textbf{2.15} &  3.96 &  7.20 & 0.554 & 0.942 \\
    Dirichlet-Soft & ---  &  \textbf{2.57} & \textbf{1.71} &  \textbf{2.03} & 0.510 & \textbf{0.839}  & ---  &  3.19 &  \textbf{2.71} &  \textbf{2.95} & \textbf{0.509} & \textbf{0.836} \\
    \cmidrule(l){1-13}
    \multicolumn{13}{l}{\textit{Annotation-free (ours)}} \\
    LS-TS          & 4.40 &  4.16 &  3.72 &  5.61 & 0.522 & 0.873  & 6.15 &  2.65 &  2.99 &  6.32 & 0.529 & 0.881 \\
    \cmidrule(l){1-13}
    \multicolumn{13}{l}{\textit{Oracle (soft labels on test set)}} \\
    Oracle TS      & 3.34 &  3.48 &  2.83 &  4.64 & 0.520 & 0.862  & 5.21 &  3.66 &  3.54 &  6.66 & 0.529 & 0.876 \\
    \bottomrule
  \end{tabular}}
\end{table*}

\begin{figure*}[t]
  \centering
  \includegraphics[width=\linewidth]{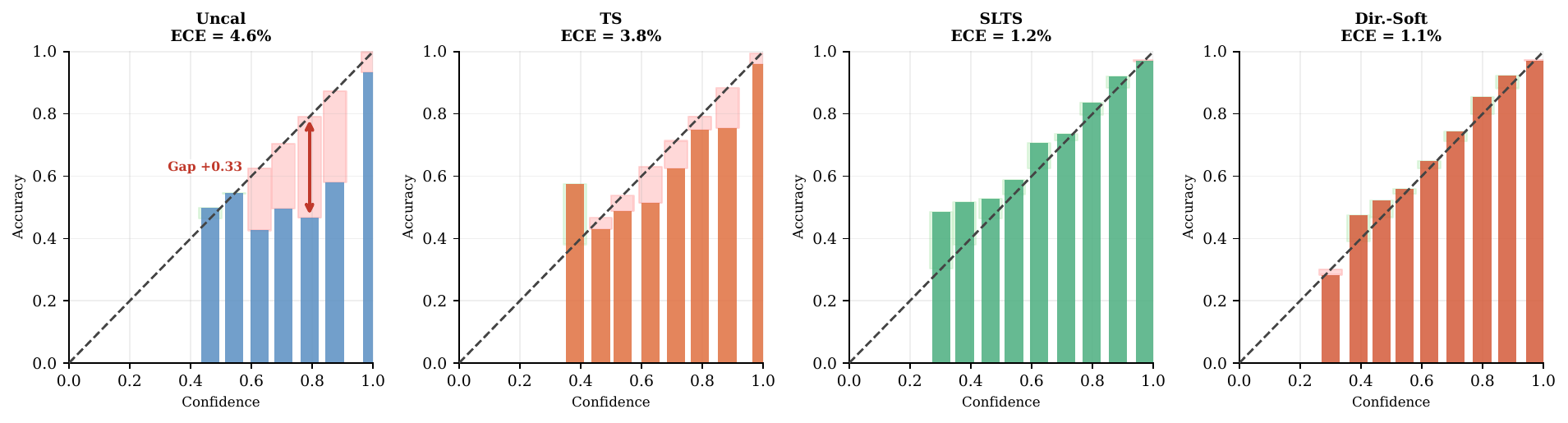}
  \caption{
    \textbf{Reliability diagrams for four representative methods: CIFAR-10H ResNet-50 (ECE$_\text{true}$).}
    Red shading indicates overconfidence; green indicates underconfidence.
    Uncal and TS remain overconfident; soft-label temperature methods (MCTS/SLTS) substantially correct this; Dirichlet-Soft reduces the residual gap further.
    Reliability diagrams for all remaining methods are provided in Appendix~\ref{app:reliability}.
  }
  \label{fig:reliability-main}
\end{figure*}

\begin{figure*}[t]
  \centering
  \includegraphics[width=\linewidth]{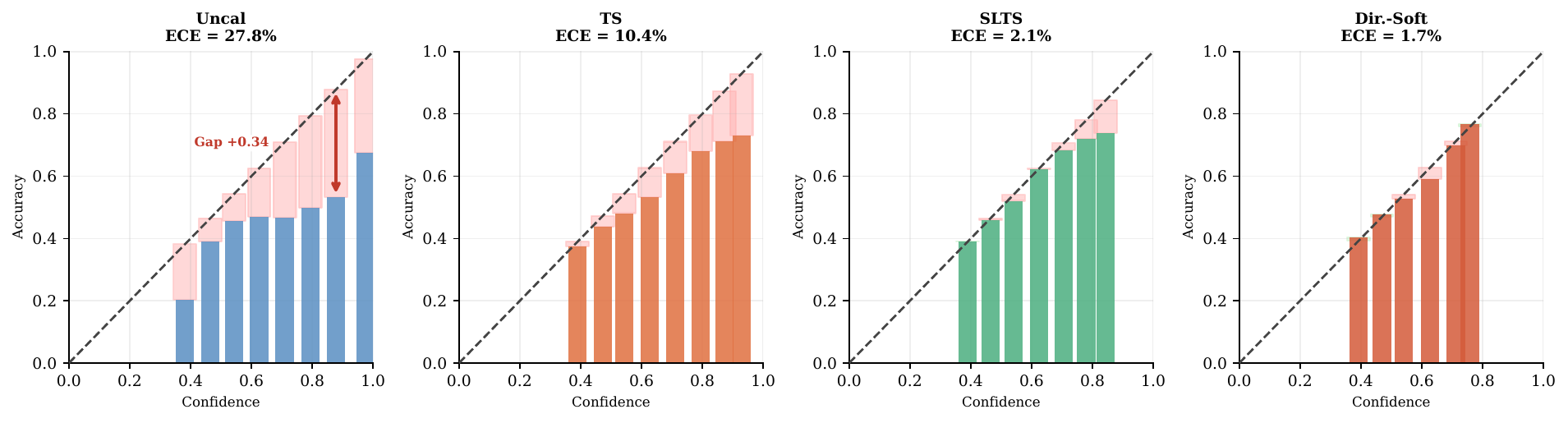}
  \caption{
    \textbf{Reliability diagrams for four representative methods: ChaosNLI RoBERTa-Large (ECE$_\text{true}$).}
    Same method selection as Figure~\ref{fig:reliability-main}.
    Red shading indicates overconfidence; green indicates underconfidence.
    NLI models are severely overconfident before calibration; ambiguity-aware methods substantially correct this.
  }
  \label{fig:reliability-chaosnli-main}
\end{figure*}

\noindent\textbf{Results.}
Table~\ref{tab:cifar} (upper half) reports results for both CIFAR-10H backbones.  All voted-label baselines leave 4--5\% ECE$_\text{true}$, whereas ambiguity-aware methods reduce it to below 2\%.  Two comparisons are particularly informative.  First, ATS (adaptive per-instance temperature) achieves $4.40\%$ ECE on ResNet-50 and $4.54\%$ on ViT-B/16, no better than global TS ($4.29\%$ and $4.48\%$, respectively) and far behind even the annotation-free LS-TS ($1.57\%$ and $2.37\%$), confirming that per-instance temperature adaptation cannot compensate for the wrong calibration target.  Second, MCTS $S{=}1$ matches SLTS on both architectures ($1.45\%$ vs.\ $1.51\%$ on ResNet-50; $0.81\%$ vs.\ $0.85\%$ on ViT-B/16), demonstrating that individual annotation records are as informative as pre-aggregated distributions for the purpose of temperature calibration.  Among the more expressive calibrators, Dirichlet-Soft achieves the best Brier score and NLL on both backbones (0.109/0.271 on ResNet-50; 0.100/0.255 on ViT-B/16).  LS-TS, operating with voted labels alone, already reduces ECE from 4.29\% to 1.57\% (ResNet-50) and from 4.48\% to 2.37\% (ViT-B/16), demonstrating that even approximate soft targets yield substantial improvements.

% ─────────────────────────────────────────────────────────────────────────────
\subsection{ChaosNLI: Natural Language Inference}
\label{sec:chaosnli}
% ─────────────────────────────────────────────────────────────────────────────

We next evaluate on \textbf{ChaosNLI} \cite{nie2020what}: 100 human annotations per example for the combined SNLI+MNLI subset ($n=3{,}113$), where disagreement reflects genuine semantic ambiguity rather than annotation noise.  We split the data into calibration and test sets (50/50, stratified by majority-vote label to preserve class balance).  We evaluate \textbf{RoBERTa-Large} \cite{liu2019roberta} and \textbf{DeBERTa-v3-base} \cite{he2021debertav3}, both fine-tuned on MNLI training data; ChaosNLI draws from the SNLI and MNLI dev/test splits, so no training examples are re-used at evaluation.

\noindent\textbf{Results.}
Table~\ref{tab:cifar} (lower half) reports ChaosNLI results.  The NLI setting amplifies the calibration gap observed on CIFAR-10H: voted-label baselines leave 10--12\% ECE, and ATS ($11.36\%$ on RoBERTa-L, $13.09\%$ on DeBERTa-v3) performs \emph{worse} than global TS on DeBERTa-v3, further corroborating that adaptive capacity without the correct target is ineffective.  MCTS $S{=}1$ again closely matches SLTS ($2.82\%$ vs.\ $3.22\%$ on RoBERTa-L; $3.20\%$ vs.\ $3.45\%$ on DeBERTa-v3).  On this benchmark MCTS $S{=}1$ in fact slightly outperforms SLTS, which may be attributable to the stochastic single-annotation objective acting as an implicit regulariser.  The best distributional fit is achieved by Dirichlet-Soft and SoftPlatt: on RoBERTa-L, SoftPlatt achieves the best Brier score (0.509 vs.\ 0.510 for Dirichlet-Soft) and both share the same NLL (0.839); on DeBERTa-v3, Dirichlet-Soft achieves the best Brier (0.509) and NLL (0.836).  LS-TS reduces ECE from 10.55\% to 4.16\% (RoBERTa-L) and from 11.63\% to 2.65\% (DeBERTa-v3) using only voted labels, substantially outperforming all hard-label baselines.

\noindent\textbf{ATS vs.\ LS-TS: the calibration target is the decisive factor.}
A controlled comparison between ATS (increased architectural capacity, voted-label target) and LS-TS (global temperature, soft target) shows that ATS fails to improve over TS in all eight settings and in several cases degrades performance (ECE increases from $11.63\%$ to $13.09\%$ on DeBERTa-v3; from $17.04\%$ to $17.53\%$ on ISIC ViT-S/16).  The per-instance temperature learned from logit features adapts to model uncertainty, which is only weakly correlated with annotator disagreement.  In contrast, LS-TS reduces ECE by 9--77\% across all eight settings without any annotator data (see Appendix~\ref{app:ats} for the full table and analysis).

% ─────────────────────────────────────────────────────────────────────────────
\subsection{ISIC 2019: Skin Disease Diagnosis}
\label{sec:isic2019}
% ─────────────────────────────────────────────────────────────────────────────

We evaluate on \textbf{ISIC~2019} \cite{tschandl2018ham10000,combalia2019bcn} (8 skin conditions, ${\approx}25{,}000$ images), a clinically realistic differential-diagnosis task in which several lesion categories have overlapping visual morphology.  We use $m=9$ synthetic dermatologist annotations per image, generated by sampling from a clinically calibrated $8\times8$ confusion matrix following Dawid and Skene~\cite{dawid1979maximum} (Appendix~\ref{app:isic-confusion}), with diagonal agreement matched to Liu et al.~\cite{liu2020deep}: mean $75\%$.  Because no per-image reader study is publicly available for this dataset, annotation ambiguity is modelled \emph{class-conditionally}: each image's confusion depends on its consensus class, not on its individual visual content, which is a deliberate simplification of true instance-level annotator variation.  We evaluate \textbf{EfficientNet-B4} \cite{tan2019efficientnet} and \textbf{ViT-S/16} \cite{dosovitskiy2021image} on a stratified 70/15/15 split.

\begin{table*}[t]
  \centering
  \caption{
    \textbf{ISIC 2019 and DermaMNIST results.}
    ECE/aECE/cwECE averaged over 100 sampled labels $\tilde{y}\sim\pihat$; Brier/NLL computed against soft target $\pihat$ (Eq.~\ref{eq:soft-metrics}).
    Best per architecture in \textbf{bold} (Oracle TS excluded).
    \textit{Oracle TS} fits TS on the test set with soft labels (cheating upper bound on soft NLL; not a strict ECE upper bound).
  }
  \label{tab:isic-main}\label{tab:derm}
  \footnotesize
  \setlength{\tabcolsep}{3.5pt}
  \renewcommand{\arraystretch}{0.82}
  \resizebox{\linewidth}{!}{%
  \begin{tabular}{l cccccc cccccc}
    \toprule
    & \multicolumn{6}{c}{\textbf{EfficientNet-B4}} & \multicolumn{6}{c}{\textbf{ViT-S/16}} \\
    \cmidrule(lr){2-7} \cmidrule(lr){8-13}
    Method & $T$ & ECE$\downarrow$ & aECE$\downarrow$ & cwECE$\downarrow$ & Br$\downarrow$ & NLL$\downarrow$
           & $T$ & ECE$\downarrow$ & aECE$\downarrow$ & cwECE$\downarrow$ & Br$\downarrow$ & NLL$\downarrow$ \\
    \midrule
    \multicolumn{13}{l}{\textbf{ISIC 2019} \textit{($m{=}9$ synthetic annotators, $K{=}8$, mean agreement $75\%$)}} \\
    \midrule
    \multicolumn{13}{l}{\textit{Voted-label baselines}} \\
    Uncalibrated   & ---  & 25.87 & 25.97 &  6.64 & 0.600 & 2.686  & ---  & 21.28 & 21.17 &  6.21 & 0.650 & 1.803 \\
    TS             & 1.77 & 18.72 & 18.70 &  5.02 & 0.559 & 1.661  & 1.23 & 17.04 & 16.95 &  5.49 & 0.630 & 1.578 \\
    ATS            & 1.70 & 18.83 & 18.80 &  5.03 & 0.560 & 1.733  & 1.12 & 17.53 & 17.48 &  5.46 & 0.631 & 1.730 \\
    Platt          & ---  & 19.16 & 19.08 &  5.20 & 0.559 & 1.658  & ---  & 18.17 & 18.14 &  4.94 & 0.610 & 1.595 \\
    Dirichlet-Hard & ---  & 20.03 & 19.96 &  5.25 & 0.560 & 1.677  & ---  & 18.51 & 18.49 &  5.02 & 0.613 & 1.617 \\
    \cmidrule(l){1-13}
    \multicolumn{13}{l}{\textit{Ambiguity-aware (ours)}} \\
    SLTS           & 4.42 &  9.76 &  9.91 &  3.03 & 0.529 & 1.149  & 2.68 &  7.10 &  6.77 &  3.35 & 0.594 & 1.248 \\
    MCTS $S{=}1$   & 4.51 & 10.09 & 10.15 &  3.08 & 0.530 & 1.149  & 2.77 &  7.47 &  7.15 &  3.35 & 0.595 & 1.248 \\
    SoftPlatt      & ---  &  9.98 &  9.99 &  2.33 & 0.523 & 1.111  & ---  &  7.82 &  7.71 &  1.90 & 0.572 & 1.201 \\
    VS             & ---  &  9.68 &  9.64 &  2.48 & 0.524 & 1.123  & ---  &  7.74 &  7.59 &  2.14 & 0.574 & 1.211 \\
    IR-Soft        & ---  &  \textbf{1.77} & \textbf{2.21} &  4.36 & 0.539 & 1.288  & ---  &  \textbf{2.05} & \textbf{2.13} &  5.65 & 0.627 & 1.479 \\
    Dirichlet-Soft & ---  &  8.35 &  8.39 &  \textbf{2.08} & \textbf{0.518} & \textbf{1.086}  & ---  &  6.32 &  6.28 &  \textbf{1.64} & \textbf{0.568} & \textbf{1.184} \\
    \cmidrule(l){1-13}
    \multicolumn{13}{l}{\textit{Annotation-free (ours)}} \\
    LS-TS          & 4.30 &  9.34 &  9.46 &  2.98 & 0.528 & 1.149  & 3.98 & 15.49 & 15.47 &  4.52 & 0.619 & 1.303 \\
    \cmidrule(l){1-13}
    \multicolumn{13}{l}{\textit{Oracle (soft labels on test set)}} \\
    Oracle TS      & 4.38 &  9.64 &  9.77 &  3.00 & 0.528 & 1.149  & 2.70 &  7.23 &  6.86 &  3.34 & 0.594 & 1.248 \\
    \midrule
    & \multicolumn{6}{c}{\textbf{ResNet-18}} & \multicolumn{6}{c}{\textbf{ViT-S/16}} \\
    \cmidrule(lr){2-7} \cmidrule(lr){8-13}
    Method & $T$ & ECE$\downarrow$ & aECE$\downarrow$ & cwECE$\downarrow$ & Br$\downarrow$ & NLL$\downarrow$
           & $T$ & ECE$\downarrow$ & aECE$\downarrow$ & cwECE$\downarrow$ & Br$\downarrow$ & NLL$\downarrow$ \\
    \midrule
    \multicolumn{13}{l}{\textbf{DermaMNIST} \textit{($m{=}5$ synthetic annotators, $K{=}7$, mean agreement $64.7\%$)}} \\
    \midrule
    \multicolumn{13}{l}{\textit{Voted-label baselines}} \\
    Uncalibrated   & ---  & 34.56 & 34.51 & 10.11 & 0.771 & 2.617  & ---  & 37.55 & 37.48 & 10.88 & 0.791 & 3.700 \\
    TS             & 1.84 & 22.25 & 22.20 &  6.91 & 0.687 & 1.637  & 2.55 & 24.65 & 24.57 &  7.30 & 0.683 & 1.664 \\
    ATS            & 1.86 & 22.82 & 22.75 &  6.97 & 0.687 & 1.606  & 2.53 & 25.35 & 25.24 &  7.53 & 0.686 & 1.650 \\
    Platt          & ---  & 23.10 & 23.03 &  6.93 & 0.682 & 1.623  & ---  & 24.18 & 24.02 &  7.13 & 0.675 & 1.670 \\
    Dirichlet-Hard & ---  & 23.02 & 22.79 &  7.06 & 0.683 & 1.704  & ---  & 24.55 & 24.46 &  7.33 & 0.680 & 1.877 \\
    \cmidrule(l){1-13}
    \multicolumn{13}{l}{\textit{Ambiguity-aware (ours)}} \\
    SLTS           & 3.43 &  5.13 &  4.76 &  4.02 & 0.629 & 1.388  & 5.03 &  3.29 &  2.74 &  3.15 & 0.609 & 1.346 \\
    MCTS $S{=}1$   & 3.42 &  5.07 &  4.74 &  4.02 & 0.629 & 1.388  & 5.04 &  3.28 &  2.71 &  3.16 & 0.609 & 1.346 \\
    SoftPlatt      & ---  &  4.05 &  3.80 &  1.51 & 0.614 & 1.313  & ---  &  3.41 &  3.05 &  1.36 & 0.599 & 1.287 \\
    VS             & ---  &  4.58 &  4.48 &  3.38 & 0.623 & 1.365  & ---  &  3.81 &  3.35 &  2.73 & 0.609 & 1.329 \\
    IR-Soft        & ---  &  \textbf{2.20} & \textbf{1.94} &  4.96 & 0.640 & 1.468  & ---  &  \textbf{2.06} & \textbf{1.87} &  4.57 & 0.626 & 1.423 \\
    Dirichlet-Soft & ---  &  3.94 &  3.93 &  \textbf{1.26} & \textbf{0.612} & \textbf{1.305}  & ---  &  3.18 &  3.00 &  \textbf{1.16} & \textbf{0.598} & \textbf{1.279} \\
    \cmidrule(l){1-13}
    \multicolumn{13}{l}{\textit{Annotation-free (ours)}} \\
    LS-TS          & 3.03 &  5.05 &  4.82 &  3.79 & 0.630 & 1.394  & 4.19 &  7.36 &  7.06 &  3.68 & 0.615 & 1.362 \\
    \cmidrule(l){1-13}
    \multicolumn{13}{l}{\textit{Oracle (soft labels on test set)}} \\
    Oracle TS      & 3.38 &  4.87 &  4.49 &  3.97 & 0.628 & 1.388  & 4.94 &  3.51 &  2.97 &  3.13 & 0.609 & 1.346 \\
    \bottomrule
  \end{tabular}}
\end{table*}

\noindent\textbf{ISIC 2019 results.}
Table~\ref{tab:isic-main} (upper half) shows that the benefits of ambiguity-aware calibration extend to clinically realistic settings with synthetic annotations.  Dirichlet-Soft achieves the best Brier and NLL on both backbones (0.518/1.086 on ENet-B4; 0.568/1.184 on ViT-S/16), while IR-Soft achieves the lowest ECE, a pattern consistent with its non-parametric flexibility.  MCTS $S{=}1$ closely tracks SLTS on ENet-B4 ($10.09\%$ vs.\ $9.76\%$) and ViT-S/16 ($7.47\%$ vs.\ $7.10\%$), replicating the pattern observed on natural-annotation benchmarks.  LS-TS approximately halves the ECE of TS on ENet-B4 ($18.72\%\to9.34\%$) using only voted labels; the improvement is smaller on ViT-S/16 ($17.04\%\to15.49\%$), where model confidence is a weaker proxy for annotation ambiguity.

\noindent\textbf{Annotator confusion model.}
Figure~\ref{fig:isic-confusion} shows the clinically calibrated $8\times8$ confusion matrix used to generate synthetic annotations for ISIC~2019.  The MEL/NV pair (melanoma/nevi, orange dashed) is the most consequential and most confused: diagonal rates of 73\%/76\% with 14\%/15\% cross-confusion, reflecting the well-documented difficulty of distinguishing early melanoma from benign nevi \cite{haenssle2018man}.  The AK/BKL/SCC cluster (purple dotted) forms a second high-confusion group (diagonal 65--67\%), capturing clinically significant overlap between actinic keratosis and squamous cell carcinoma.  The mean diagonal agreement $\bar{C}_{ii}=75\%$ matches the inter-reader agreement reported by Liu et al.~\cite{liu2020deep}.  Full matrix values and construction details are given in Appendix~\ref{app:isic-confusion}.

\begin{figure}[t]
  \centering
  \includegraphics[width=0.92\linewidth]{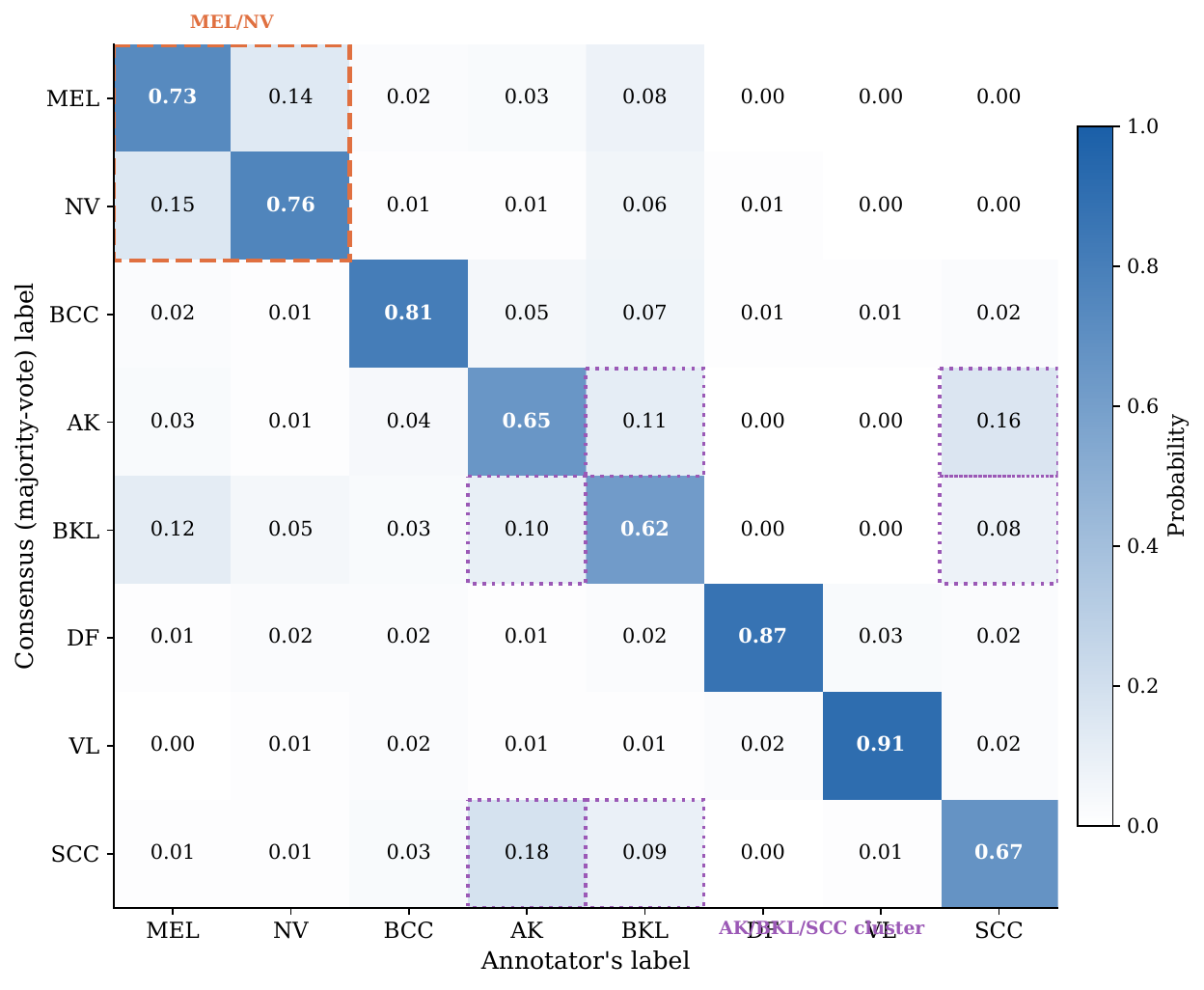}
  \caption{
    \textbf{Inter-reader confusion matrix for ISIC 2019.}
    Entry $C_{ij}$ is the probability that a dermatologist labels an image as
    class~$j$ given consensus label~$i$.
    Orange dashed box: MEL/NV most-confused pair ($73$--$76\%$ diagonal).
    Purple dotted: AK/BKL/SCC high-confusion cluster.
    Mean diagonal agreement $\bar{C}_{ii}=75\%$.
  }
  \label{fig:isic-confusion}
\end{figure}

% ─────────────────────────────────────────────────────────────────────────────
\subsection{DermaMNIST: Supplementary Skin Disease Experiment}
\label{sec:dermamnist}
% ─────────────────────────────────────────────────────────────────────────────

Table~\ref{tab:isic-main} (lower half) reports results on \textbf{DermaMNIST}~\cite{yang2023medmnist} (7-class, derived from HAM10000~\cite{tschandl2018ham10000}; $m=5$ synthetic annotators; overall agreement ${\approx}64.7\%$) with ResNet-18 and ViT-S/16.  TS leaves ECE $=22.3\%$ on ResNet-18; ambiguity-aware methods substantially close this gap: SLTS achieves $5.1\%$, IR-Soft $2.2\%$, and Dirichlet-Soft $3.9\%$, with Dirichlet-Soft also achieving the best cwECE ($1.3\%$), Brier ($0.612$), and NLL ($1.305$).  ViT-S/16 exhibits the same hierarchy (Dirichlet-Soft: $3.2\%$/$0.598$/$1.279$).  MCTS $S{=}1$ again closely approximates SLTS ($5.07\%$ vs.\ $5.13\%$ on ResNet-18; $3.28\%$ vs.\ $3.29\%$ on ViT-S/16).  Both backbones show $T_{\text{soft}}>T_{\text{TS}}>1$, consistent with Proposition~\ref{prop:ts-bias}.

DermaMNIST is the most ambiguous benchmark in our study, with mean annotator agreement of $64.7\%$ compared to $75\%$ for ISIC~2019, ${\approx}80\%$ for CIFAR-10H, and ${\approx}67\%$ for ChaosNLI.  Correspondingly, TS ECE$_\text{true}$ ($22.3\%$) is the highest across all benchmarks, reflecting the widest gap between the voted-label target and the true annotator distribution.  The Dirichlet-Hard calibrator, despite being the most expressive voted-label method (full $K\!\times\!K$ affine), worsens ECE relative to TS on ResNet-18 ($23.0\%$ vs.\ $22.3\%$) and provides no meaningful improvement on ViT-S/16 ($24.6\%$ vs.\ $24.7\%$), again confirming that expressive architecture cannot compensate for a miscalibrated target.  LS-TS reduces ECE by $77\%$ relative to TS on ResNet-18 ($22.3\%\to 5.1\%$), the largest relative improvement of the annotation-free method across all settings, because the high level of ambiguity means model confidence (complement 35.3\%) correlates well with the fraction of annotators choosing the minority class.

% ─────────────────────────────────────────────────────────────────────────────
\subsection{Cross-Benchmark Analysis}
\label{sec:cross-benchmark}
% ─────────────────────────────────────────────────────────────────────────────

Aggregating the results across all four benchmarks and eight architecture configurations, four consistent findings emerge that together characterise the role of the calibration target under label ambiguity.

\noindent\textbf{(1) Target, not capacity, is the bottleneck.}
Dirichlet-Hard, the most expressive voted-label calibrator, consistently \emph{underperforms} simple TS in both ECE and Brier (e.g., 4.46\% vs.\ 4.29\% on CIFAR-10H R50; 11.55\% vs.\ 10.55\% on ChaosNLI RoBERTa-L; see Tables~\ref{tab:cifar} and~\ref{tab:isic-main}).  This rules out architectural capacity as the bottleneck and directly supports the claim that calibration target is limiting.
\noindent\textbf{(2) Ambiguity-aware methods consistently dominate.}
Dirichlet-Soft improves ECE by 71--84\% relative to TS on the two natural-annotation benchmarks (CIFAR-10H and ChaosNLI) and by 55--87\% on the medical benchmarks (ISIC~2019 and DermaMNIST).  Dirichlet-Soft consistently achieves the best overall calibration quality, making it the recommended default whenever annotator distributions are available.
\noindent\textbf{(3) A single annotation per example suffices.}
MCTS $S{=}1$ matches SLTS within 0.6~pp ECE in all 8 settings.  The aggregated annotation distribution provides no measurable additional benefit once individual annotator labels are available: any dataset annotated once per example by a randomly selected annotator is immediately amenable to the same calibration quality as full multi-annotator annotation.
\noindent\textbf{(4) LS-TS is effective but dataset-dependent.}
LS-TS substantially reduces ECE without any annotator data in all settings except ISIC ViT-S/16 (where the LS-TS temperature exceeds the oracle soft-label temperature, $T_\text{LS}=3.98 > T_\text{SLTS}=2.68$, indicating that model confidence over-estimates annotation ambiguity for this architecture--dataset pair).
This divergence highlights the assumption underlying LS-TS: that model uncertainty is a reliable proxy for annotation disagreement, an assumption that holds broadly but can fail when the pre-softmax logit magnitude is not well correlated with the fraction of annotators that choose the minority class.

% ─────────────────────────────────────────────────────────────────────────────
\subsection{Reliability Diagrams}
\label{sec:reliability}
% ─────────────────────────────────────────────────────────────────────────────

Figures~\ref{fig:reliability-main} and~\ref{fig:reliability-chaosnli-main} present reliability diagrams for the natural-annotation benchmarks (CIFAR-10H and ChaosNLI); Figures~\ref{fig:reliability-isic-main} and~\ref{fig:reliability-derm-main} extend the comparison to the medical imaging benchmarks.  The visual pattern is consistent across all four datasets: Uncalibrated and TS outputs are systematically overconfident (red-shaded residuals), with TS reducing but not eliminating the gap.  Soft-label temperature methods (MCTS/SLTS) substantially correct the overconfidence, and Dirichlet-Soft closes most of the remaining residual.  These diagrams provide a qualitative complement to the quantitative ECE metrics reported in Tables~\ref{tab:cifar} and~\ref{tab:isic-main}.  Complete panels covering all thirteen methods across all eight architecture configurations are provided in Appendix~\ref{app:reliability}.

\begin{figure*}[t]
  \centering
  \includegraphics[width=\linewidth]{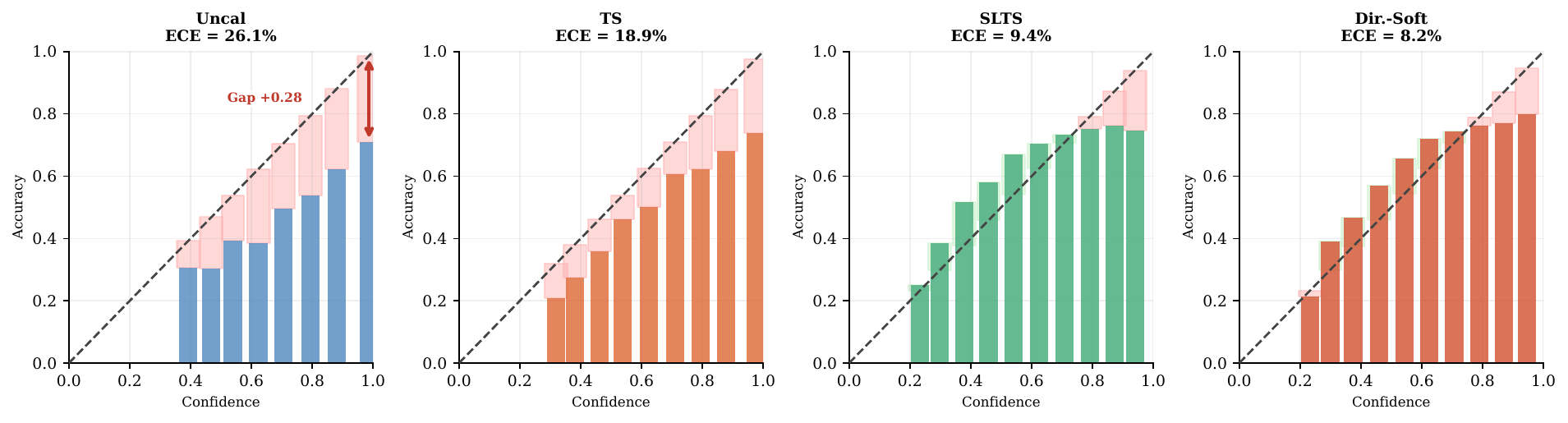}
  \caption{
    \textbf{Reliability diagrams: ISIC 2019 EfficientNet-B4 (ECE$_\text{true}$).}
    Red shading = overconfidence; green = underconfidence.
    Hard-label baselines remain severely overconfident at 18--26\% ECE; ambiguity-aware methods substantially correct this, with IR-Soft achieving the lowest ECE.
  }
  \label{fig:reliability-isic-main}
\end{figure*}

\begin{figure*}[t]
  \centering
  \includegraphics[width=\linewidth]{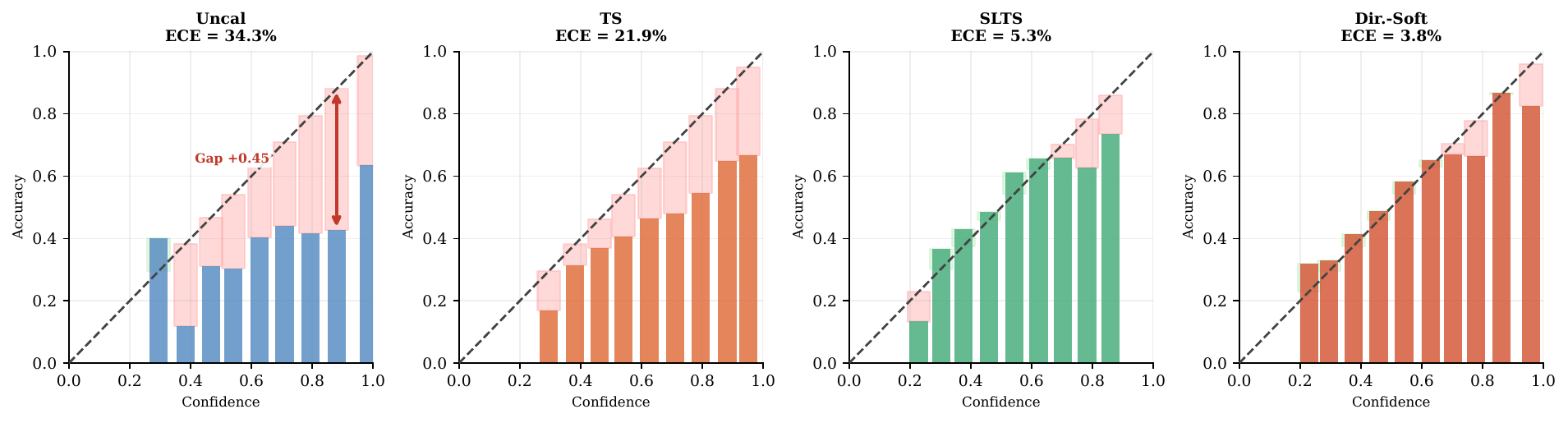}
  \caption{
    \textbf{Reliability diagrams: DermaMNIST ResNet-18 (ECE$_\text{true}$).}
    Same layout as Figure~\ref{fig:reliability-main}.
    TS leaves 22\% ECE; Dirichlet-Soft reduces it to 3.9\% while achieving the best Brier/NLL.
  }
  \label{fig:reliability-derm-main}
\end{figure*}

% ─────────────────────────────────────────────────────────────────────────────
\subsection{Ablation: LS-TS Smoothing Strategies}
\label{sec:ablation-lsts}
% ─────────────────────────────────────────────────────────────────────────────

Table~\ref{tab:lsts-ablation} compares LS-TS against three simpler annotation-free smoothing strategies on CIFAR-10H and ChaosNLI.  All methods use the same single-temperature family; only the pseudo-target construction differs.

\begin{table*}[t]
  \centering
  \caption{
    \textbf{LS-TS ablation: annotation-free smoothing strategies.}
    ECE$_\text{true}$ (\%), Brier score, and NLL on CIFAR-10H and ChaosNLI.
    All methods use the same single-temperature family; only the pseudo-target construction differs.
    Fixed-LS uses $\varepsilon{=}0.1$ regardless of the data; Ent-LS sets $\varepsilon_i{=}H(\hat{p}(x_i))/\log K$ per instance; CC-LS computes a per-class smoothing weight $\varepsilon_k{=}\mathrm{mean}_{i:\,y^*_i=k}(1-\hat{p}_{y^*}(x_i))$.
    Best annotation-free method per column in \textbf{bold}.
  }
  \label{tab:lsts-ablation}
  \small
  \setlength{\tabcolsep}{4.5pt}
  \begin{tabular}{l ccc ccc ccc ccc}
    \toprule
    & \multicolumn{3}{c}{\textbf{CIFAR R50}} & \multicolumn{3}{c}{\textbf{CIFAR ViT}} & \multicolumn{3}{c}{\textbf{Chaos RoBERTa}} & \multicolumn{3}{c}{\textbf{Chaos DeBERTa}} \\
    \cmidrule(lr){2-4}\cmidrule(lr){5-7}\cmidrule(lr){8-10}\cmidrule(lr){11-13}
    Method & ECE & Br & NLL & ECE & Br & NLL & ECE & Br & NLL & ECE & Br & NLL \\
    \midrule
    TS (voted) & 3.90 & .1126 & .346 & 4.01 & .1043 & .320 & 10.55 & .537 & .886 & 11.63 & .549 & .900 \\
    \midrule
    Fixed-LS ($\varepsilon{=}0.1$) & 7.44 & .1207 & .332 & 6.00 & .1058 & .305 & 6.46 & .525 & .866 & 7.53 & .536 & .882 \\
    Ent-LS  & 3.16 & .1116 & .321 & 3.69 & .1037 & .308 & \textbf{3.17} & \textbf{.520} & \textbf{.862} & 8.58 & .539 & .885 \\
    CC-LS   & \textbf{1.48} & \textbf{.1111} & \textbf{.296} & 1.69 & \textbf{.1016} & \textbf{.279} & 4.15 & .522 & .873 & \textbf{2.54} & \textbf{.529} & \textbf{.880} \\
    LS-TS (ours) & 1.49 & \textbf{.1111} & \textbf{.296} & \textbf{1.62} & \textbf{.1016} & \textbf{.279} & 4.16 & .522 & .873 & 2.65 & .529 & .881 \\
    \bottomrule
  \end{tabular}
\end{table*}

Fixed-LS with $\varepsilon{=}0.1$ consistently \emph{under}performs or even \emph{degrades} below TS (e.g., ECE 7.44\% vs.\ 3.90\% on CIFAR-10H ResNet-50), confirming that an arbitrary fixed smoothing weight is insufficient.  Ent-LS (per-instance entropy-based smoothing) is inconsistent across settings: it collapses on DeBERTa-v3 (8.58\%), suggesting prediction entropy is a noisy proxy when models are strongly confident.  CC-LS computes a separate $\varepsilon_k$ for each class $k$ using the mean complement-confidence of examples with voted label $k$; it matches LS-TS within 0.1~pp ECE on three of four settings (CIFAR-10H R50: 1.48\% vs.\ 1.49\%; ViT: 1.69\% vs.\ 1.62\%; ChaosNLI RoBERTa: 4.15\% vs.\ 4.16\%), confirming that the global mean is a good summary of per-class behaviour.  This near-equivalence is expected: because temperature scaling has only a single scalar parameter $T$, per-class differences in $\varepsilon_k$ are averaged out during optimisation, and both CC-LS and LS-TS converge to the same $T^*$ (e.g., both yield $T{=}3.19$ on CIFAR-10H R50).  The slight advantage of CC-LS on DeBERTa-v3 (2.54\% vs.\ 2.65\%) is within single-seed variance.

% ─────────────────────────────────────────────────────────────────────────────
\subsection{Robustness}
\label{sec:robustness}
% ─────────────────────────────────────────────────────────────────────────────

We assess robustness along three dimensions, summarised in Table~\ref{tab:multiseed} and Figure~\ref{fig:robustness-main}.

\noindent\textbf{Multi-seed stability.}
Standard deviations of ECE$_\text{true}$ over five random seeds are uniformly $2{\times}$--$10{\times}$ smaller than the ECE gaps between methods, confirming that the reported rankings are not artefacts of a particular data split or annotation draw.  Full per-architecture bar charts are provided in Appendix~\ref{app:multiseed}.

\noindent\textbf{Calibration set size.}
Ambiguity-aware methods stabilise below 2\% ECE on CIFAR-10H using as few as 5--10\% of the calibration data (${\approx}250$ examples), while TS remains flat regardless of sample size.  Full calibration-set-size curves for all benchmarks are given in Appendix~\ref{app:calset-size}.

\noindent\textbf{Annotation count.}
Ambiguity-aware calibration is robust once $m\geq 2$ annotations per image: increasing from 2 to 51 annotations reduces soft-label ECE$_\text{true}$ (MCTS/SLTS) by less than 0.3~pp.  TS ECE$_\text{true}$, by contrast, \emph{increases} with $m$, because the annotator distribution diverges further from the one-hot voted label as the underlying consensus strengthens.  Full annotation-count results are reported in Appendix~\ref{app:annot-count}.

\begin{table}[t]
  \centering
  \caption{
    \textbf{Multi-seed stability.}
    Mean $\pm$ std of ECE$_\text{true}$ (\%) over 5 seeds.
    For CIFAR-10H/ChaosNLI the seed varies the cal/test split;
    for ISIC/DermaMNIST it varies the annotation sample.
  }
  \label{tab:multiseed}
  \scriptsize
  \setlength{\tabcolsep}{2.5pt}
  \resizebox{\linewidth}{!}{%
  \begin{tabular}{llcccc}
    \toprule
    Dataset & Arch & TS & SLTS & Dir.-Soft & IR-Soft \\
    \midrule
    CIFAR-10H   & R50        & $4.25\pm0.25$ & $1.61\pm0.12$ & $1.37\pm0.09$ & $0.89\pm0.12$ \\
    CIFAR-10H   & ViT-B/16   & $4.39\pm0.20$ & $0.83\pm0.14$ & $0.74\pm0.07$ & $0.88\pm0.16$ \\
    ChaosNLI    & RoBERTa-L  & $11.54\pm0.54$ & $3.85\pm0.34$ & $2.90\pm0.14$ & $2.40\pm0.18$ \\
    ChaosNLI    & DeBERTa-v3 & $12.73\pm0.60$ & $4.41\pm0.54$ & $3.80\pm0.35$ & $2.61\pm0.41$ \\
    ISIC 2019   & ENet-B4    & $18.84\pm0.17$ & $9.80\pm0.22$ & $8.45\pm0.20$ & $1.79\pm0.16$ \\
    ISIC 2019   & ViT-S/16   & $16.93\pm0.16$ & $7.30\pm0.13$ & $6.20\pm0.19$ & $2.05\pm0.13$ \\
    DermaMNIST  & R18        & $22.43\pm0.28$ & $5.08\pm0.25$ & $3.53\pm0.39$ & $2.33\pm0.17$ \\
    DermaMNIST  & ViT-S/16   & $24.81\pm0.08$ & $3.61\pm0.28$ & $3.11\pm0.21$ & $2.09\pm0.14$ \\
    \bottomrule
  \end{tabular}}
\end{table}

\begin{figure}[t]
  \centering
  \includegraphics[width=\linewidth]{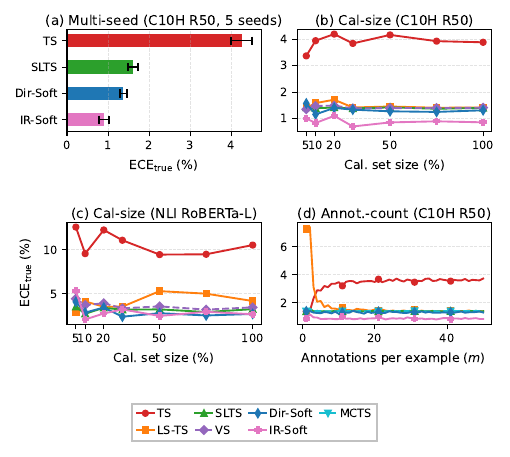}
  \caption{
    \textbf{Robustness of ambiguity-aware calibration.}
    \textbf{(a)}~Multi-seed stability (CIFAR-10H R50, 5 seeds).
    \textbf{(b)--(c)}~ECE$_\text{true}$ vs.\ calibration set size on CIFAR-10H R50 and ChaosNLI RoBERTa-L.
    \textbf{(d)}~ECE$_\text{true}$ vs.\ annotation count (CIFAR-10H R50).
    Rankings are stable across seeds, cal-set sizes, and annotation counts.
    Full results for all benchmarks are in Appendices~\ref{app:multiseed}, \ref{app:calset-size}, and~\ref{app:annot-count}.
  }
  \label{fig:robustness-main}
\end{figure}

% ─────────────────────────────────────────────────────────────────────────────
\section{Discussion}
\label{sec:discussion}
% ─────────────────────────────────────────────────────────────────────────────

\noindent\textbf{Trade-off between voted-label and true-label calibration.}
A natural question is whether optimising for true-label calibration degrades $\ECEvoted$.  This trade-off is expected: voted-label calibration becomes epistemically unreliable when annotators genuinely disagree, because the voted label systematically overstates the majority-class probability.  In practice, this trade-off is rarely problematic, since the scenarios where $\ECEvoted$ matters (deterministic, unambiguous labels) are precisely those where $\ECEvoted\approx\ECEtrue$ and the two objectives coincide.  For deployment contexts that require strict $\ECEvoted$ guarantees, a two-stage approach is possible: apply ambiguity-aware calibration first, then apply standard TS on the outputs to re-optimise $\ECEvoted$.

\noindent\textbf{Practical guidelines.}
The choice of calibration method depends on what annotation information is available at calibration time.  When the full annotator distribution is available, Dirichlet-Soft is recommended as the default: it consistently achieves the best overall calibration quality; SoftPlatt is a competitive alternative for NLI tasks, and IR-Soft should be preferred when the sole objective is minimising ECE.  When individual per-example annotations are stored but have not been aggregated into distributions, MCTS $S{=}1$ provides an effective solution: a single randomly drawn annotation per calibration example yields ECE comparable to the full distribution, as demonstrated consistently across all eight settings.  Finally, when only voted labels are available, LS-TS constructs a data-driven soft pseudo-target from the model's own confidence and reduces ECE by 9--77\% relative to Temperature Scaling without requiring any annotator data.

\noindent\textbf{Limitations.}
Full ambiguity-aware calibration (including Dirichlet-Soft) requires multi-annotator labels at calibration time.  When only voted labels are available, LS-TS remains useful but does not fully match Dirichlet-Soft, particularly on datasets where model confidence diverges from annotation ambiguity (e.g., ISIC~2019 ViT-S/16).  ISIC~2019 and DermaMNIST rely on synthetic annotators derived from clinician-reported agreement.  Crucially, this is a \emph{class-conditional} model: annotation confusion depends only on the consensus class label, not on instance-level visual difficulty.  Real multi-reader medical datasets with per-image reader distributions would provide stronger validation and would capture the instance-level variation our synthetic model does not; concrete candidates include VinDr-CXR \cite{nguyen2022vindr} (3 independent radiologists per chest X-ray, 15k images) and CheXpert \cite{irvin2019chexpert} (5 independent radiologist annotations on the test set), both of which provide true per-image annotator distributions directly amenable to our framework.  Conclusions about absolute ECE magnitudes on these two datasets should be interpreted with this in mind; the relative ranking of methods is expected to be robust to the specific confusion matrix used, as confirmed by the multi-seed annotation-generation ablation (Appendix~\ref{app:multiseed}).

% ─────────────────────────────────────────────────────────────────────────────
\section{Conclusion}
\label{sec:conclusion}
% ─────────────────────────────────────────────────────────────────────────────

This paper has formalised the problem of confidence calibration under ambiguous ground truth and established, both theoretically and empirically, that standard post-hoc calibrators fitted on voted labels are systematically miscalibrated against the underlying annotator distribution.  The central insight, confirmed by four complementary ablations, is that the calibration \emph{target}, not the method's architectural capacity, is the limiting factor.  Dirichlet calibration with voted-label targets consistently underperforms simple Temperature Scaling; per-instance temperature adaptation trained with voted-label supervision yields no improvement; yet the same global-temperature architecture, when supplied with the correct distributional target, substantially closes the gap.

Our proposed methods span three annotation regimes without requiring model retraining.  Dirichlet-Soft, which leverages full annotator distributions, reduces true-label ECE by 55--87\% relative to Temperature Scaling and achieves the best overall calibration quality across settings.  MCTS $S{=}1$ demonstrates that a single randomly drawn annotation per calibration example is sufficient to match full-distribution calibration within 0.6~pp ECE across all benchmarks, a finding with immediate practical implications for annotation protocols, as it suggests that annotating each calibration example once by a different randomly assigned annotator suffices.  LS-TS reduces ECE by 9--77\% using only voted labels.  Together, these results suggest that future calibration benchmarks and deployment standards should move beyond the voted-label paradigm and account for the distributional nature of human annotation.

\bibliographystyle{IEEEtran}
\bibliography{references}

% ─────────────────────────────────────────────────────────────────────────────
% Author Biographies
% To add a photo: replace figs/photo_placeholder.jpg with e.g. figs/photo_linwei.jpg
% ─────────────────────────────────────────────────────────────────────────────

\begin{IEEEbiography}[{\includegraphics[width=1in,height=1.25in,clip,keepaspectratio]{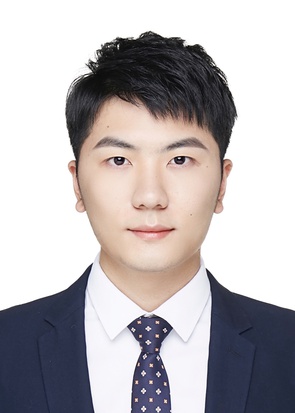}}]{Linwei Tao}
received the B.Eng.\ degree in communication engineering from Huazhong University of Science and Technology, China, in 2017, the M.Sc.\ degree in data science from the University of Sydney, Australia, in 2021, and the M.Phil.\ degree in computer vision from the University of Sydney in 2023.
He is currently pursuing the Ph.D.\ degree in computer vision at the University of Sydney, supervised by A/Prof.\ Chang Xu.
He is also a Research Scientist Intern at Google Research Australia.
His research interests include confidence calibration and uncertainty estimation in deep learning and large vision--language models.
He has published papers at top-tier venues including ICML, CVPR, ICLR, and AAAI.
He serves as a reviewer for NeurIPS, ICML, ICLR, CVPR, AAAI, and IJCAI, and as a journal reviewer for \textit{IEEE Transactions on Multimedia}, \textit{Transactions on Machine Learning Research}, \textit{Data Mining and Knowledge Discovery}, and \textit{IEEE Transactions on Pattern Analysis and Machine Intelligence}.
He is a Student Committee Member of AAAI 2026.
\end{IEEEbiography}
\begin{IEEEbiography}[{\includegraphics[width=1in,height=1.25in,clip,keepaspectratio]{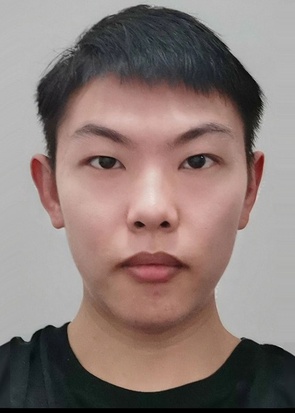}}]{Haoyang Luo}
received the B.S.\ degree (2021) and M.S.\ degree (2024) in computer science from the Harbin Institute of Technology, Shenzhen, China.
He is currently working toward the Ph.D.\ degree in computer science with the City University of Hong Kong, supervised by Dr.\ Minjing Dong.
His research interests lie in multi-modal trustworthy learning, uncertainty quantification, and model calibration.
He has published his research in top-tier conferences and journals, including NeurIPS, ICML, and \textit{IEEE Transactions on Knowledge and Data Engineering}.
\end{IEEEbiography}
\begin{IEEEbiography}[{\includegraphics[width=1in,height=1.25in,clip,keepaspectratio]{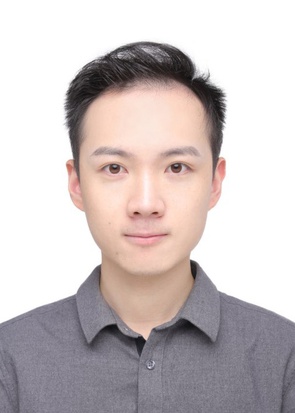}}]{Minjing Dong}
received the B.S.\ degree in software engineering from the Dalian University of Technology, the B.S.\ degree in information technology from the University of Sydney, the M.Phil.\ degree in engineering and information technology from the University of Sydney, and the Ph.D.\ degree in engineering and information technology from the University of Sydney.
He is currently an Assistant Professor with the Department of Computer Science, City University of Hong Kong.
His research interests include adversarial robustness, model calibration, efficient neural networks, human motion analytics, and generative models.
He has published more than 30 papers in top-tier conferences and journals, including NeurIPS, ICML, CVPR, AAAI, ICLR, \textit{IEEE Transactions on Pattern Analysis and Machine Intelligence}, \textit{IEEE Transactions on Neural Networks and Learning Systems}, \textit{IEEE Transactions on Image Processing}, and \textit{IEEE Transactions on Multimedia}.
He received the AAAI 2023 Distinguished Paper Award.
He serves as Area Chair and PC Member of NeurIPS, ICML, CVPR, ICCV, and ECCV.
\end{IEEEbiography}
\begin{IEEEbiography}[{\includegraphics[width=1in,height=1.25in,clip,keepaspectratio]{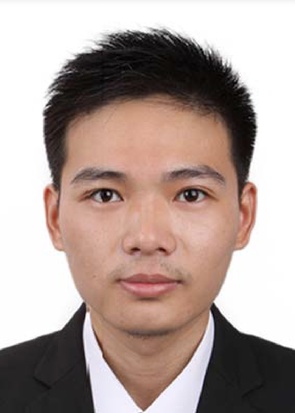}}]{Chang Xu}
(Senior Member, IEEE) received the Ph.D.\ degree from Peking University, China.
He is an ARC Future Fellow and Associate Professor with the School of Computer Science, University of Sydney.
He received the University of Sydney Vice-Chancellor's Award for Outstanding Early Career Research.
His research interests lie in machine learning algorithms and applications in computer vision.
He has published more than 100 papers in prestigious journals and top-tier conferences.
He has received several paper awards, including the Distinguished Paper Award at AAAI 2023 and the Distinguished Paper Award at IJCAI 2018.
He served as Area Chair of NeurIPS, ICML, ICLR, KDD, CVPR, and MM, as well as Senior PC Member of AAAI and IJCAI.
He served as Associate Editor of \textit{IEEE Transactions on Pattern Analysis and Machine Intelligence}, \textit{IEEE Transactions on Multimedia}, and \textit{Transactions on Machine Learning Research}.
He has been named a Top Ten Distinguished Senior PC Member at IJCAI 2017 and an Outstanding Associate Editor of \textit{IEEE Transactions on Multimedia} in 2022.
He is the corresponding author of this paper.
\end{IEEEbiography}

\clearpage

% ─────────────────────────────────────────────────────────────────────────────
\appendices

% ─────────────────────────────────────────────────────────────────────────────

\section{Dataset Background and Sources of Label Ambiguity}
\label{app:dataset-background}

All four benchmarks in this paper are appropriate for studying calibration under ambiguous ground truth, but they exhibit ambiguity in slightly different ways.

\noindent\textbf{CIFAR-10H.}
Peterson et al.~\cite{peterson2019human} introduced CIFAR-10H by collecting a full distribution of human labels for each image in the CIFAR-10 test set, explicitly to capture \emph{human perceptual uncertainty}.  The ambiguous examples are typically low-resolution images whose visual evidence supports multiple plausible classes for human observers.  In this benchmark, the label distribution is therefore directly observed from repeated human annotation, making it a clean testbed for ambiguity-aware calibration.

\noindent\textbf{ChaosNLI.}
Nie et al.~\cite{nie2020what} introduced ChaosNLI to study \emph{collective human opinions} on natural language inference.  Each example receives 100 human labels, and the paper reports that substantial disagreement is common rather than exceptional.  This ambiguity is semantic rather than perceptual: differences in pragmatic assumptions, underspecification, and sentence interpretation lead reasonable annotators to assign different NLI labels to the same premise--hypothesis pair.

\noindent\textbf{ISIC~2019.}
ISIC~2019 is a multiclass dermoscopic lesion diagnosis benchmark assembled from several clinical sources, including HAM10000 \cite{tschandl2018ham10000} and BCN20000 \cite{combalia2019bcn}.  The task itself is ambiguity-prone because several lesion categories share overlapping visual morphology, and the challenge was designed to reflect the more realistic task of differential diagnosis rather than a simpler benign-versus-malignant decision.  Because the public challenge release does not provide per-image reader distributions, we model ambiguity using synthetic annotators calibrated to dermatologist confusion patterns and reader agreement reported in Liu et al.~\cite{liu2020deep}.  This preserves the core property relevant to our study: in dermatology, a single image can support multiple clinically plausible labels.

\noindent\textbf{DermaMNIST.}
DermaMNIST \cite{yang2023medmnist} is a standardized $28\times28$ version of the dermatology benchmark derived from HAM10000 \cite{tschandl2018ham10000}.  HAM10000 consists of dermatoscopic images of common pigmented lesions collected in clinical practice, where diagnosis already involves fine-grained distinctions among visually similar categories such as melanoma, nevi, and benign keratoses.  DermaMNIST inherits this medical ambiguity from HAM10000 and, because it is aggressively downsampled for lightweight benchmarking, removes additional visual detail that can further increase label uncertainty.  As with ISIC~2019, we therefore evaluate calibration against an ambiguity-aware label distribution rather than treating the majority diagnosis as uniquely correct.

\section{Proof of Proposition~\ref{prop:ts-bias}}
\label{app:proof-ts}

Let $\mathcal{D}_\text{amb}=\{(x_i,y_i^*): y_i^*=c,\, x_i\in\mathcal{C}\}$ be the subset of calibration examples from the ambiguous cluster $\mathcal{C}$, where the voted label is always class $c$.  Denote $p_i=\operatorname{softmax}(z_i)_c$.

The TS loss on $\mathcal{D}_\text{amb}$ is $\mathcal{L}_{\text{TS}}(T;\mathcal{D}_\text{amb}) = -|\mathcal{D}_\text{amb}|^{-1}\sum_{i}\log \operatorname{softmax}(z_i/T)_c$.  As $T\to 0^+$, $\operatorname{softmax}(z_i/T)_c\to 1$ (assuming $z_{ic}>z_{ik}$ for all $k\ne c$), so $\mathcal{L}_{\text{TS}}\to 0$.  Hence $\partial\mathcal{L}_{\text{TS}}/\partial T>0$ for all $T>0$: decreasing $T$ always decreases the voted-label loss on the ambiguous cluster.  The globally optimal $T^*_{\text{TS}}$ balances this downward pull from ambiguous examples against the upward pull from unambiguous examples, but $T^*_{\text{TS}}\le T_{\text{no-amb}}^*$ (the optimal temperature without the ambiguous cluster): including ambiguous examples drives the optimum downward.

The cross-entropy against the annotator distribution on $\mathcal{D}_\text{amb}$ is $\mathcal{L}_{\text{soft}}(T;\mathcal{D}_\text{amb}) = -|\mathcal{D}_\text{amb}|^{-1}\sum_i\sum_k\pihat_k(x_i)\log\operatorname{softmax}(z_i/T)_k$, with $\pihat_c(x_i)=q<1$.  The minimum-loss $T^*_{\text{soft}}$ satisfies $\operatorname{softmax}(z_i/T^*_{\text{soft}})_c\approx q$, requiring $T^*_{\text{soft}}>T^*_{\text{TS}}$ whenever $p_i>q$.  Hence $T^*_{\text{TS}} < T^*_{\text{soft}}$ regardless of whether either optimum exceeds 1.  In the special case where the model is already overconfident on the ambiguous cluster ($p_i \gg q$), this gap is large enough that $T^*_{\text{TS}}<1$; in practice both optima exceed 1 (since the model is also underconfident on unambiguous examples), but the inequality $T^*_{\text{TS}} < T^*_{\text{soft}}$ is observed consistently across all experimental settings.  $\square$

\section{Proof of Proposition~\ref{prop:entropy-gap}}
\label{app:proof-entropy}

We formalise the argument given in the proof sketch in the main text.

\noindent\textbf{Setup.}
Let $p=\phat_{\hat{c}}(x)$ denote the model's predicted confidence for the top class $\hat{c}$ and let $q=\pi_{\hat{c}}(x)\in(0,1]$ denote the true annotator probability for that class.  We assume the model is trained against voted labels, so $p$ is approximately constant across examples with the same voted label regardless of their annotation entropy $H(x)$.

\noindent\textbf{Voted-label calibration error contribution.}
For example $x_i$ in confidence bin $B_b$ (where $\overline{p}(B_b)\approx p$), the per-example voted-label calibration error is $|p - \mathbf{1}[\hat{c}=y^*]|$.  Since $\hat{c}=y^*$ for correctly-classified ambiguous examples (the model predicts the majority class), this contribution is approximately $|p-1|$ for examples where the model makes the right hard prediction, independent of $H(x)$.

\noindent\textbf{True-label calibration error contribution.}
The per-example true-label calibration error is $|p - \pi_{\hat{c}}(x)|=|p-q|$.

\noindent\textbf{Excess true-label error relative to voted-label evaluation.}
The per-example excess contribution is:
\begin{align}
  \delta(x) &= |p - q| - |p - \mathbf{1}[\hat{c}=y^*]|.
\end{align}
When the model is correctly calibrated to voted labels ($p\approx\mathbf{1}[\hat{c}=y^*]$ in expectation), the voted contribution is near zero: $|p-1|\approx 0$ for high-accuracy bins and $|p-0|=p$ for error bins.  For unambiguous examples ($H(x)=0$), $q=\pi_{y^*}=1$, so $\delta(x)=|p-1|-|p-1|=0$.

\noindent\textbf{Monotonicity.}
As $H(x)$ increases, $q=\pi_{y^*}(x)$ decreases below 1 (more probability mass shifts to non-majority classes).  Specifically, for a $K$-class problem with uniform annotator distribution at maximum entropy, $q=1/K$.  The true-label contribution $|p-q|$ increases strictly as $q$ decreases from 1 (since $p>q$ for a well-trained model predicting the majority class with confidence exceeding the majority probability).  The voted-label contribution $|p-\mathbf{1}[\hat{c}=y^*]|$ is unaffected by $H(x)$.  Therefore $\delta(x)=|p-q|-|p-\mathbf{1}[\hat{c}=y^*]|$ is non-decreasing in $H(x)$, and strictly increasing whenever $p>q>0$.

\noindent\textbf{Aggregate monotonicity.}
The aggregate excess true-label error relative to voted-label evaluation equals $\sum_b\frac{|B_b|}{n}\sum_{i\in B_b}\delta(x_i)/|B_b|$.  Since $\delta(x_i)$ is non-decreasing in $H(x_i)$ and bins contain a mixture of examples, this discrepancy grows with the mean annotation entropy of the test set.  $\square$

\begin{remark}[Independence assumption]
Assumption~(i) (that $\phat_{\hat{c}}(x)$ is approximately independent of $H(x)$) may be violated in practice if harder examples (high $H(x)$) also have lower model confidence.  When the two are positively correlated, the per-example discrepancy term $\delta(x)$ grows even faster with $H(x)$ than the proof suggests, so Proposition~\ref{prop:entropy-gap} remains valid and the monotonicity is if anything \emph{conservative}.
\end{remark}

\section{Monte Carlo Temperature Scaling (MCTS)}
\label{app:mcts-convergence}

\noindent\textbf{Method.}
When individual annotation records are available rather than pre-aggregated label distributions, \textbf{MCTS} draws $S$ samples $a_{is}\sim\pihat(x_i)$ per example and minimises
\[
  \mathcal{L}_{\text{MCTS}}(T) = -\frac{1}{nS}\sum_{i=1}^n\sum_{s=1}^S \log\operatorname{softmax}(z_i/T)_{a_{is}}.
\]
MCTS subsumes the deterministic soft-label objective (SLTS, Eq.~\ref{eq:slts-loss} with $S\!\to\!\infty$) as a special case, but applies even when only raw annotation samples are stored rather than pre-computed frequency vectors.

\noindent\textbf{A single annotation suffices.}
The key practical implication is that \emph{even $S=1$} (a single randomly drawn annotation per calibration example) already achieves ECE$_\text{true}$ of 1.53\%, matching the deterministic limit ($S\!\to\!\infty$) to within rounding error and reducing TS's 4.29\% by $64\%$ on CIFAR-10H ResNet-50.  This one-annotation efficiency generalises across all benchmarks: across the 8 settings in Tables~\ref{tab:cifar} and~\ref{tab:isic-main}, MCTS $S{=}1$ matches SLTS within 0.6~pp ECE in every case.  On the natural-annotation benchmarks (CIFAR-10H, ChaosNLI), MCTS $S{=}1$ occasionally \emph{outperforms} the deterministic limit slightly (e.g., 2.82\% vs.\ 3.22\% ECE on ChaosNLI RoBERTa-L), consistent with the stochastic single-annotation objective providing mild implicit regularisation.  This means that any dataset where each example has been annotated even once by a randomly selected annotator is immediately amenable to strong ambiguity-aware calibration; pre-aggregating annotations into a distribution provides no measurable additional benefit.  Formally, since $\bE_{\hat{y}\sim\pihat}[\mathrm{CE}(\operatorname{softmax}(z/T),\hat{y})] = \KL(\pihat\|\operatorname{softmax}(z/T)) + H(\pihat)$, the MCTS objective converges to Eq.~\eqref{eq:slts-loss} in expectation as $S\to\infty$, with variance $\propto 1/S$.  Table~\ref{tab:mcts} (main text) confirms the fast per-seed convergence on CIFAR-10H ResNet-50: the ECE gap between $S=1$ and $S\to\infty$ is only $0.02$ pp; in practice $S=20$--$50$ is sufficient to reduce variance to negligible levels.

\section{Ablation: Calibration Set Size}
\label{app:calset-size}

We fix the annotation count at the full observed value and vary the fraction of calibration examples used, from 5\% to 100\% (stratified subsampling by class).  Results are shown for all four benchmarks and eight architectures.

\begin{figure*}[t]
  \centering
  \includegraphics[width=0.49\linewidth]{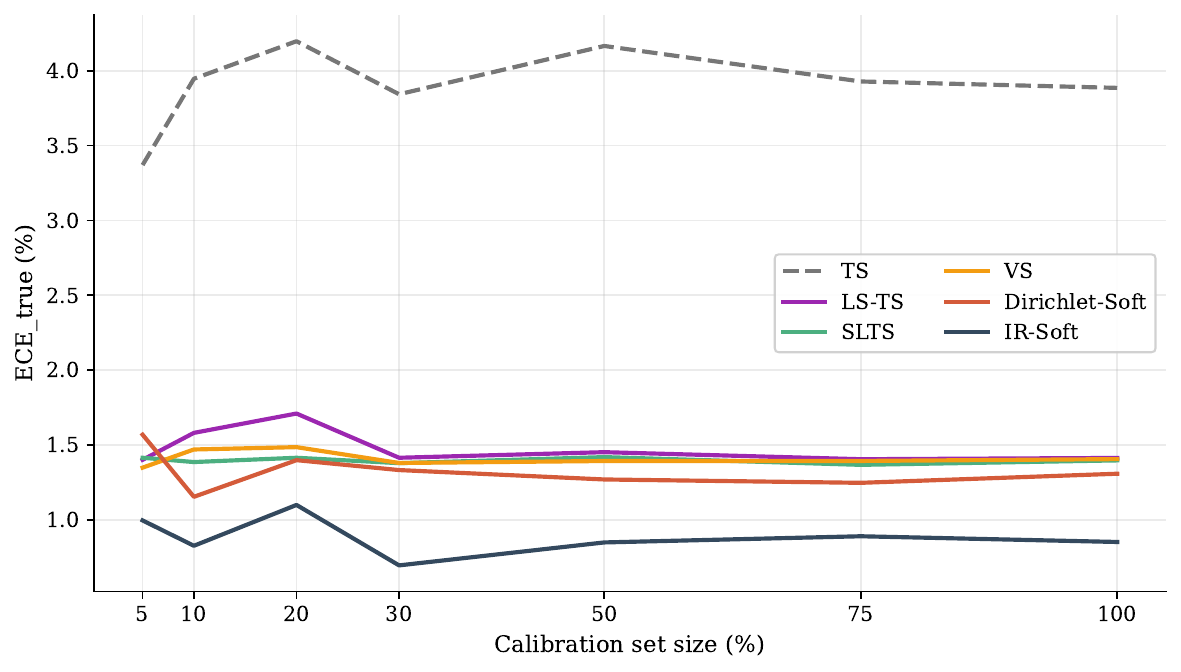}%
  \hfill%
  \includegraphics[width=0.49\linewidth]{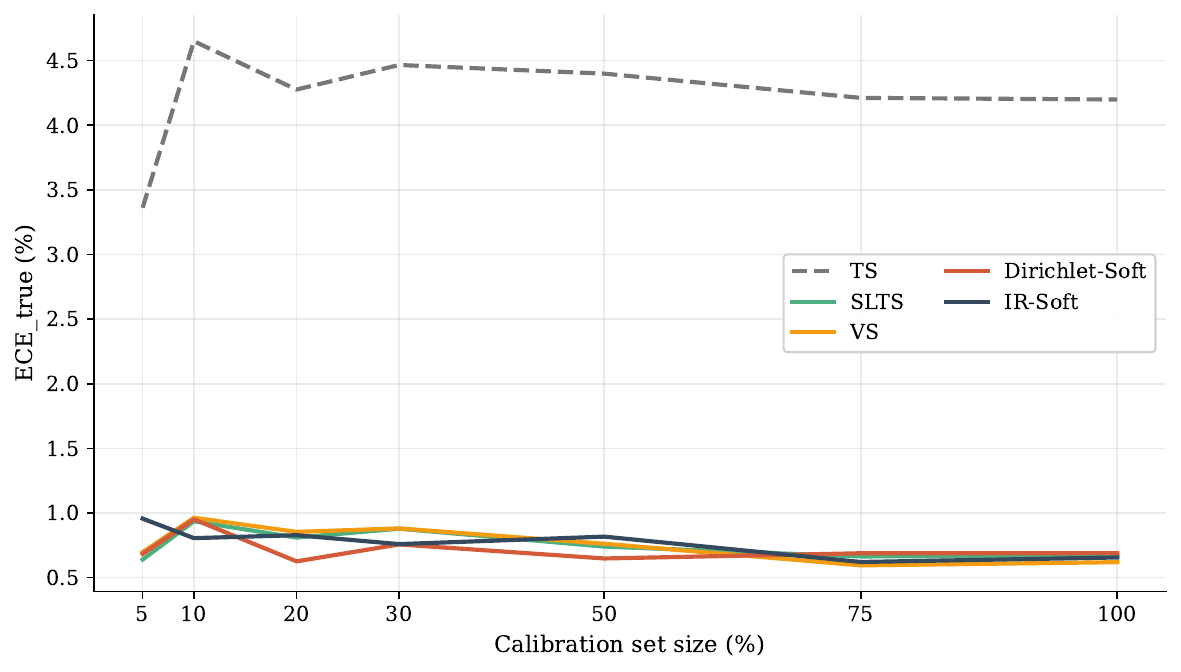}
  \caption{
    \textbf{Calibration-set-size ablation: CIFAR-10H.}
    Left: ResNet-50.  Right: ViT-B/16.
    TS's ECE is flat across all calibration set sizes; ambiguity-aware methods converge within 5--10\%.
  }
  \label{fig:calsize-cifar}
\end{figure*}

\begin{figure*}[t]
  \centering
  \includegraphics[width=0.49\linewidth]{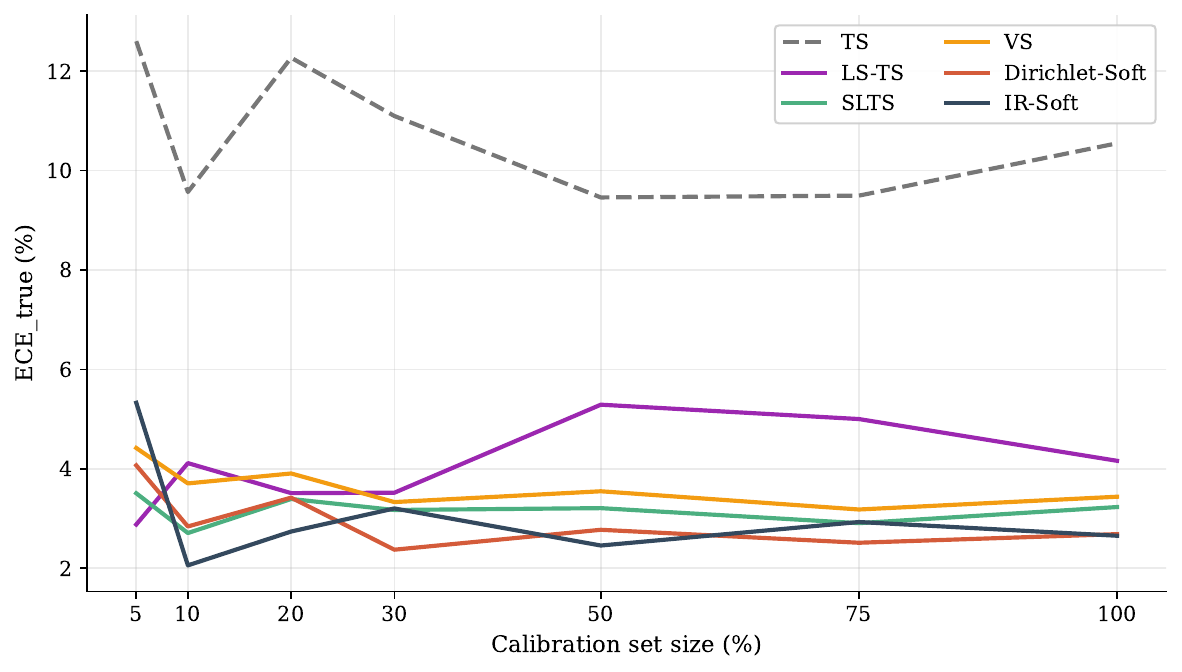}%
  \hfill%
  \includegraphics[width=0.49\linewidth]{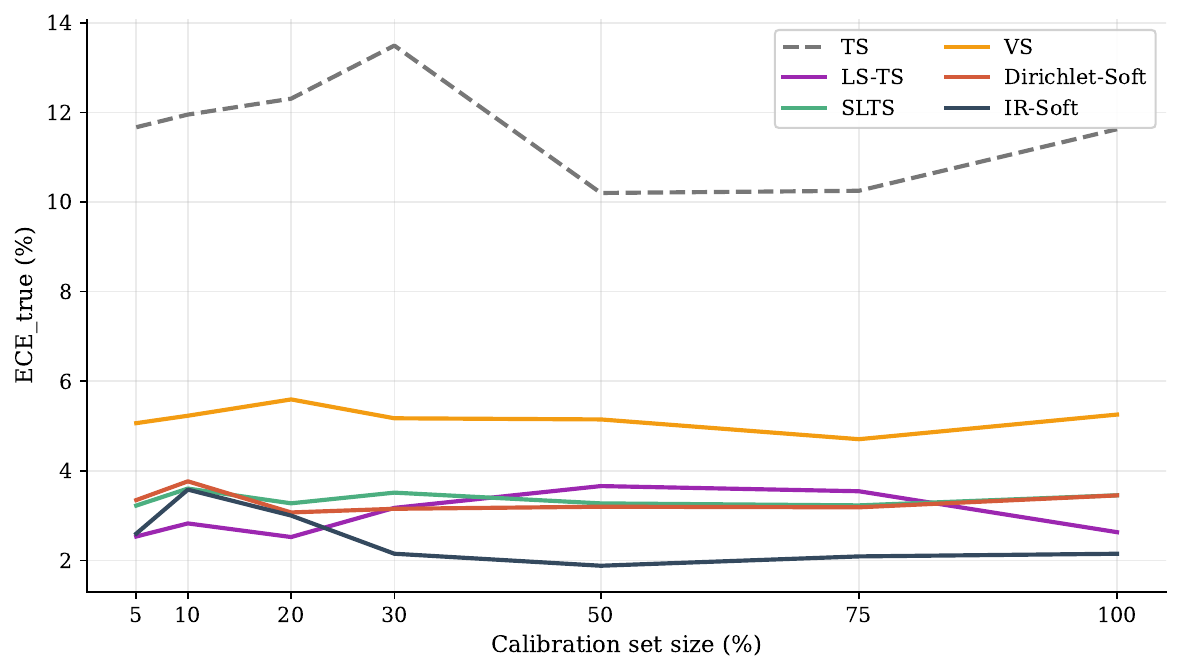}
  \caption{
    \textbf{Calibration-set-size ablation: ChaosNLI.}
    Left: RoBERTa-Large.  Right: DeBERTa-v3-base.
    NLI models show the same pattern: TS is flat near 10--13\% while soft methods stabilise near 3--4\%.
  }
  \label{fig:calsize-chaosnli}
\end{figure*}

\begin{figure*}[t]
  \centering
  \includegraphics[width=0.49\linewidth]{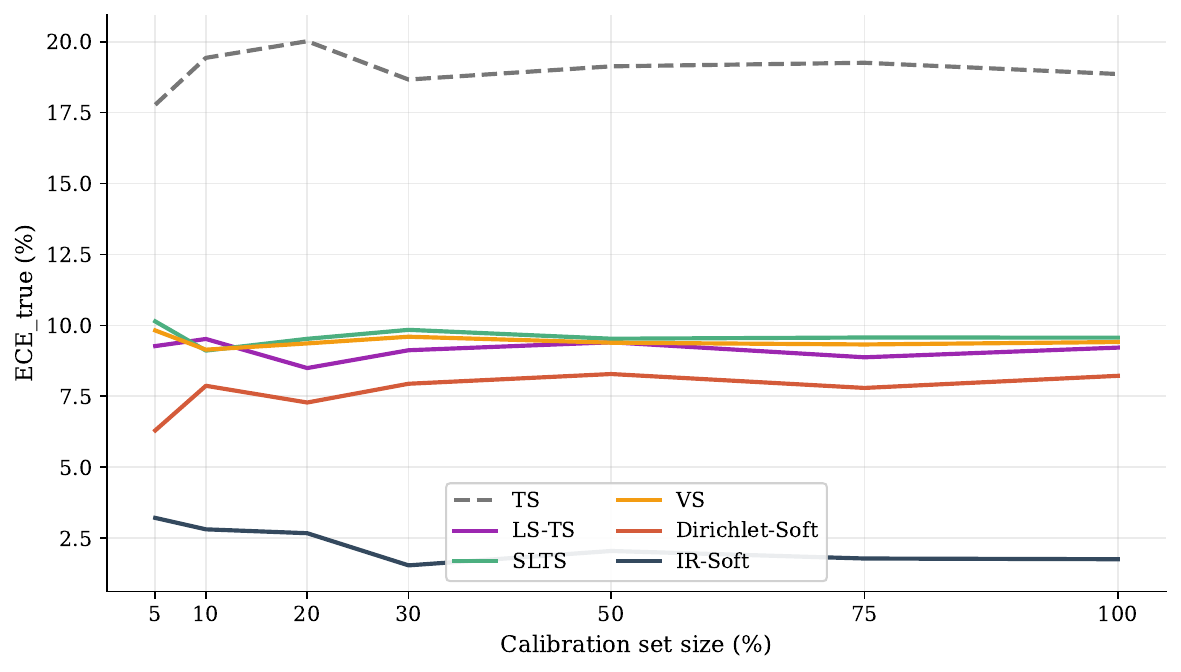}%
  \hfill%
  \includegraphics[width=0.49\linewidth]{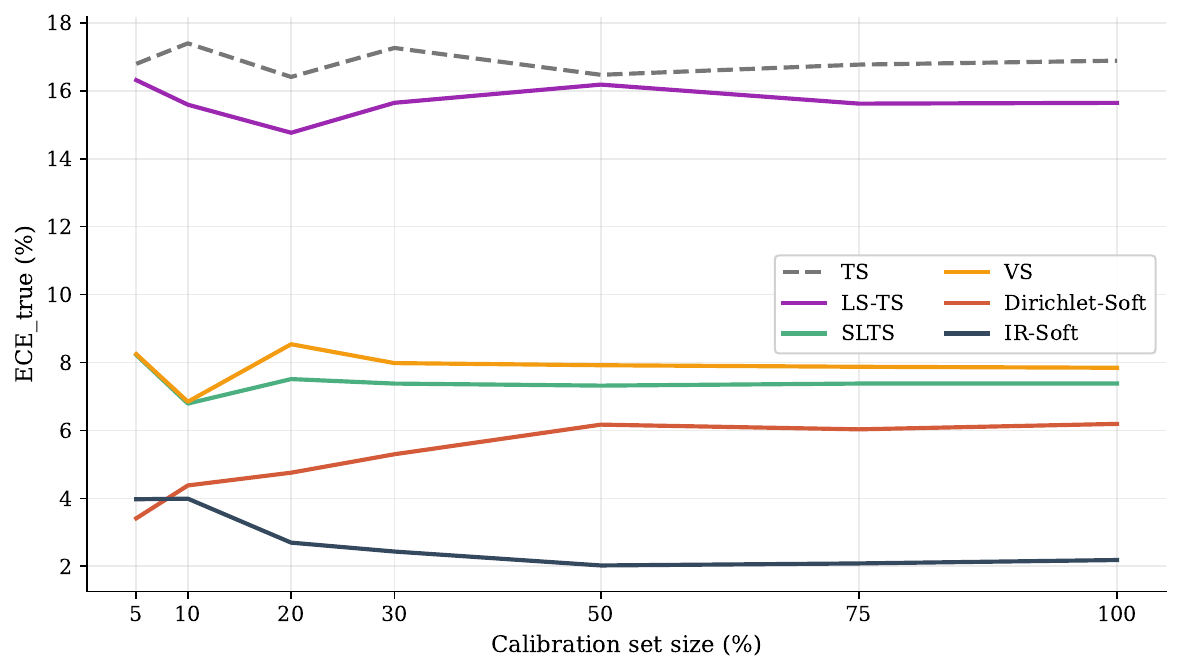}
  \caption{
    \textbf{Calibration-set-size ablation: ISIC 2019.}
    Left: EfficientNet-B4.  Right: ViT-S/16.
    IR-Soft achieves the lowest ECE across all calibration set sizes on both architectures.
  }
  \label{fig:calsize-isic}
\end{figure*}

\begin{figure*}[t]
  \centering
  \includegraphics[width=0.49\linewidth]{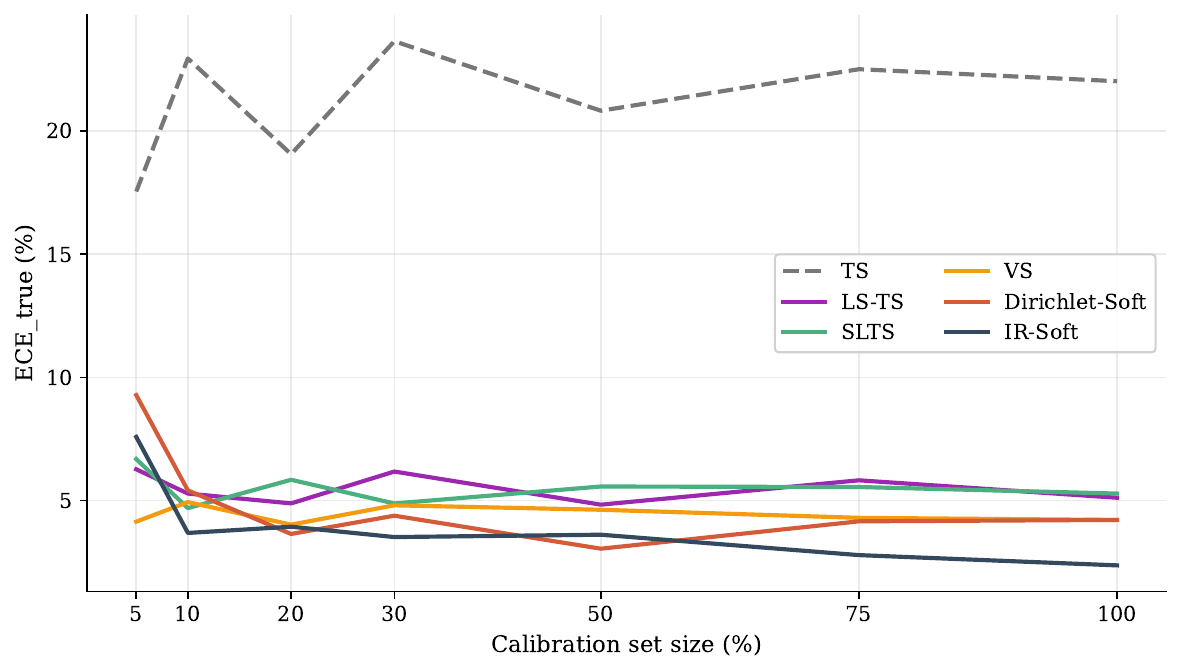}%
  \hfill%
  \includegraphics[width=0.49\linewidth]{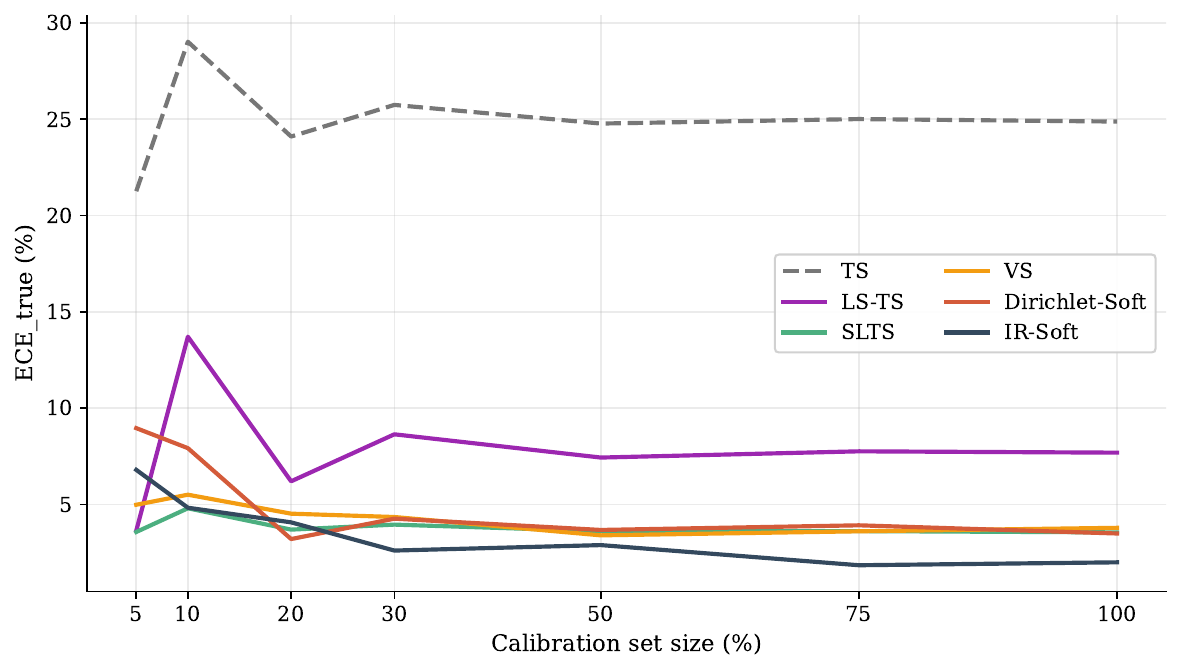}
  \caption{
    \textbf{Calibration-set-size ablation: DermaMNIST.}
    Left: ResNet-18.  Right: ViT-S/16.
    Ambiguity-aware methods converge quickly; TS remains flat regardless of calibration set size.
  }
  \label{fig:calsize-derm}
\end{figure*}

\noindent\textbf{Key findings.}
Across all four benchmarks and eight architectures, TS's ECE does not decrease with more calibration data, confirming that the degradation is a target bias rather than a variance artifact.  In contrast, soft-label methods (MCTS/SLTS) achieve stable low ECE from as few as 5--10\% of the calibration set, and IR-Soft and Dirichlet-Soft converge even faster on most benchmarks.  This suggests that the annotation distribution (not the number of calibration examples) is the limiting factor for ambiguity-aware methods.

\section{Ablation: Number of Annotations per Image}
\label{app:annot-count}

We study how performance depends on the number of annotations $m$ per calibration image.
We fix the calibration set size and the annotation generation seed, and sweep $m$ from $1$ to $50$, resampling the soft targets for each value of $m$.
Results are shown for CIFAR-10H, which provides up to ${\approx}51$ real annotations per image; we subsample from the full pool.

\begin{figure*}[t]
  \centering
  \includegraphics[width=0.49\linewidth]{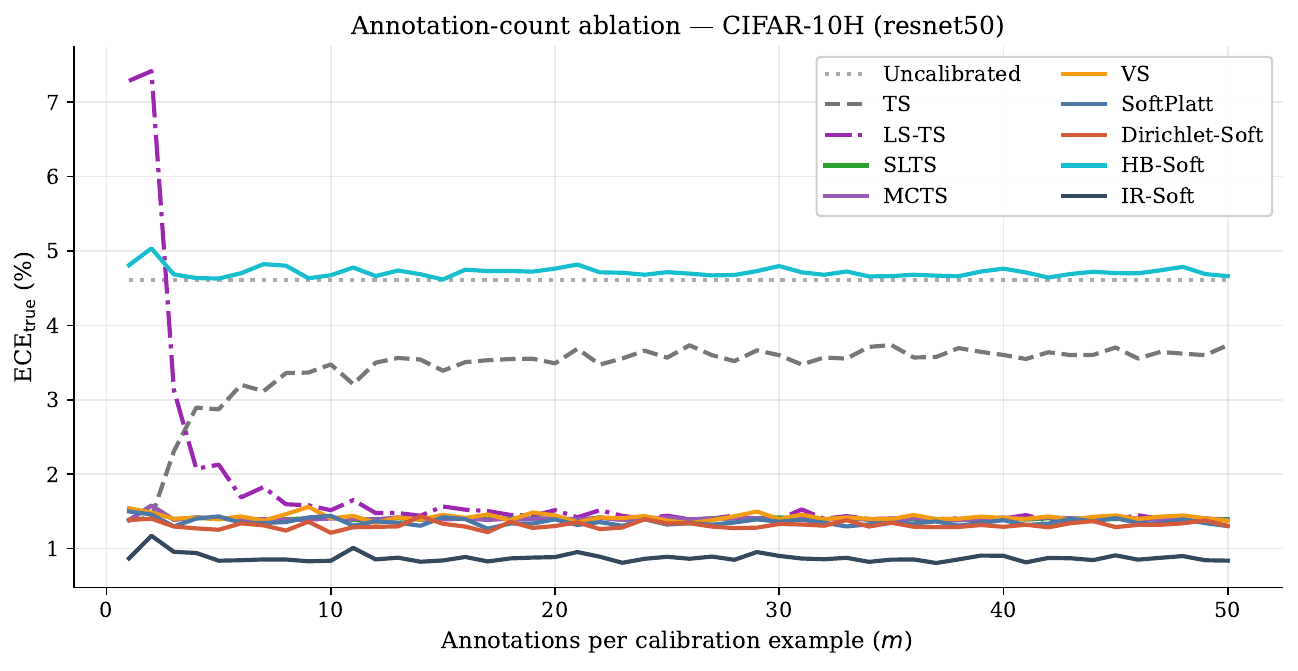}%
  \hfill%
  \includegraphics[width=0.49\linewidth]{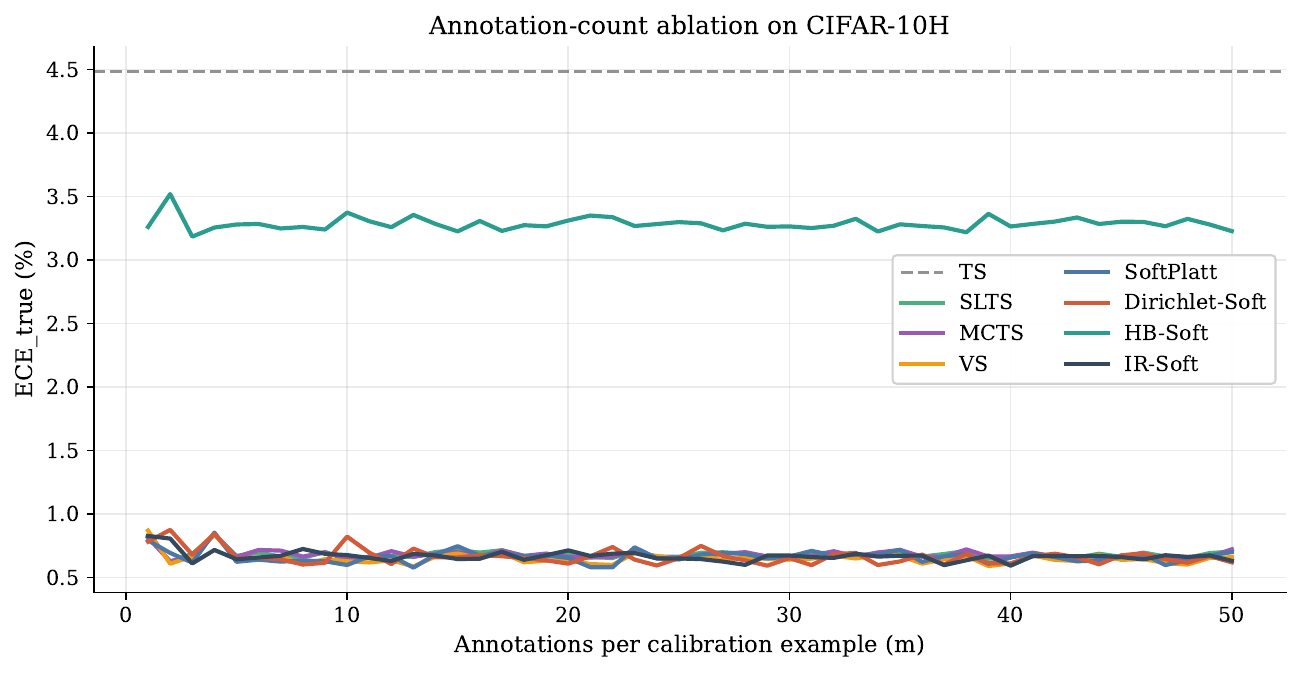}
  \caption{
    \textbf{Annotation-count ablation: CIFAR-10H (both architectures).}
    ECE$_\text{true}$ vs.\ annotations per image $m$ for ResNet-50 (left) and ViT-B/16 (right).
    Ambiguity-aware methods are stable for $m\geq 2$; TS rises with $m$ due to target mismatch; LS-TS converges by $m\approx 4$.
  }
  \label{fig:annot-count-cifar-app}
\end{figure*}

\noindent\textbf{Key findings.}
Ambiguity-aware calibration is robust to annotation count once $m\geq 2$: SLTS, Dirichlet-Soft, and IR-Soft achieve near-minimal ECE with only a handful of annotations per image and do not improve substantially beyond $m=5$.  LS-TS converges quickly (by $m\approx 4$) because the global smoothing parameter $\varepsilon$ stabilises once each image has at least a few annotations.  Standard Temperature Scaling shows the opposite trend: its ECE$_\text{true}$ \emph{increases} with $m$ because a higher temperature is required to match the more diffuse annotation distribution, exposing the fundamental target mismatch.

\section{Statistical Significance: Multi-Seed Analysis}
\label{app:multiseed}

For CIFAR-10H and ChaosNLI we repeat the evaluation with five random seeds (42--46), each producing a different stratified 50/50 cal/test split.  For ISIC~2019 and DermaMNIST the val/test split is predetermined, so we instead vary the annotation generation seed (42--46), testing sensitivity to the particular synthetic annotation sample.  Table~\ref{tab:multiseed} (main text, Robustness section) reports mean $\pm$ std of ECE$_\text{true}$ across seeds for all eight (dataset, arch) combinations.

Across all benchmarks and architectures, standard deviations are small relative to the ECE gaps: TS vs.\ ambiguity-aware methods differ by $2$--$10\times$ the standard deviation, confirming statistical robustness of the results in Tables~\ref{tab:cifar}--\ref{tab:derm}.  Bar charts for all architectures are shown below.

\begin{figure*}[t]
  \centering
  \includegraphics[width=0.49\linewidth]{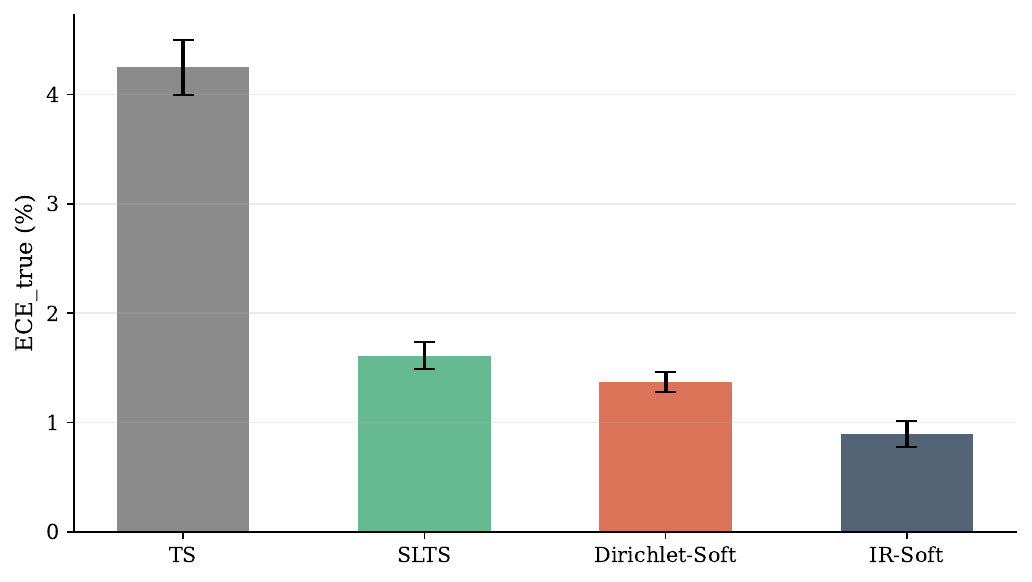}%
  \hfill%
  \includegraphics[width=0.49\linewidth]{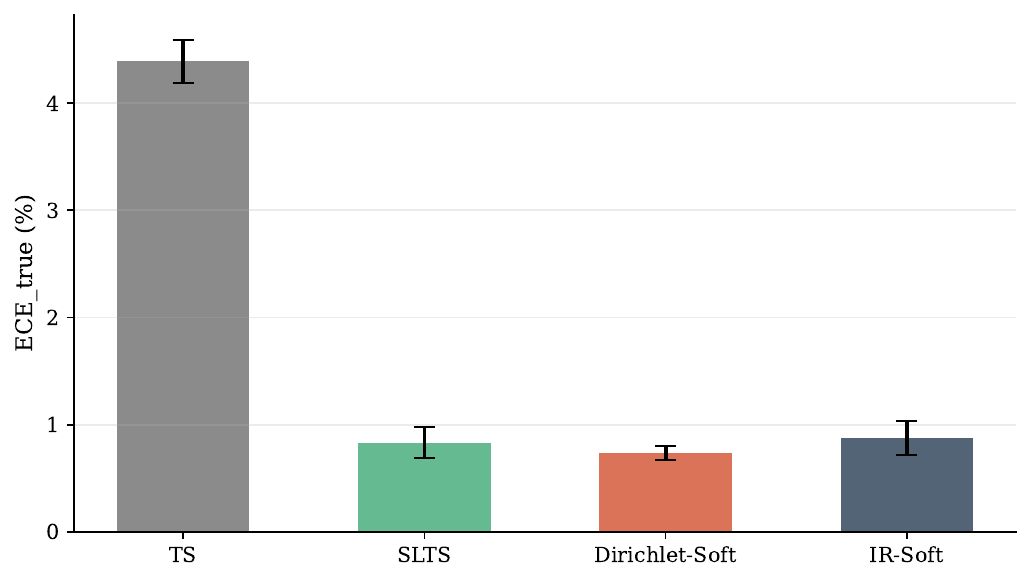}
  \caption{
    \textbf{Multi-seed bar charts: CIFAR-10H.}
    Left: ResNet-50.  Right: ViT-B/16.
  }
  \label{fig:multiseed-cifar}
\end{figure*}

\begin{figure*}[t]
  \centering
  \includegraphics[width=0.49\linewidth]{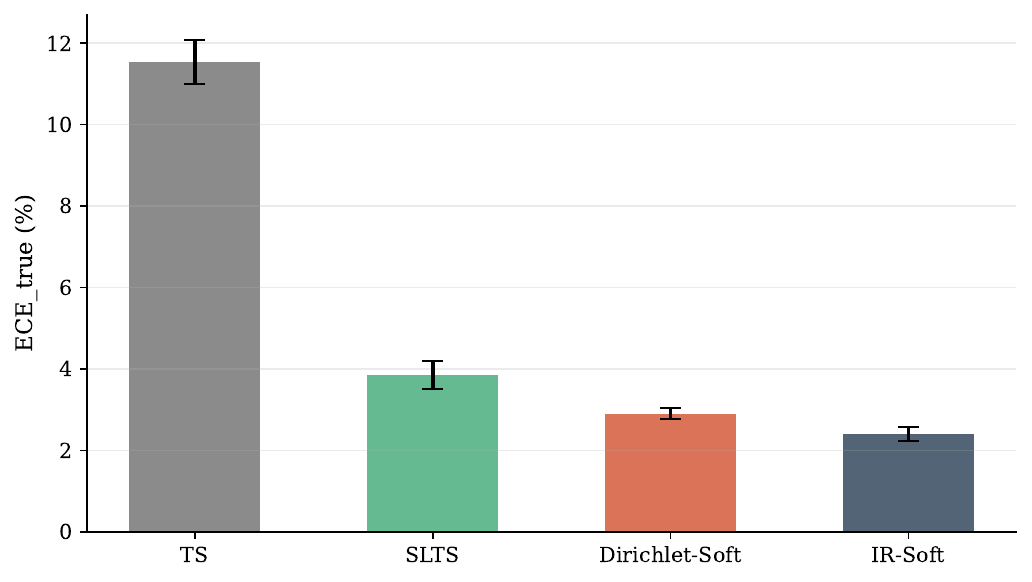}%
  \hfill%
  \includegraphics[width=0.49\linewidth]{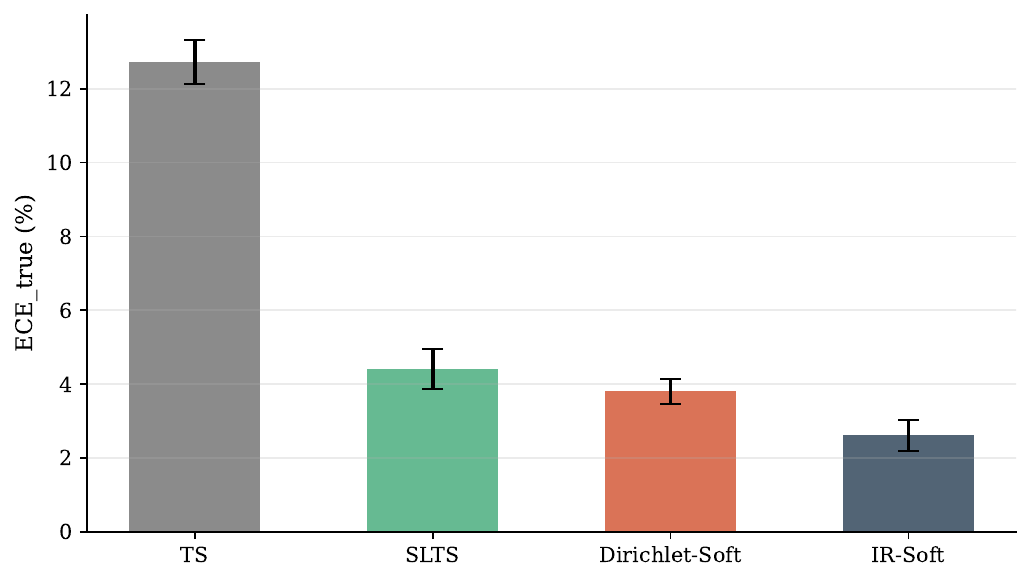}
  \caption{
    \textbf{Multi-seed bar charts: ChaosNLI.}
    Left: RoBERTa-Large.  Right: DeBERTa-v3-base.
  }
  \label{fig:multiseed-chaosnli}
\end{figure*}

\begin{figure*}[t]
  \centering
  \includegraphics[width=0.49\linewidth]{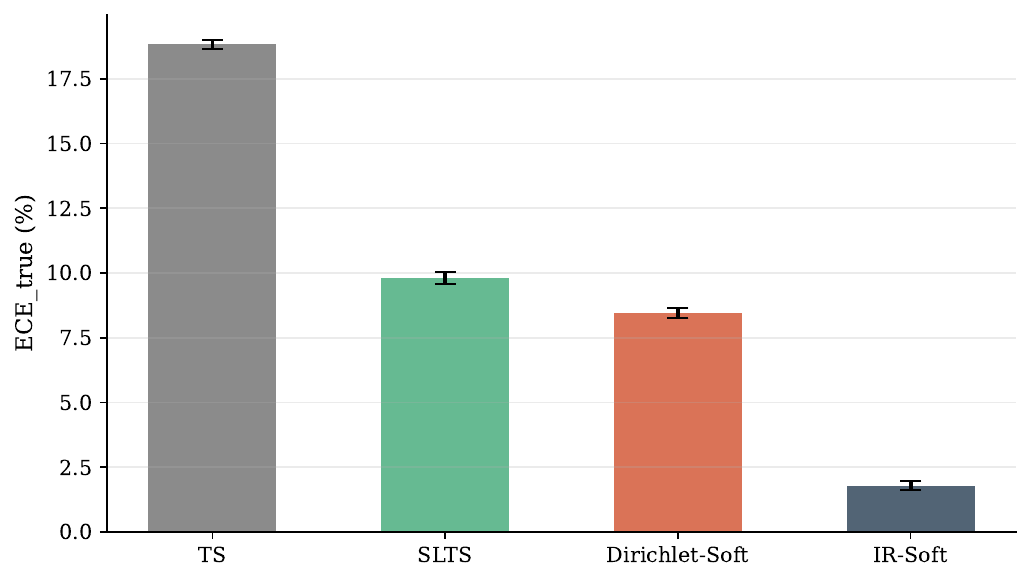}%
  \hfill%
  \includegraphics[width=0.49\linewidth]{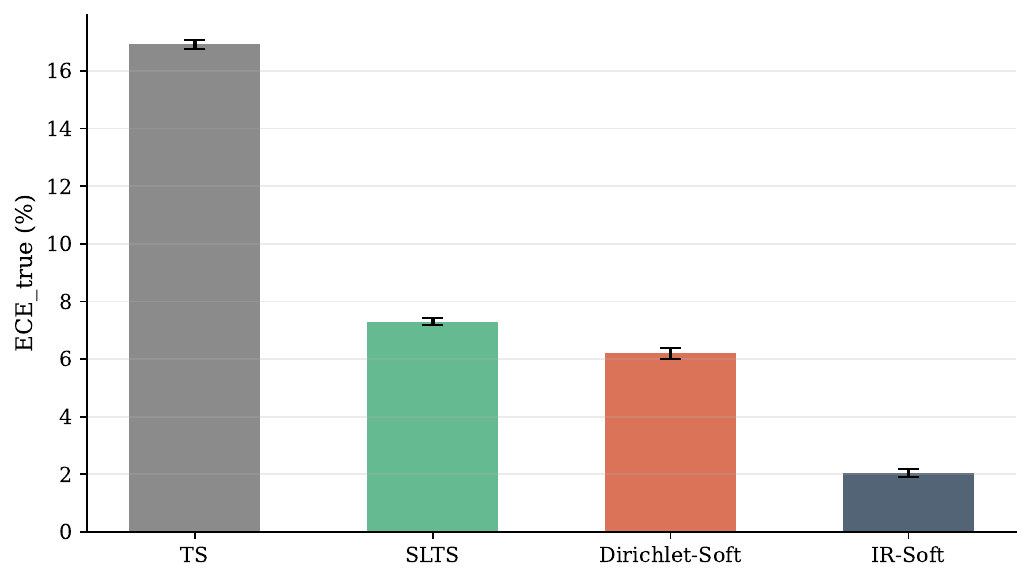}
  \caption{
    \textbf{Multi-seed bar charts: ISIC 2019.}
    Left: EfficientNet-B4.  Right: ViT-S/16.
    Seed varies the annotation generation; val/test split is fixed.
  }
  \label{fig:multiseed-isic}
\end{figure*}

\begin{figure*}[t]
  \centering
  \includegraphics[width=0.49\linewidth]{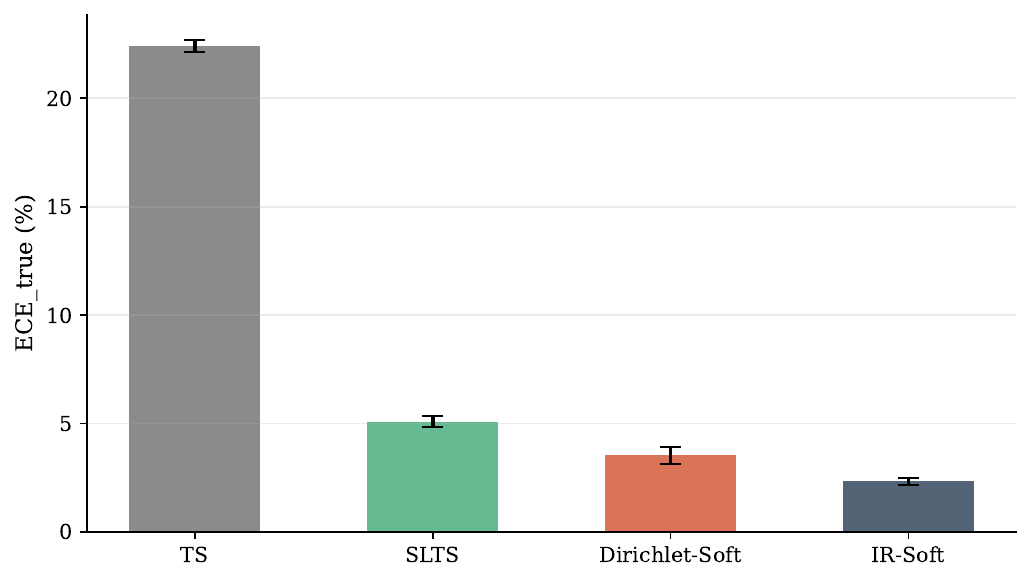}%
  \hfill%
  \includegraphics[width=0.49\linewidth]{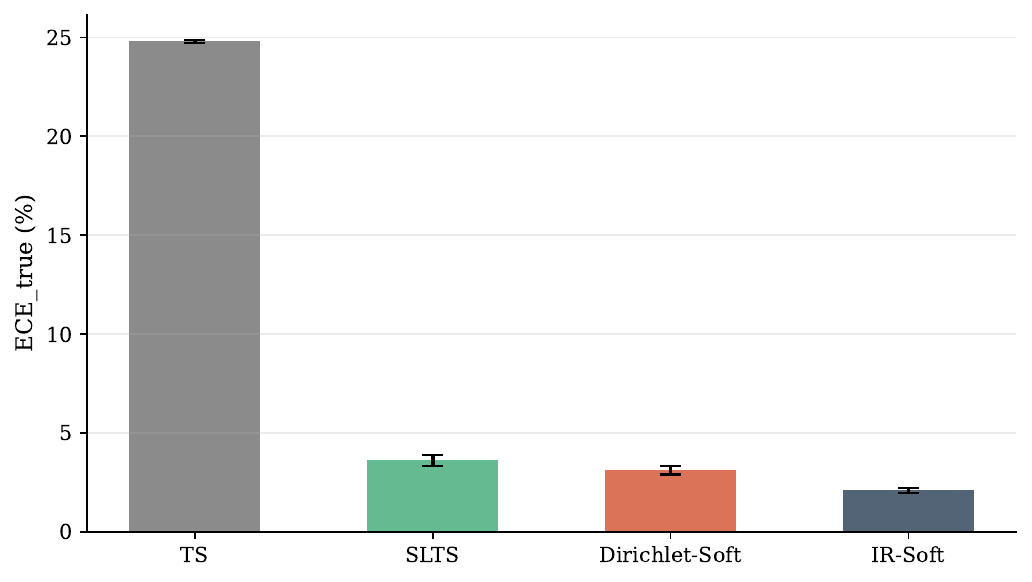}
  \caption{
    \textbf{Multi-seed bar charts: DermaMNIST.}
    Left: ResNet-18.  Right: ViT-S/16.
    Seed varies the annotation generation; val/test split is fixed.
  }
  \label{fig:multiseed-derm}
\end{figure*}

% ─────────────────────────────────────────────────────────────────────────────
\section{Reliability Diagrams}
\label{app:reliability}
% ─────────────────────────────────────────────────────────────────────────────

Four-method summaries for CIFAR-10H (ResNet-50) and ChaosNLI (RoBERTa-Large) are shown in Figures~\ref{fig:reliability-main} and~\ref{fig:reliability-chaosnli-main} in the main text.  Here we provide the four-method summary panels and full panels for all remaining architectures and datasets.  All panels share the same layout: top row = voted-label baselines (Uncal, TS, Platt, Dir.-Hard); middle row = annotation-free (LS-TS) and temperature-based soft methods (SLTS, MCTS, SoftPlatt, VS); bottom row = non-parametric/matrix soft methods (Dir.-Soft, IR-Soft).  Red shading = overconfident; green = underconfident.

\begin{figure*}[t]
  \centering
  \includegraphics[width=\linewidth]{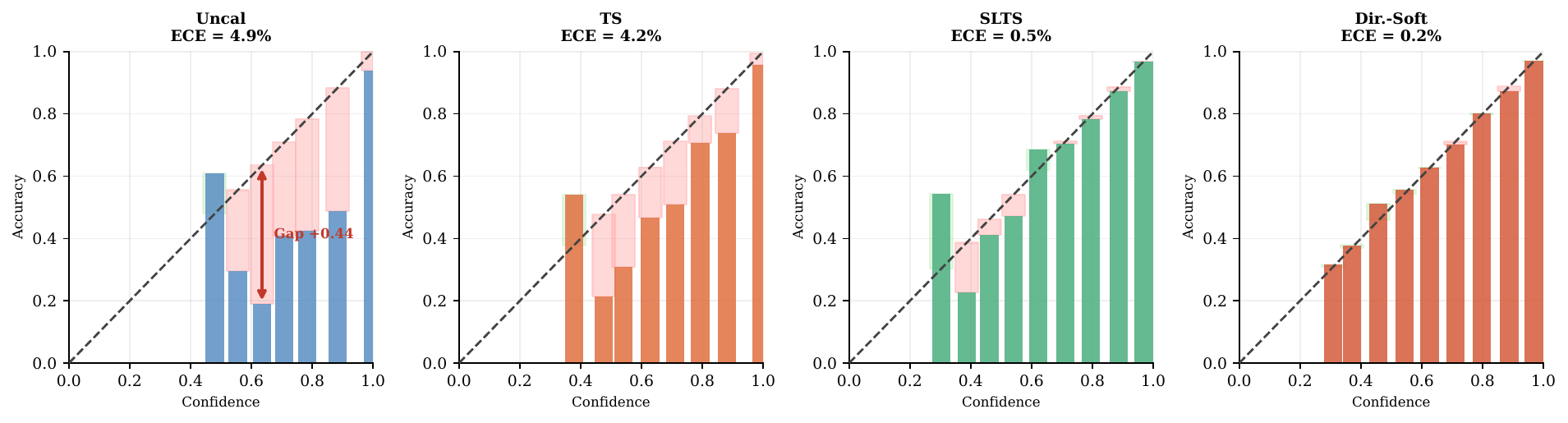}
  \caption{\textbf{Reliability diagrams (summary): CIFAR-10H ViT-B/16.}}
  \label{fig:reliability-cifar-vit-sum}
\end{figure*}
\begin{figure*}[t]
  \centering
  \includegraphics[width=\linewidth]{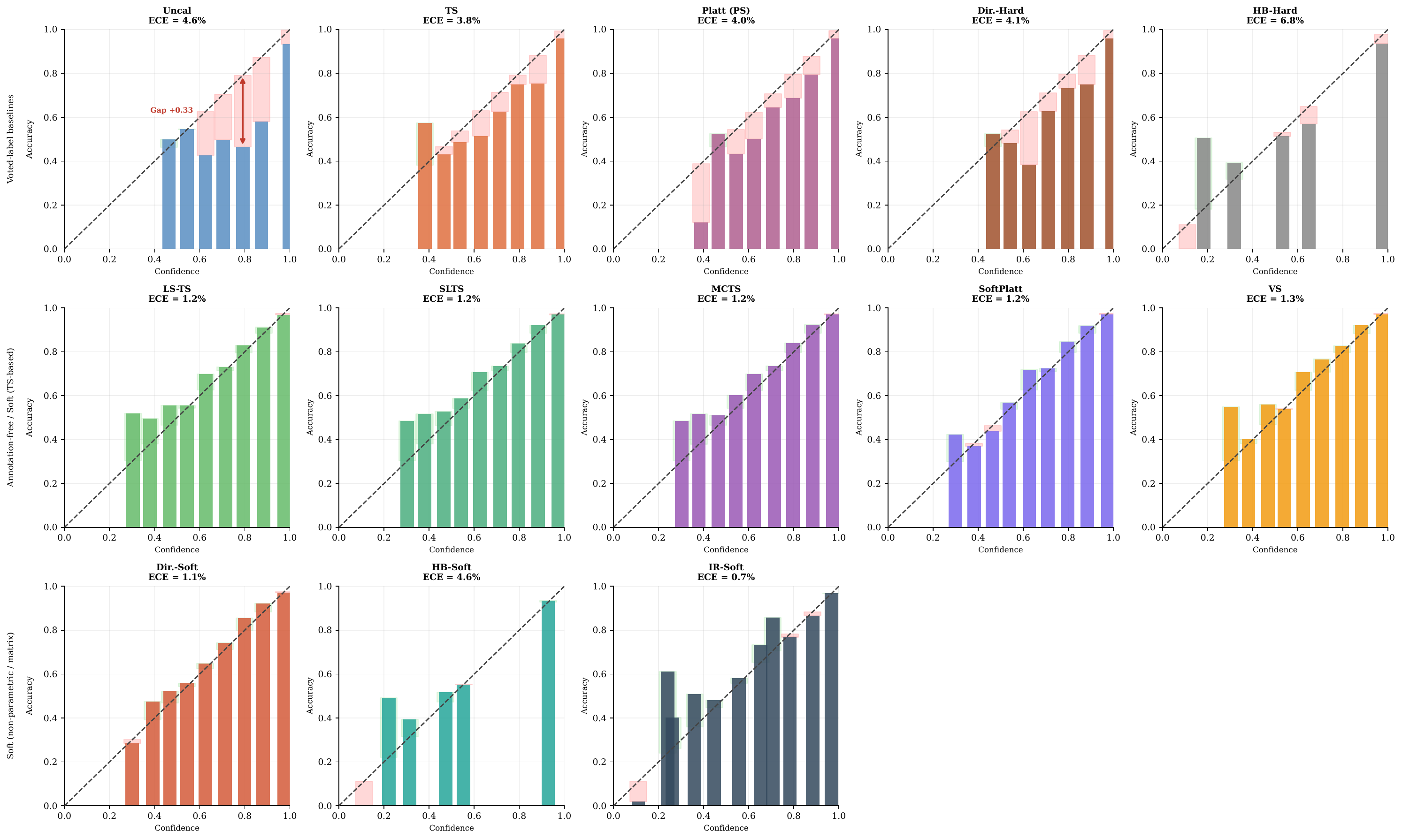}
  \caption{\textbf{Reliability diagrams (all methods): CIFAR-10H ResNet-50 (ECE$_\text{true}$).}}
  \label{fig:reliability-full-cifar}
\end{figure*}
\begin{figure*}[t]
  \centering
  \includegraphics[width=\linewidth]{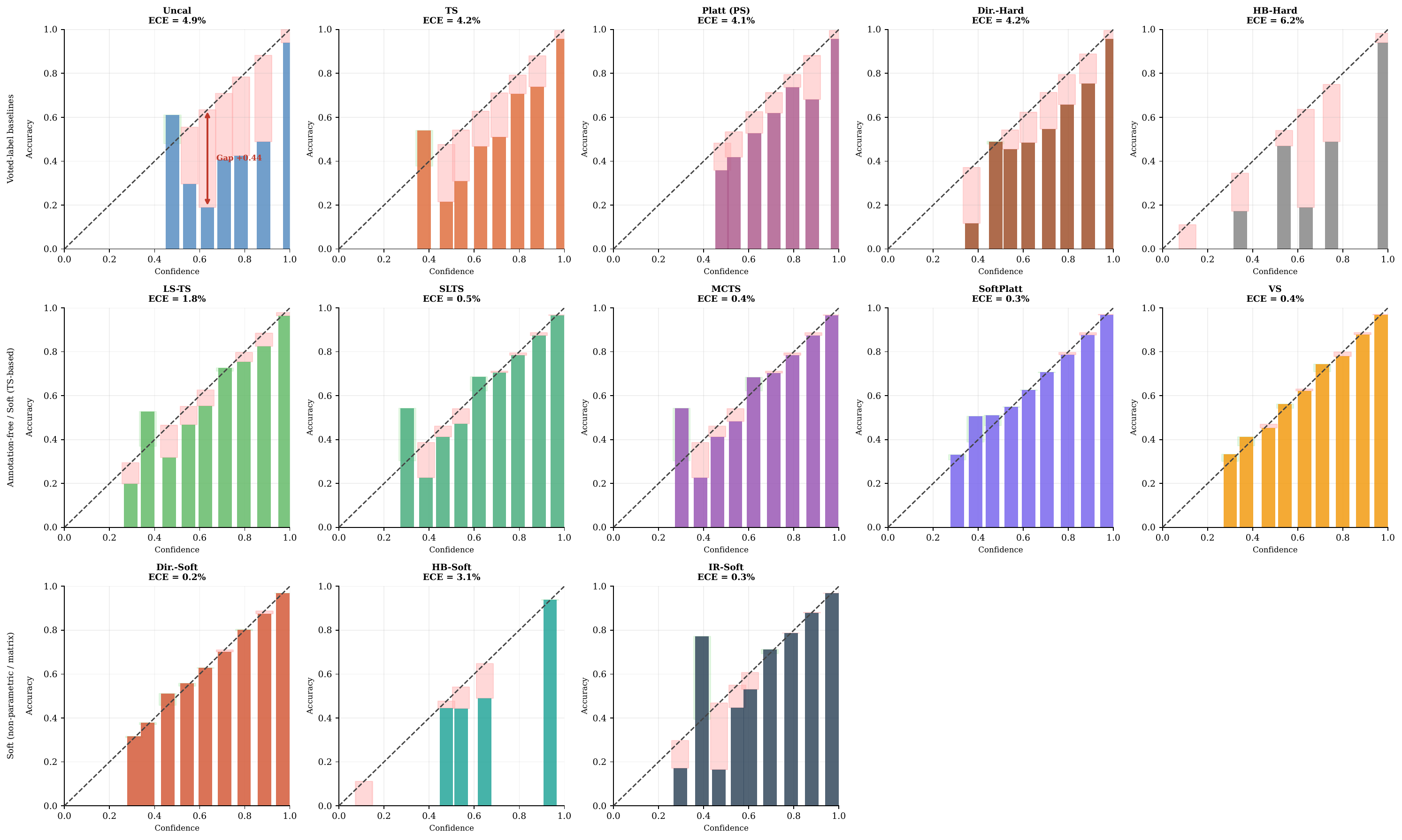}
  \caption{\textbf{Reliability diagrams (all methods): CIFAR-10H ViT-B/16 (ECE$_\text{true}$).}}
  \label{fig:reliability-full-cifar-vit}
\end{figure*}

\begin{figure*}[t]
  \centering
  \includegraphics[width=\linewidth]{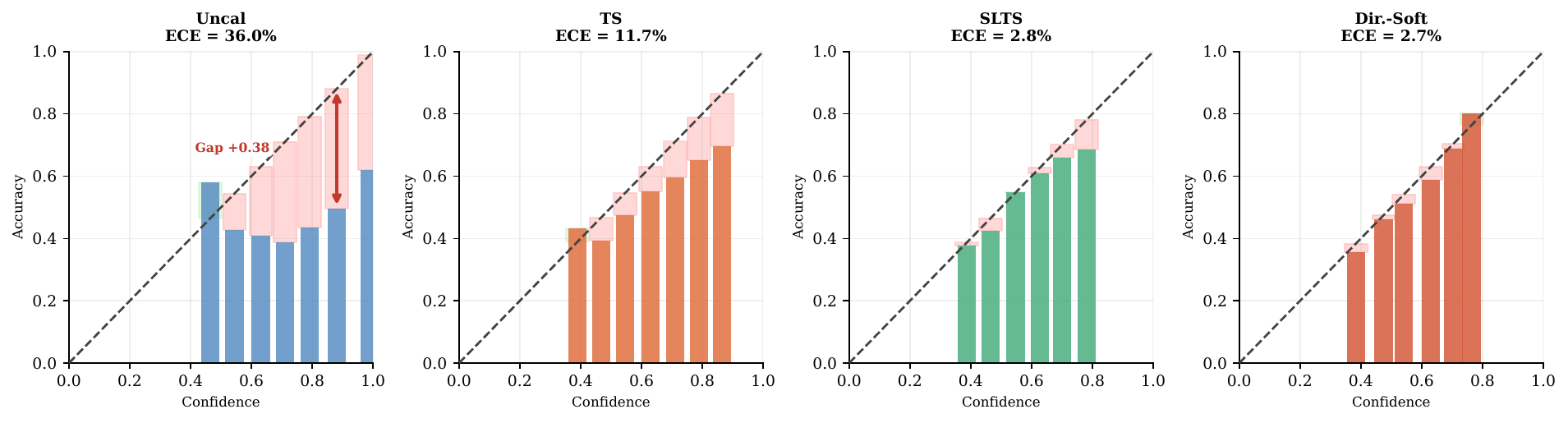}
  \caption{\textbf{Reliability diagrams (summary): ChaosNLI DeBERTa-v3-base.}}
  \label{fig:reliability-chaosnli-deberta-sum}
\end{figure*}
\begin{figure*}[t]
  \centering
  \includegraphics[width=\linewidth]{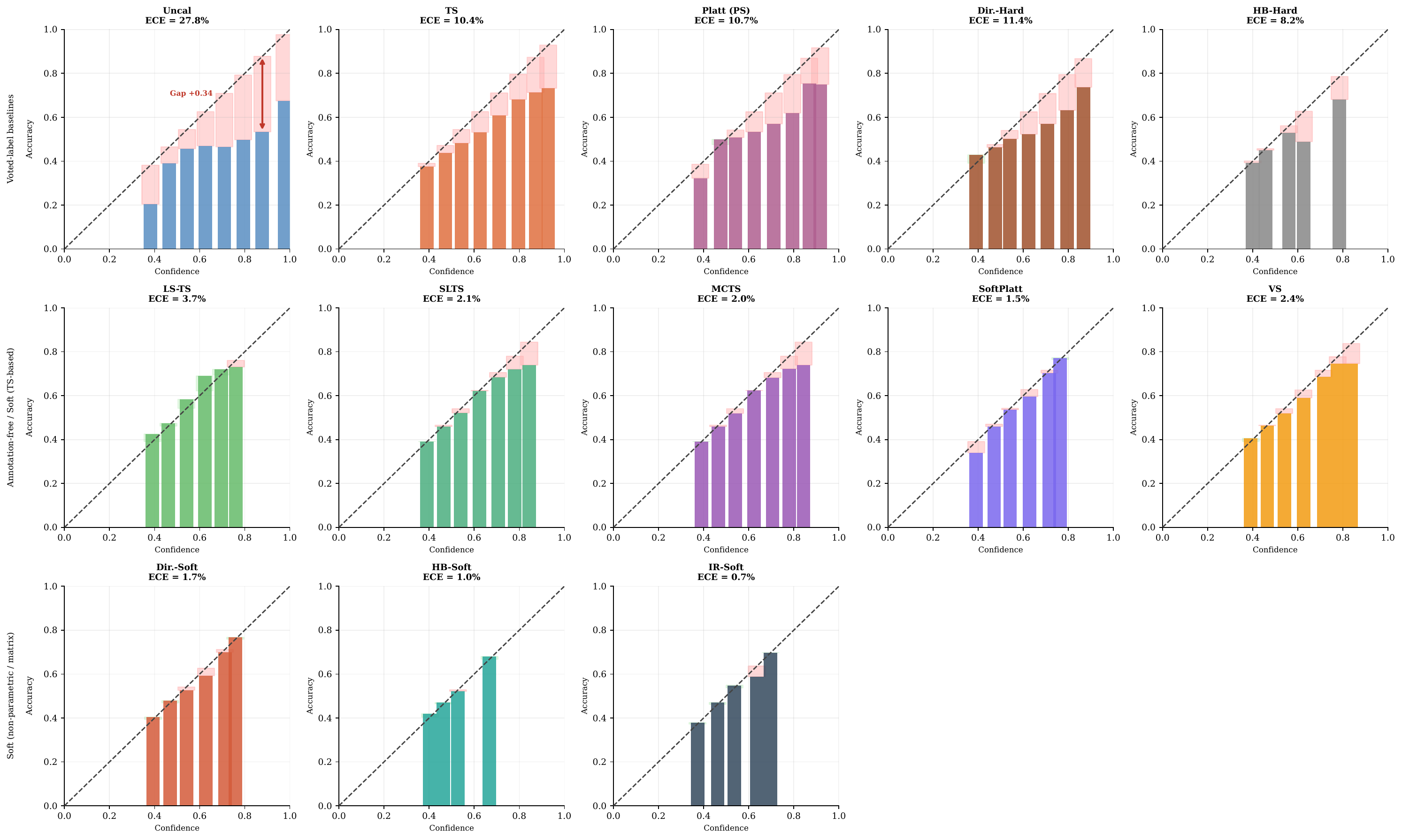}
  \caption{\textbf{Reliability diagrams (all methods): ChaosNLI RoBERTa-Large (ECE$_\text{true}$).}
    NLI models are severely overconfident before calibration; ambiguity-aware methods substantially correct this.}
  \label{fig:reliability-full-chaosnli}
\end{figure*}
\begin{figure*}[t]
  \centering
  \includegraphics[width=\linewidth]{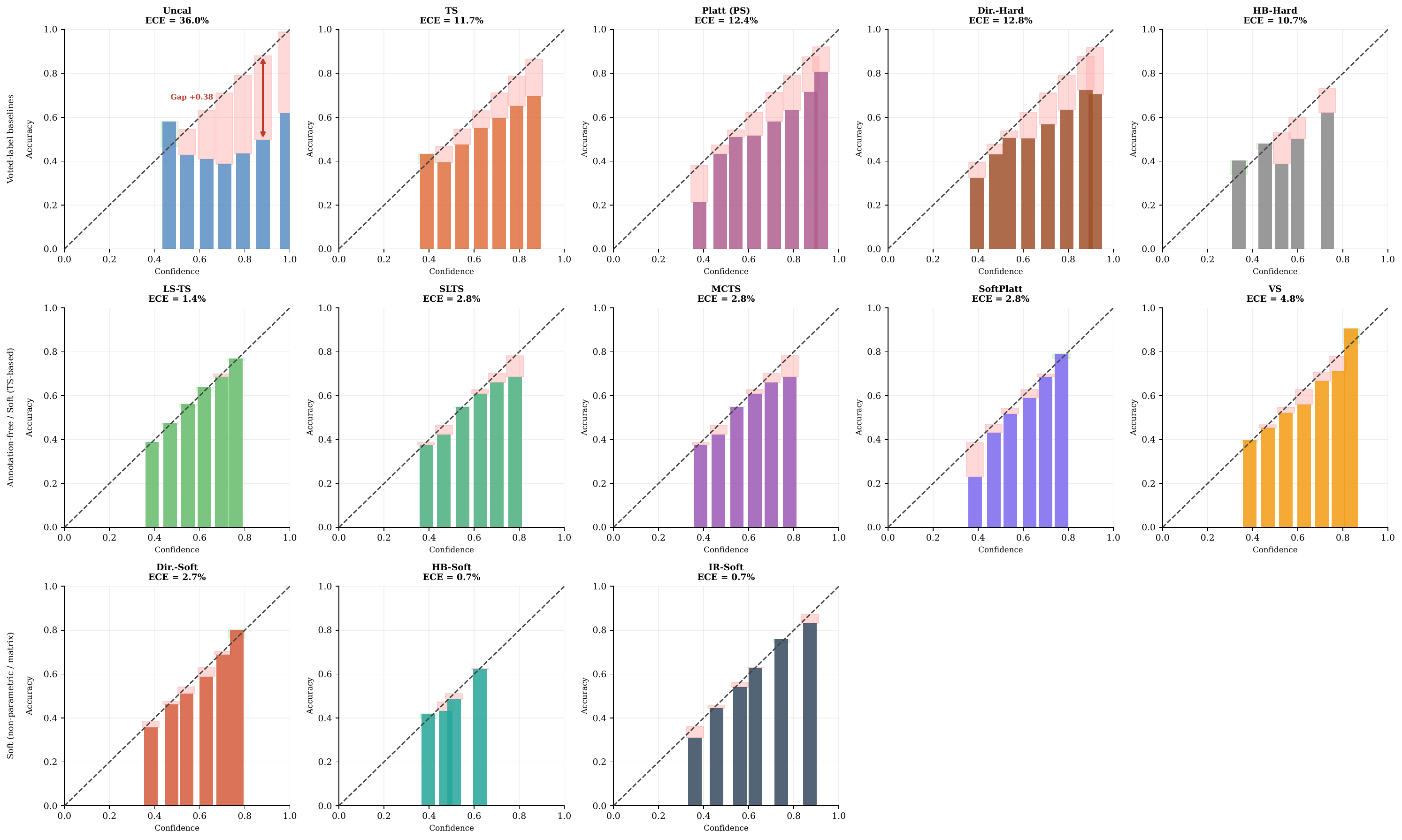}
  \caption{\textbf{Reliability diagrams (all methods): ChaosNLI DeBERTa-v3-base (ECE$_\text{true}$).}}
  \label{fig:reliability-full-chaosnli-deberta}
\end{figure*}

\begin{figure*}[t]
  \centering
  \includegraphics[width=\linewidth]{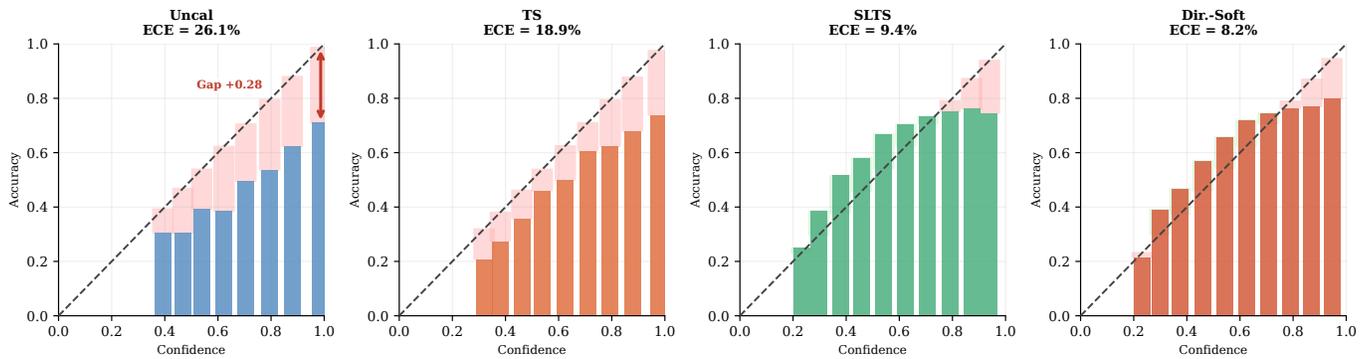}
  \caption{\textbf{Reliability diagrams (summary): ISIC 2019 EfficientNet-B4.}}
  \label{fig:reliability-isic-enet-sum}
\end{figure*}
\begin{figure*}[t]
  \centering
  \includegraphics[width=\linewidth]{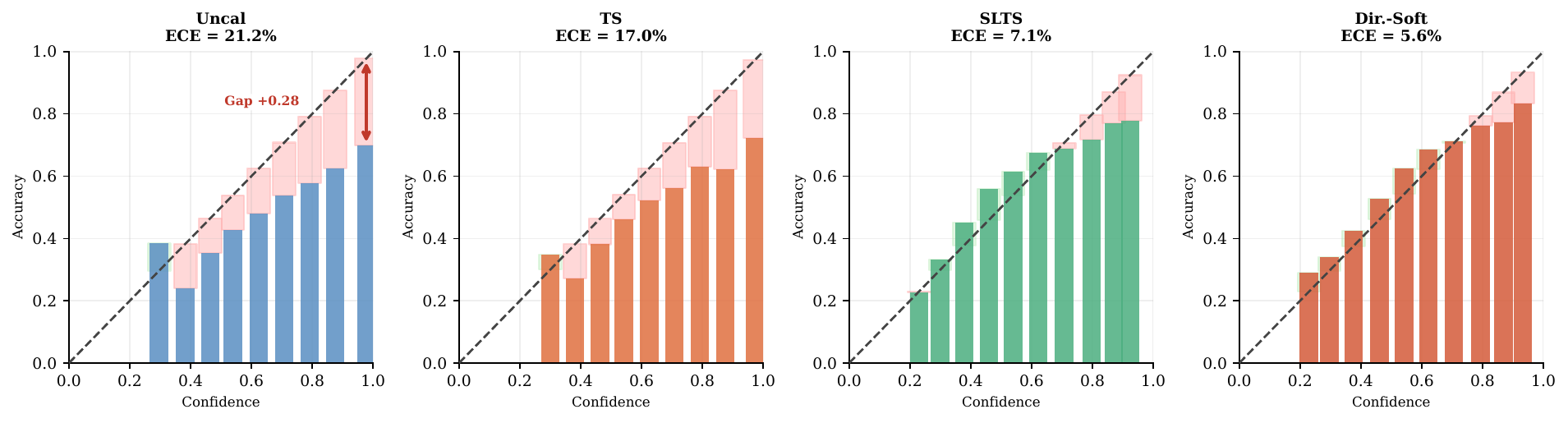}
  \caption{\textbf{Reliability diagrams (summary): ISIC 2019 ViT-S/16.}}
  \label{fig:reliability-isic-vit-sum}
\end{figure*}
\begin{figure*}[t]
  \centering
  \includegraphics[width=\linewidth]{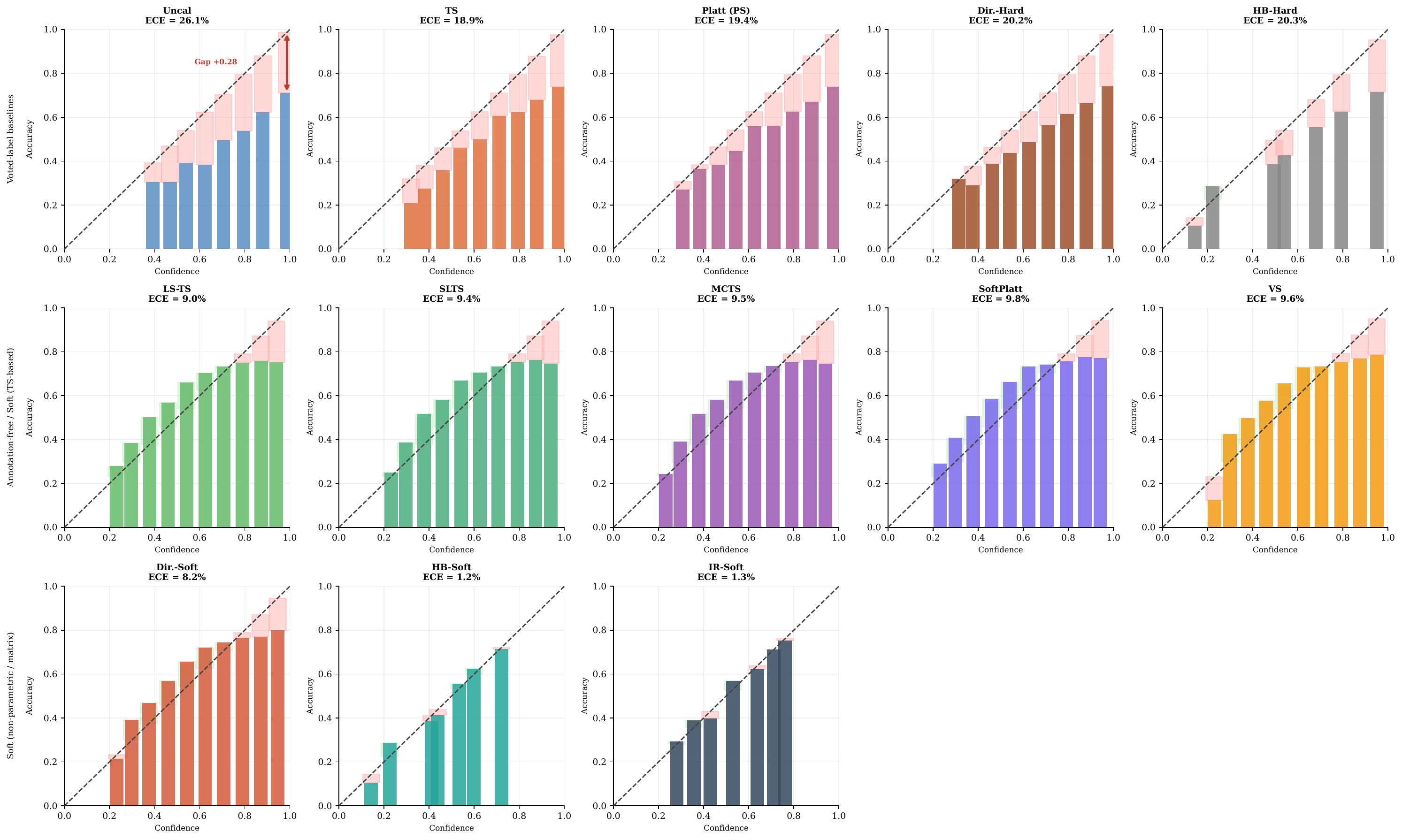}
  \caption{\textbf{Reliability diagrams (all methods): ISIC 2019 EfficientNet-B4 (ECE$_\text{true}$).}}
  \label{fig:reliability-full-isic-enet}
\end{figure*}
\begin{figure*}[t]
  \centering
  \includegraphics[width=\linewidth]{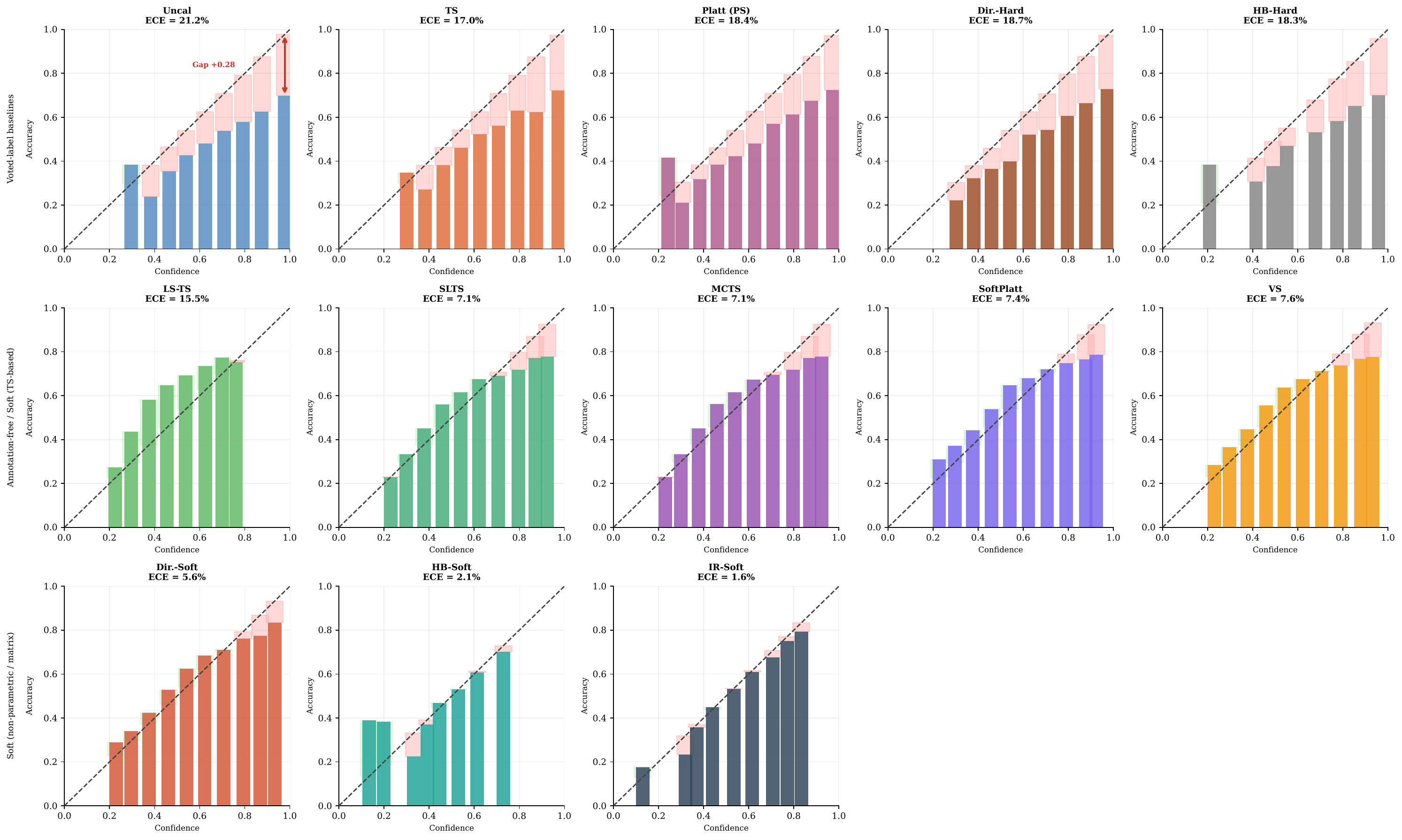}
  \caption{\textbf{Reliability diagrams (all methods): ISIC 2019 ViT-S/16 (ECE$_\text{true}$).}}
  \label{fig:reliability-full-isic-vit}
\end{figure*}

\begin{figure*}[t]
  \centering
  \includegraphics[width=\linewidth]{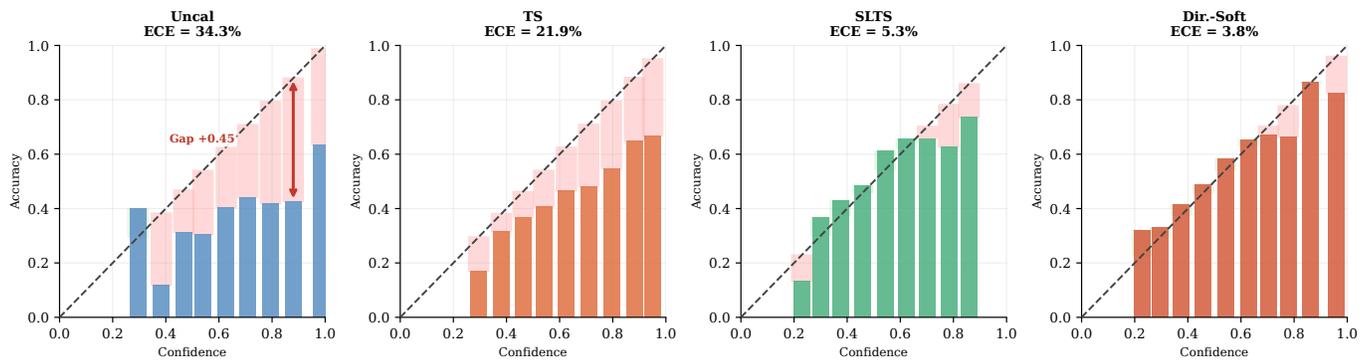}
  \caption{\textbf{Reliability diagrams (summary): DermaMNIST ResNet-18.}}
  \label{fig:reliability-derm-r18-sum}
\end{figure*}
\begin{figure*}[t]
  \centering
  \includegraphics[width=\linewidth]{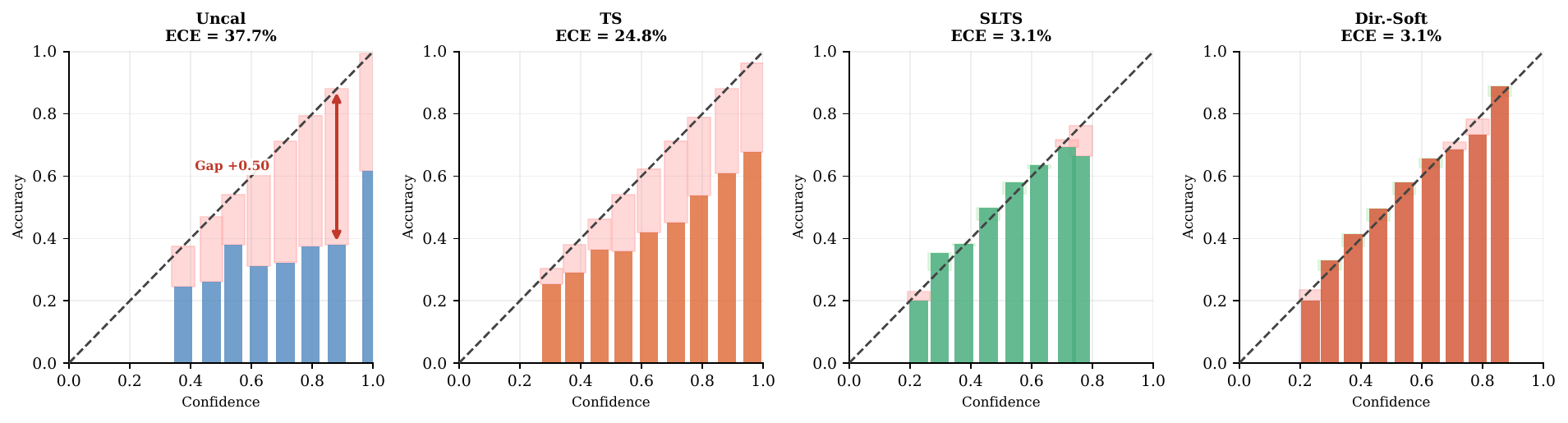}
  \caption{\textbf{Reliability diagrams (summary): DermaMNIST ViT-S/16.}}
  \label{fig:reliability-derm-vit-sum}
\end{figure*}
\begin{figure*}[t]
  \centering
  \includegraphics[width=\linewidth]{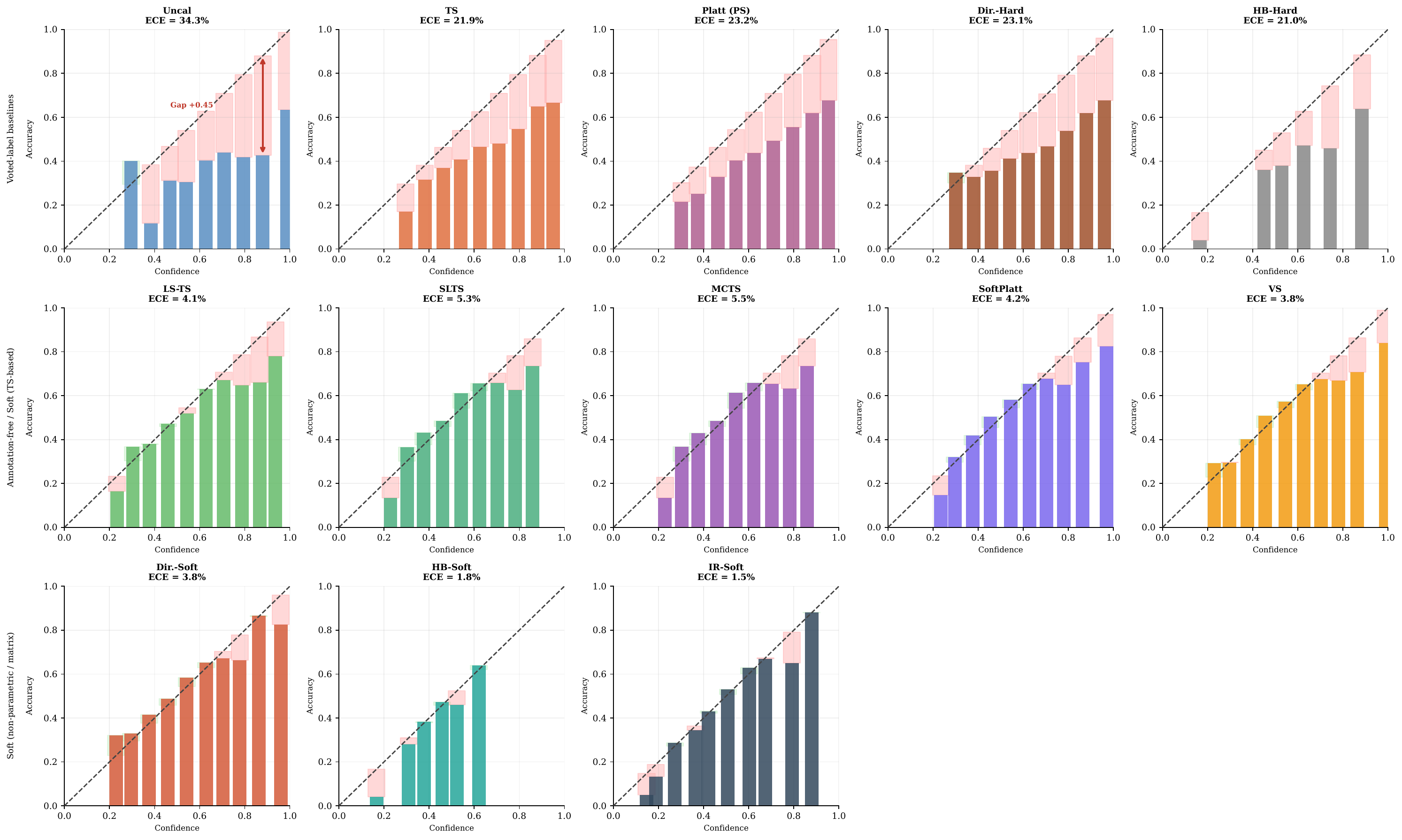}
  \caption{\textbf{Reliability diagrams (all methods): DermaMNIST ResNet-18 (ECE$_\text{true}$).}}
  \label{fig:reliability-full-derm-r18}
\end{figure*}
\begin{figure*}[t]
  \centering
  \includegraphics[width=\linewidth]{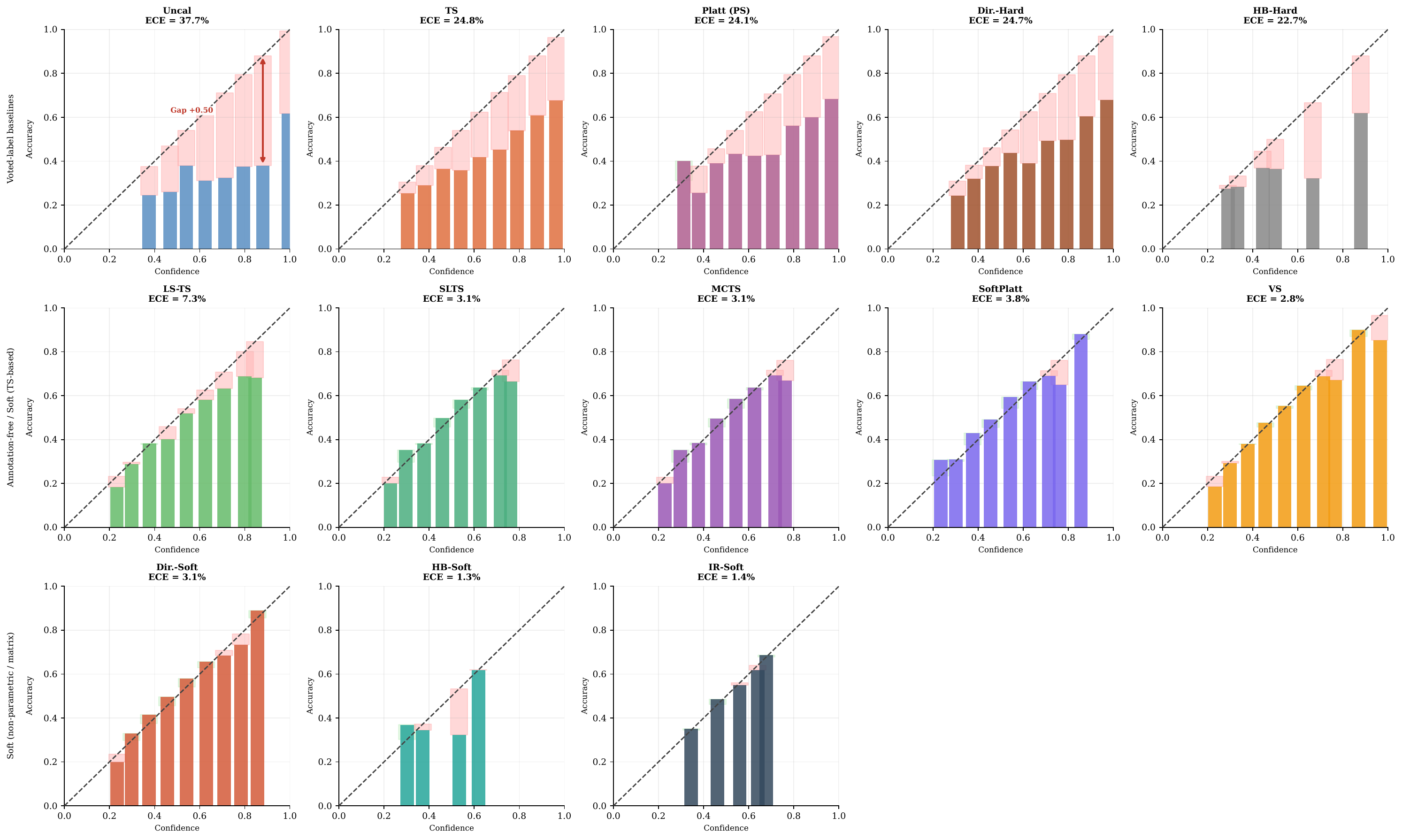}
  \caption{\textbf{Reliability diagrams (all methods): DermaMNIST ViT-S/16 (ECE$_\text{true}$).}}
  \label{fig:reliability-full-derm-vit}
\end{figure*}

\bigskip
The diagrams confirm the quantitative results in Tables~\ref{tab:cifar}--\ref{tab:derm}: TS reduces overconfidence relative to \emph{Uncal} but leaves a large residual gap because it targets voted labels.  Dirichlet-Hard performs similarly to TS and sometimes worse, confirming that the problem lies in the target, not model capacity.  All soft methods substantially close the gap, with IR-Soft and Dir.-Soft achieving the most uniform residuals across all benchmarks.

\section{ISIC 2019 Annotator Confusion Matrix}
\label{app:isic-confusion}

\subsection{Construction}

Because the full per-image reader annotations of Liu et al.~\cite{liu2020deep} are not publicly
available, we construct a synthetic annotator model using a clinically calibrated
$8{\times}8$ confusion matrix $C$, where entry $C_{ij}$ gives the probability that a
board-certified dermatologist labels an image as class~$j$ when the consensus
(majority-vote) diagnosis is class~$i$.  This confusion-matrix annotator model follows
the framework of Dawid and Skene~\cite{dawid1979maximum}, in which each annotator's behaviour is
characterised by a class-conditional label-flipping distribution; it is widely used
in crowdsourcing and multi-annotator learning \cite{rodrigues2018deep,uma2021learning}.

\noindent\textbf{Calibration sources.}
Diagonal entries (per-condition agreement rates) are set to match the per-condition
inter-reader agreement reported in Liu et al.~\cite{liu2020deep} and corroborated by
Haenssle et al.~\cite{haenssle2018man}.  Off-diagonal entries encode clinically established
confusion patterns:
\begin{itemize}[noitemsep, topsep=2pt]
  \item \textbf{MEL/NV} (Melanoma / Melanocytic Nevi): the most consequential and
        most confused pair in dermoscopy, with diagonal rates of 73\%/76\%.
        Off-diagonal entries $C_{\text{MEL},\text{NV}}=0.14$ and
        $C_{\text{NV},\text{MEL}}=0.15$ reflect the well-documented difficulty of
        distinguishing early melanoma from benign nevi \cite{haenssle2018man}.
  \item \textbf{AK/BKL/SCC} (Actinic Keratosis / Benign Keratosis-like Lesions /
        Squamous Cell Carcinoma): a high-confusion cluster with diagonal rates of
        65\%/62\%/67\%.  The AK$\leftrightarrow$SCC confusion ($C_{\text{AK},\text{SCC}}=0.16$,
        $C_{\text{SCC},\text{AK}}=0.18$) is clinically significant because
        untreated AK can progress to SCC.  BKL shares morphological features
        with both conditions.
  \item \textbf{DF/VL} (Dermatofibroma / Vascular Lesion): the easiest to
        distinguish, with diagonal rates of 87\%/91\%.
\end{itemize}
The mean diagonal agreement is
$\bar{C}_{ii} = 75\%$, matching the aggregate inter-reader agreement reported
by Liu et al.~\cite{liu2020deep} exactly.

\noindent\textbf{Simulation procedure.}
For each test image $x_i$ with consensus label $y_i$, we independently draw $m=9$
synthetic annotations from the categorical distribution defined by row $y_i$ of
$C$:
\[
  a^{(k)}_i \sim \mathrm{Categorical}\!\left(C_{y_i,\cdot}\right),
  \quad k = 1,\ldots,m,
\]
and set the empirical label distribution to
$\pi(x_i) = \frac{1}{m}\sum_{k=1}^m \mathbf{e}_{a^{(k)}_i}$.
This directly mirrors the $m=9$ board-certified dermatologist annotation protocol of
Liu et al.~\cite{liu2020deep}, whose reader study we cannot fully replicate due to data
unavailability.  The resulting mean annotation entropy over the ISIC 2019 test set
is $\bar{H}=0.66$, reflecting the genuine diagnostic difficulty of the 8-class task.

\begin{remark}[Class-conditional limitation]
This synthetic annotator model is deliberately class-conditional: every image with the same consensus class $y_i$ is assigned the same row $C_{y_i,\cdot}$ of the confusion matrix, regardless of its individual visual content.  In practice, images within the same consensus class will differ in visual ambiguity, leading to instance-level variation in annotator disagreement that this model does not capture.  Our experiments therefore evaluate calibration under a controlled form of clinically-informed ambiguity rather than true per-instance annotator distributions.
\end{remark}

\subsection{Visualisation}

Figure~\ref{fig:isic-confusion-app} shows the confusion matrix as a heatmap.
The MEL/NV cluster (orange dashed box) and the AK/BKL/SCC high-confusion
cluster (purple dotted highlights) are visually prominent.  DF and VL have
near-perfect diagonal entries and negligible off-diagonal mass.

\begin{figure*}[t]
  \centering
  \includegraphics[width=0.82\linewidth]{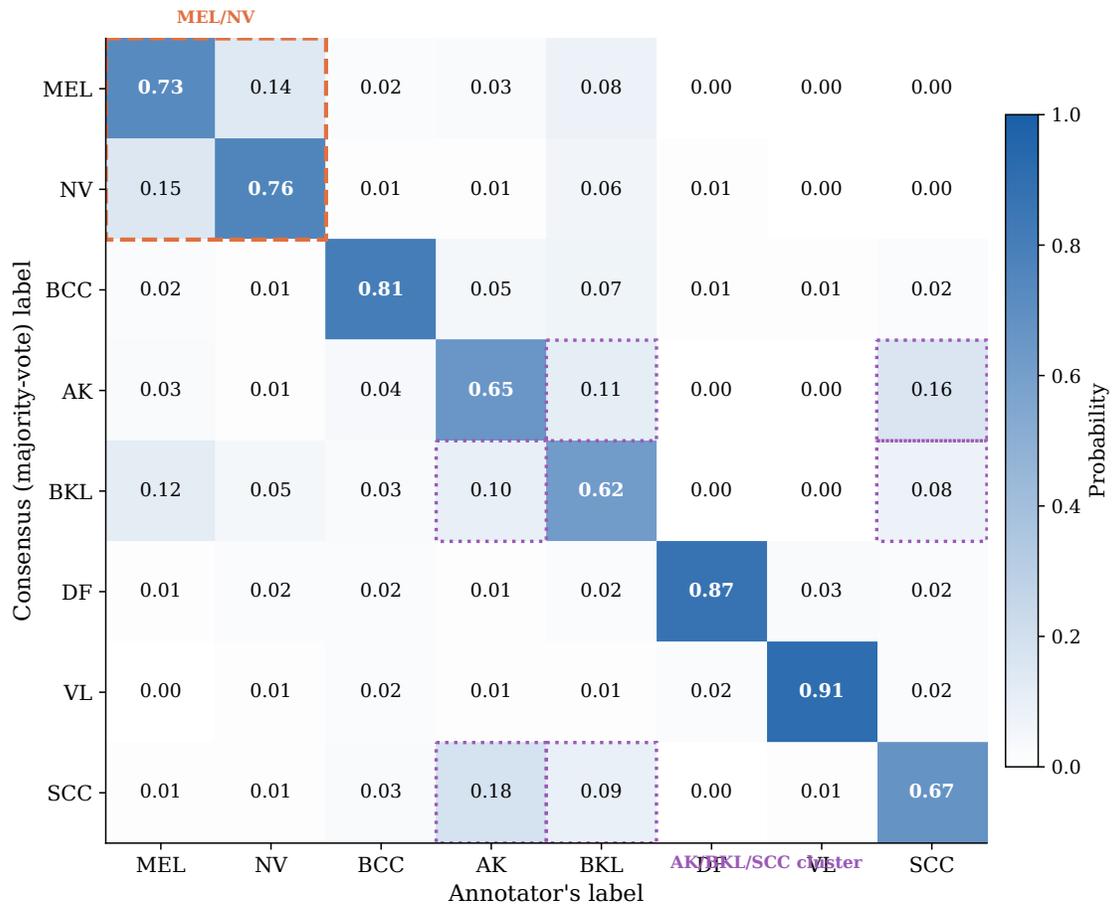}
  \caption{
    \textbf{Inter-reader confusion matrix for ISIC 2019.}
    Entry $C_{ij}$ is the probability that a dermatologist labels an image as
    class~$j$ given the consensus label is~$i$.
    Diagonal entries (bold) are the per-condition agreement rates, calibrated to
    match Liu et al.~\cite{liu2020deep} Table~2.
    Orange dashed box: MEL/NV most-confused pair.
    Purple dotted highlights: AK/BKL/SCC high-confusion cluster.
    Mean diagonal agreement $\bar{C}_{ii}=75\%$.
  }
  \label{fig:isic-confusion-app}
\end{figure*}

\subsection{Full confusion matrix}

Table~\ref{tab:isic-confusion} lists the exact numerical entries of the confusion matrix.

\begin{table}[t]
  \centering
  \caption{
    \textbf{Numerical values of the inter-reader confusion matrix for ISIC 2019.}
    Row = consensus (majority-vote) label; column = annotator's label.
    Each row sums to 1.  Diagonal entries (bold) are per-condition agreement rates,
    calibrated to Liu et al.~\cite{liu2020deep} Table~2; mean diagonal agreement $\bar{C}_{ii}=75\%$.
  }
  \label{tab:isic-confusion}
  \small
  \setlength{\tabcolsep}{4pt}
  \begin{tabular}{lcccccccc}
    \toprule
    & MEL & NV & BCC & AK & BKL & DF & VL & SCC \\
    \midrule
    MEL  & \textbf{0.73} & 0.14 & 0.02 & 0.03 & 0.08 & 0.00 & 0.00 & 0.00 \\
    NV   & 0.15 & \textbf{0.76} & 0.01 & 0.01 & 0.06 & 0.01 & 0.00 & 0.00 \\
    BCC  & 0.02 & 0.01 & \textbf{0.81} & 0.05 & 0.07 & 0.01 & 0.01 & 0.02 \\
    AK   & 0.03 & 0.01 & 0.04 & \textbf{0.65} & 0.11 & 0.00 & 0.00 & 0.16 \\
    BKL  & 0.12 & 0.05 & 0.03 & 0.10 & \textbf{0.62} & 0.00 & 0.00 & 0.08 \\
    DF   & 0.01 & 0.02 & 0.02 & 0.01 & 0.02 & \textbf{0.87} & 0.03 & 0.02 \\
    VL   & 0.00 & 0.01 & 0.02 & 0.01 & 0.01 & 0.02 & \textbf{0.91} & 0.02 \\
    SCC  & 0.01 & 0.01 & 0.03 & 0.18 & 0.09 & 0.00 & 0.01 & \textbf{0.67} \\
    \bottomrule
  \end{tabular}
\end{table}

\section{Empirical Validation of Proposition~\ref{prop:entropy-gap}}
\label{app:entropy-validation}

Proposition~\ref{prop:entropy-gap} states that, under simplifying assumptions, the pointwise true-label calibration error of a voted-label-calibrated model is non-decreasing in the annotation entropy $H(x)$.  Figure~\ref{fig:entropy-validation} provides empirical support: we bin test examples by their normalised annotation entropy $H(x)/\log K$ and plot the mean absolute error $|\hat{p}_{\hat{c}}(x) - \pi_{\hat{c}}(x)|$ for TS across three dataset--architecture combinations.

For CIFAR-10H (both architectures), TS error rises from ${\approx}2\%$ in the near-unambiguous bin ($H/\log K \approx 0.02$) to ${\approx}18\%$ in the high-ambiguity bin ($H/\log K \approx 0.48$), a clear monotone increase consistent with the proposition.  For ChaosNLI (DeBERTa-v3), which operates in a higher-entropy regime ($H/\log K \in [0.2, 0.9]$), TS error is uniformly high ($19$--$25\%$) and rises further at the highest-entropy bins.  The overall trend is consistent across both domains and all three architectures: voted-label calibration error increases with annotation entropy, as predicted by Proposition~\ref{prop:entropy-gap}.

See Figure~\ref{fig:entropy-validation} in the main text (Theory section) for the plot.

\section{LS-TS Ablation: Smoothing Strategy Comparison (Extended)}
\label{app:lsts-ablation}

We compare LS-TS against three simpler annotation-free smoothing baselines that share the same single-temperature family but differ in how the pseudo-target $\tpihat^{\mathrm{LS}}_i$ is constructed:

\begin{itemize}[noitemsep, topsep=2pt]
  \item \textbf{Fixed-LS} ($\varepsilon{=}0.1$): fixed data-independent smoothing weight, the standard label-smoothing value used in training \cite{szegedy2016rethinking}.
  \item \textbf{Ent-LS}: per-instance $\varepsilon_i = H(\hat{p}(x_i)) / \log K$, using normalised prediction entropy as a proxy for annotator disagreement.
  \item \textbf{CC-LS}: per-class $\varepsilon_k = \mathrm{mean}_{i:\,y^*_i=k}(1 - \hat{p}_{y^*}(x_i))$, accommodating class-level variation in model confidence (same idea as Vector Scaling applied to the smoothing weight).
\end{itemize}

All methods are evaluated on CIFAR-10H and ChaosNLI, the two benchmarks with real multi-annotator distributions, using the same experimental protocol as the main paper (seed 42, 50/50 stratified split).  Table~\ref{tab:lsts-ablation} (main text, Methods section) reports the full comparison.

Fixed-LS with $\varepsilon{=}0.1$ consistently \emph{under}performs or even \emph{degrades} below TS (e.g., ECE 7.44\% vs.\ 4.29\% on CIFAR-10H ResNet-50), confirming that an arbitrary fixed smoothing weight is insufficient; the data-driven estimate of $\bar\varepsilon$ is critical.
Ent-LS (per-instance entropy) improves over TS on most settings but is inconsistent: it works well on ChaosNLI RoBERTa-L (3.17\%) but collapses on DeBERTa-v3 (8.58\%), suggesting that prediction entropy is a noisy proxy for annotator ambiguity when models are already strongly confident.
CC-LS (per-class $\varepsilon$) achieves results very close to LS-TS on all four settings (within 0.1 pp ECE), confirming that the global mean is a good summary of per-class behaviour for these benchmarks; the slight advantage of CC-LS on DeBERTa-v3 (2.54\% vs.\ 2.65\%) is within the single-seed variance reported in Appendix~\ref{app:multiseed}.

\section{Adaptive Temperature Scaling with Voted Labels (Extended Analysis)}
\label{app:ats}

A natural question is whether making the calibration map \emph{adaptive} (predicting a per-instance temperature $T(x_i)$ rather than a global $T^*$) can close the gap between voted-label calibration and ambiguity-aware calibration, without requiring annotator data.  We implement \textbf{ATS} \cite{joy2023sample}: a linear layer maps four logit-derived features $\phi(z_i) = [\max_k z_{ik},\; H(\mathrm{softmax}(z_i))/\log K,\; p_{(1)}-p_{(2)},\; p_{(1)}]$ to $\log T_i$ via $T_i = \mathrm{softplus}(w^\top\phi(z_i)+b)+0.1$, trained with voted-label NLL identical to TS.  The four features capture prediction scale, uncertainty, margin, and confidence.  We apply $\ell_2$ regularisation ($\lambda=10^{-3}$) to avoid overfitting on small calibration sets.

\begin{table*}[t]
  \centering
  \small
  \caption{\textbf{ATS vs.\ TS and LS-TS across all 8 settings (annotation-free methods only).}
    ECE$_\text{true}$ (\%) and Brier.
    ATS predicts a per-instance temperature from logit features but is trained with the same voted-label NLL as TS.
    \textbf{Bold} = best annotation-free method per column.}
  \label{tab:ats-app}
  \setlength{\tabcolsep}{4pt}
  \resizebox{\linewidth}{!}{%
  \begin{tabular}{l rr rr rr rr rr rr rr rr}
    \toprule
    & \multicolumn{2}{c}{\shortstack{C10H\\R50}}
    & \multicolumn{2}{c}{\shortstack{C10H\\ViT}}
    & \multicolumn{2}{c}{\shortstack{NLI\\RoBERTa}}
    & \multicolumn{2}{c}{\shortstack{NLI\\DeBERTa}}
    & \multicolumn{2}{c}{\shortstack{ISIC\\ENet}}
    & \multicolumn{2}{c}{\shortstack{ISIC\\ViT}}
    & \multicolumn{2}{c}{\shortstack{Derm\\R18}}
    & \multicolumn{2}{c}{\shortstack{Derm\\ViT}} \\
    \cmidrule(lr){2-3}\cmidrule(lr){4-5}\cmidrule(lr){6-7}\cmidrule(lr){8-9}
    \cmidrule(lr){10-11}\cmidrule(lr){12-13}\cmidrule(lr){14-15}\cmidrule(lr){16-17}
    Method & ECE & Br & ECE & Br & ECE & Br & ECE & Br & ECE & Br & ECE & Br & ECE & Br & ECE & Br \\
    \midrule
    TS
      &  4.29 & .112 &  4.48 & .106 & 10.55 & .537 & 11.63 & .549
      & 18.72 & .559 & 17.04 & .630 & 22.25 & .687 & 24.65 & .683 \\
    ATS
      &  4.40 & .113 &  4.54 & .106 & 11.36 & .538 & 13.09 & .552
      & 18.83 & .560 & 17.53 & .631 & 22.82 & .687 & 25.35 & .686 \\
    \textbf{LS-TS}
      & \textbf{1.57} & \textbf{.109}
      & \textbf{2.37} & \textbf{.103}
      & \textbf{4.16} & \textbf{.522}
      & \textbf{2.65} & \textbf{.529}
      & \textbf{9.34} & \textbf{.528}
      & \textbf{15.49} & \textbf{.619}
      & \textbf{5.05} & \textbf{.630}
      & \textbf{7.36} & \textbf{.615} \\
    \bottomrule
  \end{tabular}}
\end{table*}

\noindent\textbf{Finding.}
ATS fails to improve over TS in all 8 settings and in several cases degrades it (ECE rises from $11.63\%$ to $13.09\%$ on DeBERTa-v3; from $17.04\%$ to $17.53\%$ on ISIC ViT-S/16).  The per-instance temperature learned from logit features adapts to \emph{model uncertainty}, not to \emph{annotator disagreement}; these two quantities are loosely correlated at best (cf.\ Figure~\ref{fig:entropy-validation}).  In contrast, LS-TS, which uses the same global-temperature architecture as TS but simply shifts the calibration target toward a smoothed distribution, reduces ECE by 9--77\% across all 8 settings without any annotator data.  This confirms that the calibration target is the critical design axis: added architectural flexibility without the right supervision signal yields no benefit.

\section{Implementation Details}
\label{app:impl}

\noindent\textbf{CIFAR-10H models.}
\textit{ResNet-50}: ImageNet-1k weights (\texttt{ResNet50\_Weights.IMAGENET1K\_V1}); FC layer replaced with $2048\times 10$ linear; AdamW, lr$=10^{-4}$, weight decay $10^{-4}$, batch 128, cosine annealing, 30 epochs, $224\times224$.
\textit{ViT-B/16}: ImageNet-21k weights; classification head replaced; AdamW, lr$=5\times10^{-5}$, weight decay $10^{-4}$, batch 64, cosine annealing, 20 epochs, $224\times224$.

\noindent\textbf{ChaosNLI models.}
\textit{RoBERTa-Large}: \texttt{roberta-large-mnli} from HuggingFace, pre-fine-tuned on MultiNLI \cite{liu2019roberta}; no additional fine-tuning.
\textit{DeBERTa-v3-base}: \texttt{cross-encoder/nli-deberta-v3-base} from HuggingFace \cite{he2021debertav3}; no additional fine-tuning.
Both models use the combined ChaosNLI SNLI+MNLI subset ($n=3{,}113$), split 50/50 into calibration and test (stratified, seed 42).

\noindent\textbf{ISIC 2019 models.}
\textit{EfficientNet-B4}: ImageNet-1k weights; stratified 70/15/15 split; class-weighted cross-entropy (NV: ${\approx}67\%$ of training images); AdamW, lr$=3\times10^{-4}$, weight decay $10^{-4}$, 2-epoch linear warm-up + cosine decay, 20 epochs, $380\times380$.
\textit{ViT-S/16}: ImageNet-21k weights; same split and loss weighting; AdamW, lr$=10^{-4}$, weight decay $10^{-4}$, 2-epoch warm-up + cosine decay, 20 epochs, $224\times224$.

\noindent\textbf{DermaMNIST models.}
\textit{ResNet-18}: ImageNet-1k weights, class-weighted cross-entropy, AdamW, lr$=10^{-4}$, 30 epochs.
\textit{ViT-S/16}: ImageNet-21k weights, same training protocol, $224\times224$.

\noindent\textbf{Calibration optimisers.}
TS, SLTS: LBFGS, lr$=0.1$, 500 iterations, tolerance $10^{-9}$ (gradient), $10^{-11}$ (loss).  Platt/VS: Adam, lr$=0.01$/0.05, weight decay $10^{-4}$, 2000 steps.

\noindent\textbf{ECE bins.}
$B=15$ equal-width bins in $[0,1]$; bins with fewer than 1 example are excluded.

\noindent\textbf{Compute.}
CIFAR-10H fine-tuning: ${\approx}20$ min on one NVIDIA A100.  All calibration methods: $<60$ s on CPU after logit caching.

\end{document}